\newtheorem{propo}{Proposition}[section]
\newtheorem{lemma}[propo]{Lemma}
\newtheorem{definition}[propo]{Definition}
\newtheorem{coro}[propo]{Corollary}
\newtheorem{theorem}[propo]{Theorem}
\newtheorem{fact}[propo]{Fact}
\newtheorem{remark}[propo]{Remark}
\DeclareMathOperator*{\argmin}{arg\,min}
\def\reals{{\mathbb R}}
\def\eps{\varepsilon}
\def\E{\mathbb E}
\def\u{\mathbf u}
\def\v{\mathbf v}
\def\w{\mathbf{w}}
\def\x{\mathbf{x}}
\DeclareMathOperator{\Poi}{Poi}
\DeclareMathOperator{\Var}{Var}
\title{Trimmed Maximum Likelihood Estimation for Robust Learning in Generalized Linear Models}
\author{%
   Pranjal Awasthi\\
   Google Research \\
   \texttt{pranjalawasthi@google.com}\\
   \And
      Abhimanyu Das\\
   Google Research\\
   \texttt{abhidas@google.com} \\
   \AND
  Weihao Kong \\
  Google Research\\
  \texttt{weihaokong@google.com} \\
   \And
   Rajat Sen \\
   Google Research\\
   \texttt{senrajat@google.com} \\
}
\begin{document}
\maketitle
\begin{abstract}
We study the problem of learning generalized linear models under adversarial corruptions. We analyze a classical heuristic called the \textit{iterative trimmed maximum likelihood estimator} which is known to be effective against \textit{label corruptions} in practice. Under label corruptions, we prove that this simple estimator achieves minimax near-optimal risk on a wide range of generalized linear models, including Gaussian regression, Poisson regression and Binomial regression. Finally, we extend the estimator to the more challenging setting of \textit{label and covariate corruptions} and demonstrate its robustness and optimality in that setting as well.
\end{abstract}
\section{Introduction}
\label{sec:intro}
Generalized linear models (GLMs) are an elegant framework for statistical modeling of data and are widely used in many applications \cite{nelder1972generalized, dobson2018introduction, mccullagh2019generalized}. A generalized linear model captures the relationship of labels (or observations) $y$ to covariates $\x$ via the conditional density $f(y | \beta^\top \x ) \propto \exp(y \cdot \beta^\top \x - b(\beta^\top \x))$. Here $\beta$ is the parameter of the model. 
Parameter estimation in GLMs is done via the standard maximum likelihood estimation (MLE) paradigm and has been extensively studied \cite{nelder1972generalized}. While GLMs offer a mathematically tractable formulation for statistical modeling, real data rarely satisfies the generative process of a GLM and as a result there has been considerable interest in developing robust learning algorithms for GLMs \cite{loh2011high, negahban2012unified, prasad2018robust}. The predominant way to model misspecification or adversarial corruptions in the data is Huber's $\epsilon$-contamination model \cite{huber2011robust} and its recent extensions that allow for stronger adversaries \cite{diakonikolas2019robust}. These models assume that a small $\epsilon$ fraction of the data is corrupted by an adversary. Furthermore, the adversary can be restricted to either only corrupting the labels $y$, or be allowed to corrupt both the covariates $\x$ and the labels $y$ for an $\epsilon$ fraction of the data.

Various algorithms have been proposed for robust estimation in generalized linear models. One line of work proposed computationally intractable algorithms that are based on non-convex M estimators \cite{huber2011robust, loh2011high} or by running tournaments over an exponentially large search space \cite{yatracos1985rates}. Another line of work proposes polynomial time algorithms that either achieve sub-optimal error rates \cite{prasad2018robust} or only apply to restricted settings such as the noise being heavy tailed \cite{zhu2021taming}. In practical settings a simple heuristic namely the {\em iterative trimmed estimator} has been shown to work well under settings where only the labels are corrupted \cite{shen2019learning}. However, from a theoretical perspective the iterative trimmed MLE estimator has only been analyzed under restrictive settings such as when the underlying GLM is a Gaussian regression model. 

Our key theoretical contribution in this work is a general analysis of the trimmed MLE estimator. In particular, we show that for a broad family of GLMs, and under adversarial corruptions of only the labels, not only does the iterative trimmed MLE estimator enjoy theoretical guarantees, it in fact nearly achieves the minimax error rate! Next, we also consider the more challenging setting of corruptions to both covariates and labels. In the setting where the covariance of the covariate is known, we leverage the same approach in ~\cite{pensia2020robust} by running the filtering algorithm~\cite{dong2019quantum} as a prepossessing step to trim away the abnormal covariates before applying the iterative trimmed MLE estimator. This can simultaneously handle both covariate and label corruptions and nearly achieve minimax error rates. Below we state our main results.

\begin{theorem}[Informal Theorem]
\label{informal:thm:label}
Let $S_\epsilon = \{(\x_1, y_1), \ldots, (\x_n, y_n)\}$ be independent and identically distributed samples generated by a generalized linear model with sub-Gaussian $x_i$. Let an $\epsilon$ fraction of the labels be adversarially corrupted. Then, with high probability, the iterative trimmed MLE estimator when given as input $S_\epsilon$ provides the following guarantees:
\begin{itemize}
    \item $O(\sigma\epsilon \log(1/\epsilon))$ parameter estimation error for the Gaussian regression model where $\sigma^2$ is the variance of Gaussian noise on $y$.
    \item $O(\epsilon \exp(\sqrt{\log(1/\epsilon)}))$ parameter estimation error for Poisson regression model.
    \item $O(\frac{1}{\sqrt{m}} \epsilon \sqrt{\log(m/\eps)\log(1/\eps)})$ parameter estimation error for the Binomial regression model with $m$ trials.
    \item $O(\epsilon \log(1/\epsilon))$ parameter estimation error for a general class of smooth and continuous GLMs (includes the Gaussian regression model).
\end{itemize}
\end{theorem}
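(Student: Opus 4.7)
The plan is to establish a contraction-plus-concentration argument that is uniform across GLMs, with GLM-specific constants governing the per-model error rates. Write the negative log-likelihood per sample as $\ell_i(\beta) = b(\beta^\top \x_i) - y_i \beta^\top \x_i$, whose gradient at the truth is the score $(b'(\beta^{*\top}\x_i) - y_i)\x_i$. The iterative trimmed MLE computes $\beta^{(t+1)} = \arg\min_\beta \sum_{i \in T^{(t)}} \ell_i(\beta)$ where $T^{(t)}$ is the set of $(1-\epsilon)n$ samples of smallest loss under $\beta^{(t)}$.

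First I would prove the one-step contraction. Using strong convexity of the trimmed loss, which follows from $b''(\cdot)$ being bounded below on a neighborhood of $\beta^{*\top}\x$ and the sub-Gaussian design ensuring that the restricted Hessian is well-conditioned even after an $\epsilon$-fraction of samples is removed, first-order optimality reduces the analysis to bounding the trimmed score $\frac{1}{n}\sum_{i \in T^{(t)}}(y_i - b'(\beta^{*\top}\x_i))\x_i$. I would decompose this score into a clean-sample contribution and a residual adversarial contribution. The clean term differs from the full clean score by at most the $\epsilon n$ clean samples the algorithm discarded; since the adversary can in the worst case force the trimming to remove the clean samples with largest score magnitude, this bias is bounded by $\epsilon$ times a truncated tail moment. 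A symmetric bound controls the adversarial samples that survive trimming: to be retained, a corrupted sample must have $\ell_i(\beta^{(t)}) \le \max_{j\in T^{(t)}\cap\text{clean}}\ell_j(\beta^{(t)})$, which constrains $|y_i - b'(\beta^{*\top}\x_i)|$ up to an error controlled by $\|\beta^{(t)} - \beta^*\|$ times a Lipschitz constant of $b'$.

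Second, the GLM-specific rates come from bounding the $\epsilon$-tail moment $\E[\,|y - b'(\beta^{*\top}\x)|\cdot\|\x\|\cdot \mathbf{1}\{\cdot \in \epsilon\text{-tail}\}\,]$ together with the appropriate empirical concentration. For Gaussian regression the score is sub-Gaussian with scale $\sigma$, integrating the tail produces $O(\sigma\epsilon\log(1/\epsilon))$. For Poisson regression, the conditional distribution is $\Poi(e^{\beta^{*\top}\x})$ and the interaction with the sub-Gaussian $\x$ produces a sub-exponential-like score whose truncated moment evaluates to $\epsilon\exp(\sqrt{\log(1/\epsilon)})$. For Binomial regression with $m$ trials the conditional variance is $\Theta(m)$ and standard binomial tail bounds give $O(\sqrt{m\log(m/\epsilon)})$ deviations, producing the $\frac{1}{\sqrt{m}}\epsilon\sqrt{\log(m/\epsilon)\log(1/\epsilon)}$ rate. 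For the general smooth class, a sub-Gaussian score assumption suffices for $O(\epsilon\log(1/\epsilon))$. Geometric contraction from the one-step bound, iterated until the statistical floor is reached, gives the final parameter error.

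The main obstacle is the data-dependent coupling between $T^{(t)}$ and $\beta^{(t)}$: neither the trimmed set nor the iterate is independent of the samples, so one cannot apply concentration directly. I would resolve this by establishing uniform concentration of the truncated score $\sup_{\beta \in B(\beta^*, r)} \sup_{T : |T|=(1-\epsilon)n} \|\frac{1}{n}\sum_{i \in T}(y_i - b'(\beta^{*\top}\x_i))\x_i - \E[\cdot]\|$ via a covering net on $B(\beta^*,r)$ combined with a VC-type bound on the class of halfspaces $\{(\x,y) \mapsto \mathbf{1}\{\ell(\beta;\x,y) \le \tau\}\}$. The delicate part is preserving the GLM-specific tail dependence through this uniform bound; I would handle the heavier-tailed Poisson case separately using a truncation argument at level $\exp(\sqrt{\log(1/\epsilon)})$ before applying the covering argument, which is the source of its worse rate relative to Gaussian.
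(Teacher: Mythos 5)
Your architecture differs from the paper's: you run a per-iteration contraction argument (bound $\|\beta^{(t+1)}-\beta^*\|$ in terms of $\|\beta^{(t)}-\beta^*\|$ and iterate to the statistical floor, in the style of Bhatia et al.), whereas the paper never proves contraction --- it shows the algorithm terminates at an approximate first-order stationary point and then, separately, that \emph{any} approximate stationary point is close to $\beta^*$. The two core ingredients are nonetheless the same: uniform-over-subsets concentration (what you build from covering nets plus a union bound over trimmed sets is exactly the \emph{resilience} property the paper invokes off the shelf, and resilience is the simpler route since sublevel sets of GLM losses are not halfspaces), and the swap argument constraining surviving corrupted points. On that swap argument you have a small misstatement: a retained corrupted point need not satisfy $\ell_i(\beta^{(t)})\le \max_{j\in T^{(t)}\cap\mathrm{clean}}\ell_j(\beta^{(t)})$ --- it can have the largest loss among all retained points --- the correct comparison is against the \emph{excluded} clean samples, $\ell_i(\beta^{(t)})\le \min_{j\in \mathrm{clean}\setminus T^{(t)}}\ell_j(\beta^{(t)})$, which the paper then bounds via resilience over all $\eps n$-subsets of clean data. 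That fix is harmless, and for Gaussian, Binomial, and the bounded-$b''$ class your one-step bound does appear to close with contraction factor $O(\eps\log(1/\eps))$ and the stated floors.

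The genuine gap is Poisson. Your conversion of the loss constraint on surviving corrupted samples into a residual bound explicitly invokes ``a Lipschitz constant of $b'$,'' but for Poisson $b'(\theta)=e^{\theta}$ has no such constant, and your proposed fix --- truncating $y$ at level $\exp(\sqrt{\log(1/\eps)})$ --- controls only the label, not the prediction: $\hat\beta^{\top}\x_i$ can reach $\Theta(\sqrt{\log n})$ under sub-Gaussian design, so $b'(\hat\beta^{\top}\x_i)$ and $b''$ along the path are polynomially large in $n$ and the Lipschitz step fails outright. The paper needs two dedicated devices here that your sketch has no substitute for: (i) a proxy function $g_{\hat\beta^{\top}\x_i}(y)=e^{\hat\beta^{\top}\x_i}-y\,\hat\beta^{\top}\x_i+y\log y-y$ lower-bounding the negative log-likelihood, with a two-case analysis ($y_i\ge e^{\hat\beta^{\top}\x_i}/2$ versus $y_i<e^{\hat\beta^{\top}\x_i}/2$) to convert the likelihood bound into $(y_i-e^{\hat\beta^{\top}\x_i})^2\le \exp(\Theta(\sqrt{\log(1/\eps)}))$; and (ii) a separate argument (the paper's Lemma~\ref{lemma:alt-poisson-main}) replacing the smoothness-based optimality step, exploiting that $b'=b''$ for Poisson so that large curvature along $\beta^*-\hat\beta$ forces a large improving gradient --- in your contraction route the analogous burden is that first-order optimality plus strong convexity must be made quantitative without any upper curvature bound. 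The heavy-tail source of the $\eps\exp(\sqrt{\log(1/\eps)})$ rate you identify correctly (it is the $k$-th moment resilience with $k\sim\sqrt{\log(1/\eps)}$), but without (i) and (ii) the Poisson bullet of the theorem does not follow from your outline.
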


Previous work~\cite{shen2019learning} analyze the iterative trimmed maximum likelihood estimator under the Gaussian regression settings (least square) where only the labels are corrupted, and proved an $O(\sigma)$ $\ell_2$-error bound for parameter estimation. Other iterative trimming approach such as~\cite{bhatia2015robust} also achieves only achieve $O(\sigma)$ error in this setting. Our error bound of $O(\sigma \eps\log(1/\eps))$ significantly improved the dependency on the corruption level $\eps$ and nearly matched the minimax lower bound of $\sigma\eps$~\cite{gao2020robust}. In all the generalized linear models studied in this work, we show that the iterative trimmed maximum likelihood estimator achieves $O(\eps^{1-\delta})$ error for any $\delta>0$, which matches the minimax lower bound $\Omega(\eps)$ up to a sub-polynomial factor.

Next, we present our second main result that can simultaneously handle both covariate and label corruptions and nearly achieve minimax error rate.
\begin{theorem}[Informal Theorem]
\label{informal:thm:covariate}
Let $S_\epsilon = \{(\x_1, y_1), \ldots, (\x_n, y_n)\}$ be independent and identically distributed samples generated by a generalized linear model with sub-Gaussian $x_i$ whose covariance is known. Let an $\epsilon$ fraction of the labels and covariates be adversarially corrupted. After a preprocessing step, the iterative trimmed MLE with high probability achieves the same parameter estimation recovery bounds as in Theorem \ref{informal:thm:label} above.
\end{theorem}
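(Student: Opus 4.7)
The plan is to reduce the covariate-plus-label corruption setting to the label-only corruption setting already handled by Theorem~\ref{informal:thm:label}, by using robust mean estimation via filtering as a preprocessing step, closely mimicking the strategy of \cite{pensia2020robust}.

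First, I would run the quantum-entropy filtering algorithm of \cite{dong2019quantum} on the covariates $\{\x_1,\dots,\x_n\}$, using the fact that the true covariance of $\x$ is known. A standard property of this filter (see \cite{pensia2020robust}) is that it outputs a subset $T\subseteq[n]$ of size $|T|\geq(1-O(\epsilon))n$ such that the empirical second moment of $\{\x_i\}_{i\in T}$ is close in spectral norm to the true covariance, and, more importantly, a stability-style certificate holds: for every direction $\u$, the tails of $\{\ip{\u}{\x_i}\}_{i\in T}$ behave as if the samples were sub-Gaussian with the same parameters as the clean distribution. Moreover, because the filter removes at most $O(\epsilon)n$ points, at most an $O(\epsilon)$-fraction of the surviving points in $T$ are adversarially corrupted (in either the covariate or the label). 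Thus after filtering we are left with a sample set that looks, up to constants, like a sub-Gaussian design with an $O(\epsilon)$ fraction of label-only corruptions.

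Second, I would run the iterative trimmed MLE of Theorem~\ref{informal:thm:label} on the filtered set $\{(\x_i, y_i)\}_{i\in T}$. The analysis in Theorem~\ref{informal:thm:label} only uses the underlying samples through a small set of deterministic regularity conditions on the clean covariates: uniform sub-Gaussian concentration of linear forms, near-isotropy of the empirical covariance, and associated uniform bounds on the gradient and Hessian of the log-likelihood. The filter's stability guarantee is precisely designed to give these conditions on $T$ up to an additive $O(\epsilon)$ (or $O(\epsilon\log(1/\epsilon))$) slack, so each step of the proof of Theorem~\ref{informal:thm:label} can be repeated verbatim with slightly degraded but still dimension-free constants. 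Plugging these into the same fixed-point argument yields exactly the same parameter-recovery bounds for Gaussian, Poisson, Binomial, and the general smooth GLM case.

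The main obstacle I expect is the last step of the previous paragraph: verifying that the exact deterministic conditions used inside the proof of Theorem~\ref{informal:thm:label} are implied by the stability certificate produced by the filter, rather than by genuine i.i.d.\ sub-Gaussianity. For GLMs whose loss is bounded (Gaussian, general smooth GLM), this essentially follows from spectral closeness and uniform concentration of linear forms, which are standard consequences of the filter. For Poisson and Binomial regression, however, the log-likelihood involves terms like $\exp(\beta^\top\x)$ whose tails are extremely sensitive to even small perturbations in the distribution of $\x$; one has to be careful that the filter controls not only second moments but enough higher moments (or tail probabilities) of linear forms so that the moment generating function style bounds in the label-only analysis still go through. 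Once this bookkeeping is done, combining the filter's accuracy with Theorem~\ref{informal:thm:label} gives the claimed rates.
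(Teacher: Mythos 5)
Your proposal is correct and follows essentially the same route as the paper's Algorithm~\ref{alg:alt-sample2}: whiten using the known covariance, run the filter of \cite{dong2019quantum} as a preprocessing step (following \cite{pensia2020robust}), and then rerun the label-corruption analysis white-box, where the only step that changes is bounding $\frac{1}{n}\sum_{i\in \hat S \cap E}((\beta^*-\hat\beta)^\top\x_i)^2$ via the filter's spectral certificate $\|\frac{1}{n}\sum_{i\in \hat{S}}\x_i\x_i^\top - I\| = O(\eps\log(1/\eps))$ and the fact that corrupted samples with small covariance are resilient (Corollary~\ref{prop:corrupt}), rather than via clean-sample resilience. One reassurance on the obstacle you flag for Poisson and Binomial regression: in the paper's argument no higher-moment or MGF control of the filtered covariates is needed, because corrupted covariates enter the analysis only through that single quadratic form, while every heavy-tailed quantity (the $\exp(\beta^\top\x_i)$ terms, the resilience of $y_i\x_i$, the negative log-likelihood bounds on $T\setminus\hat S$) is evaluated only on genuinely uncorrupted points in $T$, whose resilience is inherited from the original i.i.d.\ sub-Gaussian sample.
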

Our algorithm requires the covariance matrix of the covariate distribution being identity or known. Thus the error rate we get is incomparable to the results in the general covariance settings. 

\noindent \textbf{Outline of the paper.} In Section~\ref{sec:related} we discuss related work. We define preliminaries in Section~\ref{sec:prelims} followed by the iterative trimmed MLE algorithm, formal results, and proof sketches for the label corruption case in Section~\ref{sec:main}. In Section~\ref{sec:sample-corruption-model}, we introduce our algorithm, results and proof sketches for the sample corruption case. We defer proofs to the Appendix.
\section{Related Work}
\label{sec:related}
There is a vast amount of literature in statistics, machine learning and theoretical computer science on algorithms that are robust to adversarial corruptions and outliers. Classical works in the Huber's contamination model present algorithms for general robust estimation that obtain near optimal error rates \cite{huber2011robust, yatracos1985rates}. Minimax optimal but computationally inefficient robust estimators have been established in the works of \cite{tukey1975mathematics, yatracos1985rates, chen2015robust, gao2020robust} for a variety of problems such as mean and covariance estimation. In recent years there has also been a line of work in designing computationally efficient algorithms for handling adversarial corruptions \cite{lai2016agnostic, diakonikolas2019robust, charikar2017learning, bhatia2015robust, chen2022online, cherapanamjeri2020optimal, diakonikolas2021outlier}.

Several special cases of generalized linear model has been studied extensively from the robustness perspective. There is a long line of work on designing robust algorithms for the linear least squares problem. This corresponds to the special case of the GLM being a Gaussian regression model \cite{suggala2019adaptive, bhatia2015robust, bhatia2017consistent, klivans2018efficient, chen2013robust, bakshi2021robust, pensia2020robust, diakonikolas2019efficient}. When the covariate follows from a sub-Gaussian distribution, previous best result~\cite{pensia2020robust} achieves $\ell_2$ error $\sigma\eps\sqrt{\log(1/\eps)}$ using Huber regression while we show iterative thresholding achieves $\sigma\eps{\log(1/\eps)}$ error. Another special case of GLMs that has been studied from a robustness perspective is the logistic regression model \cite{feng2014robust, prasad2018robust, chen2020classification}. Our analysis of the iterative trimmed MLE estimator on binomial regression matches the best known error guarantees for this setting. 

The work of~\cite{prasad2018robust} proposes a general procedure for robust gradient descent and shows that one can use this to design robust estimation algorithms for a abroad class of GLMs. Building upon these works, the authors in~\cite{jambulapati2021robust} present a nearly linear time algorithm for GLMs. Compared to our work, \cite{prasad2018robust} assumes $x$ has bounded $8$th moment, \cite{jambulapati2021robust} assumes $x$ is $2$-$4$ hypercontractive, and both papers achieve an $O(\sqrt{\epsilon})$ error guarantee with $\eps$-fraction of corruptions while our algorithm achieves a better $O(\eps)$ guarantee under the stronger sub-Gaussian assumptions. In addition, for GLMs, both papers assume a uniform upper bound (and lower bound) on the second order derivative of function $b(\cdot)$, which is not satisfied by the widely used Poisson and Binomial regression studied in this paper. In a similar setting,~\cite{zhu2021taming} proposed a reweighted MLE estimator for dealing with settings where the covariates are heavy tailed but not corrupted by an adversary.

Iterative thresholding is a longstanding heuristic for robust linear regression that dates back to Legendre~\cite{legendre1959method}. It's theoretical property in the non-asymptotic regime is first studied in~\cite{bhatia2015robust}, which shows a $O(\sigma)$ error bound for sufficiently small label corruption level when the label noise is $N(0, \sigma^2)$. The iterative thresholding algorithm is later extended and analyzed in the \textit{oblivious} label corruption setting~\cite{bhatia2017consistent, suggala2019adaptive}, and is shown  to provide consistent estimate even when the corruption level goes to $1$. ~\cite{pensia2020robust} extended the iterative thresholding algorithm to the heavy-tailed covariate setting and can simultaneously handle both labels and covariate corruptions. In addition,~\cite{pensia2020robust} adapted the implicit result in~\cite{bhatia2017consistent} to show that iterative thresholding algorithm achieves $O(\sigma\sqrt{\eps})$ error rate for sub-Gaussian covariate. \cite{shen2019learning} empirically demonstrated the effectiveness of the trimmed loss estimator under labels corruptions, and also proved theoretical guarantees in a special class of GLMs, which, however, also has $O(\sigma)$ error when specialized to the Gaussian noise setting.~\cite{shen2019iterative} studied the trimmed loss estimator in the mixed linear regression setting. Finally,~\cite{chen2022online} proposed an alternating minimization algorithm for the fixed design linear regression setting with Huber contamination on the labels, which is different from our strong contamination model (with adaptive replacement) for generalized linear model. Their algorithm incorporates a semidefinite programming in the set selection step, and achieves a near-optimal $\tilde{O}(\sigma\eps)$ error. On a very high level, our proof follows from a similar framework as in~\cite{chen2022online}.

\cite{pensia2020robust} first proposed a generic approach to modify an estimator which is robust against label corruption into one that is robust against simultaneous label and covariate corruptions by running a covariate filter algorithm \cite{diakonikolas2020outlier, diakonikolas2019recent} as a preprocessing step. In the Gaussian regression setting, our Algorithm~\ref{alg:alt-sample2} is identical to Algorithm 3 of~\cite{pensia2020robust} and the difference is in the improved error rate. In particular, Lemma 4.1 in~\cite{pensia2020robust} implies a  $O(\sigma\sqrt{\eps})$ error bound while we proved a $O(\sigma\eps\log(1/\eps))$ error bound.



\section{Preliminaries}
\label{sec:prelims}
In this section, we formally introduce the robust generalized linear model studied in this paper. First we define the classical generalized linear model as follows.
\begin{definition}[Generalized linear model]\label{def:glm}
We say that $(\x, y)$ follows from a \textit{generalized linear model} if there exist function $b(\cdot)$, and function $c(\cdot)$ such that the probability density function of $y$ equals
$$
f(y|\beta^\top\x) = c(y)\exp({y\cdot \beta^\top\x-b(\beta^\top\x)}).
$$
The derivative $b'(\cdot)$ is called \textit{mean function} as $\E[y|\beta^\top\x] = b'(\beta^\top\x)$. The second order derivative $b''(\cdot)$ is called \textit{variance function} as $\Var[y|\beta^\top\x] = b''(\beta^\top\x)$
\end{definition}
Here we provide three commonly used examples of generalized linear models studied in this paper.
\begin{definition} Commonly used examples of generalized linear model
\begin{itemize}
\item \textbf{Gaussian regression}: 
$b(\theta) = \frac{1}{2}\theta^2$, $c(y) = \frac{\exp(-y^2/2)}{\sqrt{2\pi}}$
\item \textbf{Poisson regression}:
$b(\theta) = \exp(\theta)$, $c(y) = \frac{1}{y!}$
\item \textbf{Binomial regression}: Let $m$ be the number of trials of the binomial distribution. 
$b(\theta) = m\log(1+\exp(\theta))$, $c(y) = \binom{m}{y}$
\end{itemize}
\end{definition}
We consider the sub-Gaussian random design setting in this work, i.e., each covariate $\x_i$ is drawn i.i.d. from a sub-Gaussian distribution with zero mean and covariance $\Sigma$.
\begin{definition}[Sub-Gaussian design]\label{asmp:sub-gaussian-design}
We assume that each covariate $\x_i$ is drawn independently from a zero mean, covariance $\Sigma$ sub-Gaussian distribution with sub-Guassian norm $1$, namely, $\E[\x_i]=0$, $\E[\x_i\x_i^\top] = \Sigma$ and for all $\v \in \reals^d$,
$$
\Pr(\v^\top\x_i\ge t)\le \exp(-t^2).
$$
\end{definition}
Having introduced the generation model of the good data, now we describe the corruption model where the adversary is allowed to corrupt a small fraction of the data points.
\begin{definition}[Corruption model]
We consider two different data generation model with $\eps$ fraction of adversarial corruptions:
\begin{itemize}
    \item \textbf{Label corruption model}: Given $n$ i.i.d. sample $\{(\x_i, y_i)\}_{i=1}^n$  generated by a generalized linear model. The adversary is allowed to inspect the sample, and replace a total of $\eps \cdot n$ labels $y_i$ with arbitrary values.
    \item \textbf{Sample corruption model:} Given $n$ i.i.d. sample $\{(\x_i, y_i)\}_{i=1}^n$ generated by a generalized linear model. The adversary is allowed to inspect the sample, and replace a total of $\eps \cdot n$ data points $(\x_i, y_i)$ with arbitrary values.
\end{itemize}
We call the corrupted dataset $S = T\cup E$ where $T$ contains the set of remaining uncorrupted data points, and $E$ contains the set of data points that is controlled by the adversary.
\end{definition}

The goal of our algorithm is recovering the underlying regression coefficient $\beta^*$ from a set of examples with $\eps$ fraction corrupted under $\ell_2$ error metric, i.e., $\|\hat{\beta}-\beta^*\|$. We assume $\|\beta^*\|\le R$ for a constant $R$, and $\eps<=c$ for a sufficiently small constant $c$. For simplicity of the presentation, throughout this paper, we assume $\Sigma = I_d$. We remark that our algorithm for \textit{label corruption model} applies to the general covariance setting and is able to achieve small estimation error in terms of $\|\Sigma^{1/2}(\hat{\beta}-\beta^*)\|_2$ since we can always (implicitly) whiten the data to apply our analysis (see Section~\ref{sec:non-id-cov} for more details). On the other hand, the algorithm for \textit{sample corruption model} only works with the knowledge of $\Sigma$.

\section{Label Corruption} 
\label{sec:main}
In this section, we formally describe our algorithm and proof-sketch for the label corruption setting.

\subsection{Algorithm}
\label{sec:algorithm}
We start with defining the trimmed maximum likelihood estimator, which is a simple and natural heuristic for robustly learning generalized linear model.

\begin{definition}[Trimmed maximum likelihood estimator]
Given a set of data points $S = \{(\x_i,y_i)\}_{i=1}^n$, define the \textit{trimmed maximum likelihood estimator} as
$$
    \hat{\beta}(S) = \min_\beta\min_{\hat{S}\subset S,  |\hat{S}|= (1-\eps)n}\sum_{(\x_i, y_i)\in \hat{S}} -\log f(y_i|\beta^\top\x_i)
$$
\end{definition}
In the setting of generalized linear model, the objective of trimmed maximum likelihood estimator is a biconvex problem in $\hat{S}$ and $\beta$ but not jointly convex. The following alternating minimization algorithm is a simple heuristic to approximate the trimmed maximum likelihood estimator whose similar form has been studied in~\cite{bhatia2015robust, bhatia2017consistent, shen2019learning, chen2022online}.

\begin{algorithm}
\caption{Alternating minimization of trimmed maximum likelihood estimator}\label{alg:alt}
\SetKwInput{KwData}{Input}
\SetKwInput{KwResult}{Output}
 \KwData{Set of examples $S = \{(\x_1,y_1),\ldots,(\x_n, y_n)\}$, $\eps$, $\eta$, $R$}
 \KwResult{$\hat{\beta}$}
 $S^{(0)} \gets \argmin_{T\subset [n]:|T|=(1-\eps)n}\sum_{i\in T}|y_i|$\;
 $\hat{\beta}^{(1)} \gets 0$\;
 \For{$t=1$ \KwTo $\infty$ do}{
  Choose  $\hat{S}^{(t)} = \argmin_{T\subset S^{(0)}:|T|=(1-2\eps)n}\sum_{i\in T}-\log f(y_i|\langle\hat{\beta}^{(t)}, \x_i\rangle)$\;
  Compute $\hat{\beta}^{(t+1)} = \argmin_{\beta, \|\beta\|\le R}\sum_{i\in \hat{S}^{(t)}}-\log f(y_i|\langle{\beta}, \x_i\rangle)$\;
  \If{$\frac{1}{n}\sum_{i\in \hat{S}^{(t)}}-\log f(y_i|\langle\hat\beta^{(t+1)}, \x_i\rangle) > \frac{1}{n}\sum_{i\in \hat{S}^{(t)}}-\log f(y_i|\langle\hat{\beta}^{(t)}, \x_i\rangle) - \eta$}{Return $\beta^{(t)}$}
 }
\end{algorithm}
The algorithm starts by naively pruning out $\eps\cdot n$ data points whose labels have the largest magnitude. Then each round of the alternating minimization algorithm has two steps. In optimizing over set $S$, we find the set $\hat{S}^{(t)}$ of size $(1-\eps)n$ with the best likelihood. In optimizing over $\beta$, we find regression coefficient $\beta^{(t)}$ which maximizes the likelihood on the current set of data $\hat{S}^{(t)}$. The algorithm terminates and outputs $\beta$ when the likelihood no longer improves by more than $\eta$. It is clear that the algorithm terminates in $O(1/\eta)$ rounds when the log-likelihood is bounded. Since the original trimmed maximum likelihood estimator is a biconvex optimization problem in $S, \beta$ which is not jointly convex, our algorithm does not guarantee to return a global optimal solution. Nonetheless, as we showed, it does return a first order stationary point which will be close to the true coefficient $\beta^*$. It is worth noting that some recent papers on robust statistics~\cite{cheng2021outlier, cheng2020high, zhurobust} show similar nice statistical properties of an approximate first order stationary point for non-convex optimization problems.

We present the guarantee of our algorithm for Gaussian, Poisson, Binomial regression, and a broad class of generalized linear models.
\begin{theorem}[Gaussian regression with label corruption]\label{thm:gaussian-label}
Let $S = \{\x_i, y_i\}_{i=1}^n$ be a set of data points generated by a Gaussian regression model with $y_i = \langle\x_i,\beta^*\rangle + \eta_i, \eta_i \sim N(0, \sigma^2)$,  sub-Gaussian design, with $\eps_c$-fraction of label corruption and $n = \Omega(\frac{d+\log(1/\delta)}{\eps^2})$. With probability $1-\delta$, Algorithm~\ref{alg:alt} with parameters $\eps = \eps_c, \eta = \eps_c^2, R=\infty$ terminate within $O(\frac{1}{\min(1,\sigma^2)\eps_c^2})$ iterations, and output an estimate $\hat\beta$ such that
$$
\|\hat\beta-\beta^*\|= O(\sigma\eps_c\log(1/\eps_c))
$$
\end{theorem}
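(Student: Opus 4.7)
The plan is to exploit an approximate fixed-point characterization at termination and reduce the analysis to a one-dimensional inequality for $\|\hat\beta-\beta^*\|$. By sub-Gaussian Gram concentration, for every $(1-2\eps)n$-subset the empirical Hessian of the Gaussian regression loss is at least $(1-O(\eps))I$ with high probability, so the loss is strongly convex with parameter $\Theta(1)$. Combined with the termination condition using $\eta=\eps_c^2$, this forces $\|\hat\beta^{(t+1)}-\hat\beta^{(t)}\| = O(\eps_c)$, and I treat $(\hat{S},\hat\beta) := (\hat{S}^{(t)},\hat\beta^{(t+1)})$ as a fixed point: $\hat\beta$ is the exact least-squares minimizer on $\hat{S}$, and $\hat{S}$ is (up to the $O(\eps_c)$ slack) the set of $(1-2\eps)n$ smallest squared residuals of $\hat\beta$. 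Splitting $\hat{S} = (\hat{S}\cap T)\cup(\hat{S}\cap E)$ in the normal equations and substituting $y_i=\langle\x_i,\beta^*\rangle+\eta_i$ on $T$ yields
\begin{equation*}
\Bigl(\sum_{i\in \hat{S}\cap T}\x_i\x_i^\top\Bigr)(\hat\beta-\beta^*)
\;=\;\sum_{i\in\hat{S}\cap T}\x_i\eta_i \;+\; \sum_{i\in\hat{S}\cap E}\x_i\, r_i(\hat\beta),
\end{equation*}
where $r_i(\beta) := y_i-\langle\beta,\x_i\rangle$. Projecting onto $u := (\hat\beta-\beta^*)/\|\hat\beta-\beta^*\|$ reduces the problem to bounding the two scalar sums on the right.

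Next, I would establish four high-probability deterministic events. (i) \emph{Gram-matrix regularity}: uniformly over $U\subset T$ with $|U|\ge (1-3\eps)n$, $\sum_{i\in U}\x_i\x_i^\top\succeq (1-O(\eps))nI$, by a standard $\eps$-net on the unit sphere combined with sub-Gaussian concentration; this controls the left-hand side. (ii) \emph{Uniform subset-sum bound for noise}: $\sup_{|U|\le 2\eps n,\,\|u\|=1}\bigl|\sum_{i\in U}\langle\x_i,u\rangle\eta_i\bigr|=O(n\sigma\eps\log(1/\eps))$, which follows by ordering the sub-exponential products $\langle\x_i,u\rangle\eta_i$: the $k$-th largest magnitude is $O(\sigma\log(n/k))$, so summing the top $2\eps n$ gives $O(n\sigma\eps\log(1/\eps))$, and a covering argument handles the supremum over $u$. (iii) \emph{Uniform $\ell_1$-mass of covariates}: $\sup_{|U|\le \eps n,\,\|u\|=1}\sum_{i\in U}|\langle\x_i,u\rangle|=O(n\eps\sqrt{\log(1/\eps)})$, again by order statistics of sub-Gaussian variables. (iv) \emph{Gaussian quantile bound}: the $\eps$-quantile of $|\eta_i|$ is $O(\sigma\sqrt{\log(1/\eps)})$. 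Event (ii) is where the target $\log(1/\eps)$ rate (rather than $\sqrt{\log(1/\eps)}$) originates; a cruder Cauchy--Schwarz substitute would only give $\sqrt{\log(1/\eps)}$.

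The remaining task is the bad-point residual sum. Since $i\in\hat{S}\cap E$ implies $r_i(\hat\beta)^2$ is no larger than the $(1-2\eps)$-order statistic of squared residuals on $\hat{S}$, and $|\hat{S}\cap T|\ge (1-3\eps)n$, the residuals $r_i(\hat\beta)$ for $i\in\hat{S}\cap E$ are bounded by the $\Theta(\eps)$-quantile of $|\eta_j-\langle\x_j,\hat\beta-\beta^*\rangle|$ over $j\in T$, which by (iv) and sub-Gaussian tails of $\x_j$ is $O(\sigma\sqrt{\log(1/\eps)}) + O(\sqrt{\log(1/\eps)})\|\hat\beta-\beta^*\|$. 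Combining with (iii):
\begin{equation*}
\Bigl|\sum_{i\in\hat{S}\cap E}\langle\x_i,u\rangle r_i(\hat\beta)\Bigr|
\;\le\; O(n\sigma\eps\log(1/\eps)) + O(n\eps\log(1/\eps))\cdot\|\hat\beta-\beta^*\|.
\end{equation*}
Plugging (i)--(iv) into the projected normal equation and absorbing the $O(n\eps\log(1/\eps))\|\hat\beta-\beta^*\|$ term into the left side yields $\|\hat\beta-\beta^*\| = O(\sigma\eps\log(1/\eps))$. The main obstacle will be the self-referential nature of the bad-residual bound: controlling $r_i(\hat\beta)$ on $\hat{S}\cap E$ requires $\|\hat\beta-\beta^*\|$ to already be bounded by a constant. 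I plan to resolve this by bootstrapping: the initial pruning by $|y_i|$ gives a crude bound $\|\hat\beta^{(1)}-\beta^*\| = O(R+\sigma\sqrt{\log(1/\eps)})$ as the base case, and I will show inductively that each iteration contracts this bound, eventually driving the error down to the claimed $O(\sigma\eps\log(1/\eps))$ rate before the termination test triggers.
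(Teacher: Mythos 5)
Your proposal is correct in outline but takes a genuinely different route from the paper's proof, in two respects. First, the paper never works with the exact normal equations of the inner minimizer $\hat\beta^{(t+1)}$: it abstracts the termination condition into a $\gamma$-approximate stationarity statement for the \emph{returned} iterate $\hat\beta^{(t)}$ paired with its own best set $\hat S^{(t)}$ (Lemma~\ref{lemma:alt-smooth} with smoothness $C=1/\sigma^2$ and $\eta=\eps_c^2$ gives $\gamma=O(\eps/\sigma)$), which sidesteps entirely the set/parameter mismatch that you must patch with the $O(\sigma\eps_c)$ drift slack between $\hat\beta^{(t)}$ and $\hat\beta^{(t+1)}$. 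Second, and more substantively, for the adversarial term the paper never controls individual residuals: it compares \emph{sums}, using optimality of $\hat S$ to get $\sum_{i\in\hat S}(y_i-\hat\beta^\top\x_i)^2\le\sum_{i\in T}(y_i-\hat\beta^\top\x_i)^2$, hence $\frac1n\sum_{i\in\hat S\cap E}r_i^2\le\frac1n\sum_{i\in T\setminus\hat S}(\eta_i+\langle\x_i,\beta^*-\hat\beta\rangle)^2=O\bigl(\eps\log(1/\eps)(\sigma^2+\|\beta^*-\hat\beta\|^2)\bigr)$ by resilience over small subsets, and then applies Cauchy--Schwarz (an $\ell_2\times\ell_2$ pairing). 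You instead bound the \emph{maximum} bad residual by order statistics and pair it with a uniform $\ell_1$ covariate-mass bound (an $\ell_\infty\times\ell_1$ pairing). Both pairings deliver the same $\eps\log(1/\eps)$ rate; the paper's version is shorter because the sum-comparison needs no pointwise bookkeeping.

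Three concrete issues with your plan. (a) The quantile step is wrong as literally stated: writing $m:=|\hat S\cap E|$, the excluded set $S^{(0)}\setminus\hat S$ has size $\eps n$ but is only guaranteed to contain at least $m$ uncorrupted points, so $\max_{i\in\hat S\cap E}|r_i(\hat\beta)|$ is bounded only by the $m$-th largest good residual, which is $\Theta\bigl((\sigma+\|\hat\beta-\beta^*\|)\sqrt{\log(n/m)}\bigr)$ --- for $m\ll\eps n$ this far exceeds your claimed $\Theta(\eps)$-quantile $\Theta(\sigma\sqrt{\log(1/\eps)})$. The argument survives because the $\ell_1$ mass over $m$ covariates scales like $m\sqrt{\log(n/m)}$ and $m\log(n/m)\le\eps n\log(1/\eps)$ for $m\le\eps n$, but you must run this bookkeeping uniformly in $m$ rather than quote a single quantile. (b) The closing bootstrap is unnecessary and, as proposed, fragile: your final inequality is \emph{linear} in $\|\hat\beta-\beta^*\|$ on both sides, so the $O(\eps\log(1/\eps))\|\hat\beta-\beta^*\|$ term absorbs into the strongly-convex left side for small $\eps$ with no a priori constant bound (the paper likewise needs no induction; it absorbs a quadratic into a quadratic). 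Moreover, the algorithm only guarantees monotone decrease of the trimmed loss, not of the parameter error, so an iteration-by-iteration contraction argument is not something the algorithm's dynamics give you for free. (c) You do not address the theorem's claimed iteration count $O\bigl(\frac{1}{\min(1,\sigma^2)\eps_c^2}\bigr)$; the paper obtains it by bounding the initial negative log-likelihood after the $|y_i|$-pruning by $O(\max(1,1/\sigma^2))$, noting nonnegativity of the Gaussian negative log-likelihood, and dividing by the per-step decrease $\eta=\eps_c^2$.
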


\begin{theorem}[Poisson regression with label corruption]\label{thm:poisson-main}
Let $S = \{\x_i, y_i\}_{i=1}^n$ be a set of data points generated by a Poisson regression model with sub-Gaussian design, with $\eps_c$-fraction of label corruption and $n = \Omega(\frac{d}{\eps^2})$. With probability $0.99$, Algorithm~\ref{alg:alt} with parameters $\eps = 2\eps_c, \eta = \eps_c^2/(dn)$, contant $R\ge \|\beta^*\|$ terminate within $dn/\eps_c^2$ iterations, and output an estimate $\hat\beta$ such that
$$
\|\hat\beta-\beta^*\|= O(\eps_c\exp(\sqrt{\log(1/\eps_c)}))
$$
\end{theorem}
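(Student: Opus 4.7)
The plan is to extend the strategy used for Gaussian regression in Theorem~\ref{thm:gaussian-label} to the Poisson setting, whose main difficulty is that the variance function $b''(\theta)=\exp(\theta)$ is unbounded. I would begin by interpreting the algorithm's output $\hat\beta$ as an approximate constrained first-order stationary point of the trimmed Poisson negative log-likelihood: because the termination tolerance is $\eta=\eps_c^2/(dn)$ and each inner $\beta$-minimization is a convex program, the exit condition implies that $\bigl\|\sum_{i\in\hat S}(\exp(\langle\hat\beta,\x_i\rangle)-y_i)\x_i+\lambda\hat\beta\bigr\|$ is negligibly small compared to the target error, for some $\lambda\ge 0$, where $\hat S$ denotes the terminal selected set of size $(1-2\eps_c)n$.

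Next I would establish the standard deterministic events on the clean sample $T$: uniform sub-Gaussian control $|\langle\beta,\x_i\rangle|\lesssim \sqrt{\log n}$ for $\|\beta\|\le R$ via a covering argument, operator-norm concentration $\bigl\|\tfrac1n\sum_{i\in U}\x_i\x_i^\top-I\bigr\|=o(1)$ for every $U\subset[n]$ with $|U|\ge(1-3\eps_c)n$, Poisson tail bounds on the clean $y_i$, and the noise concentration $\bigl\|\sum_{i\in T}(y_i-\exp(\langle\beta^*,\x_i\rangle))\x_i\bigr\|=O(\sqrt{dn})$. All hold with probability at least $0.995$ when $n=\Omega(d/\eps_c^2)$. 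Additionally, the initial trimming step $S^{(0)}=\argmin\sum|y_i|$ caps the magnitude of every surviving label (including adversarial ones) by $\max_{i\in T\cap S^{(0)}}|y_i|=\exp(O(\sqrt{\log n}))$, and the subsequent loss-based selection then yields $|\hat S\triangle T|=O(\eps_c n)$.

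The heart of the argument is a decomposition of the stationarity condition. Applying the mean value theorem to write $\exp(\langle\hat\beta,\x_i\rangle)-\exp(\langle\beta^*,\x_i\rangle)=w_i\,\langle\hat\beta-\beta^*,\x_i\rangle$ with $w_i=\exp(\langle\tilde\beta_i,\x_i\rangle)$, the clean contribution gives $H(\hat\beta-\beta^*)$ where $H=\sum_{i\in\hat S\cap T}w_i\,\x_i\x_i^\top$, plus the centered Poisson noise and an $O(\eps_c n)$-sized symmetric-difference correction controlled by the boundedness of $y_i$ and $\exp(\langle\beta^*,\x_i\rangle)$. The corrupted contribution $B=\sum_{i\in\hat S\setminus T}(\exp(\langle\hat\beta,\x_i\rangle)-y_i)\x_i$ is controlled via truncation: because the adversary only corrupts labels, the $\x_i$ for $i\in \hat S\setminus T$ remain i.i.d. sub-Gaussian, and the set-selection step combined with the individual convex loss forces $|\langle\hat\beta,\x_i\rangle|\le\tau$ on all but $o(\eps_c n)$ kept samples for any threshold $\tau\gtrsim\sqrt{\log(1/\eps_c)}$ (a kept sample with $|\langle\hat\beta,\x_i\rangle|>\tau$ incurs per-sample loss $\exp(\tau)$ far exceeding the typical clean loss and would be trimmed). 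Choosing $\tau=\sqrt{2\log(1/\eps_c)}$ then yields $\|B\|\lesssim \eps_c n\,\exp(\tau)\cdot\mathrm{polylog}(n)$, matching the target after the polylog factor is absorbed into the stated bound.

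The main obstacle is the lower bound on $H$: since the weights $w_i=\exp(\langle\tilde\beta_i,\x_i\rangle)$ can be vanishingly small whenever $\langle\tilde\beta_i,\x_i\rangle\ll 0$, I cannot directly appeal to covariance concentration for $\tfrac1n\sum\x_i\x_i^\top$. To circumvent this, I would restrict to the subset $U\subset \hat S\cap T$ on which $|\langle\beta,\x_i\rangle|\le 1$ holds for every $\beta$ on the segment $[\beta^*,\hat\beta]$; sub-Gaussian anti-concentration implies $|U|\ge c'n$ for some constant $c'>0$, the weights $w_i$ are $\Theta(1)$ on $U$, and the uniform operator-norm event gives $H\succeq \sum_{i\in U}w_i\x_i\x_i^\top\succeq c\cdot n\cdot I$. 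Combining this lower bound with the gradient upper bound and dividing by $n$ yields $\|\hat\beta-\beta^*\|=O(\sqrt{d/n}+\eps_c\exp(\sqrt{\log(1/\eps_c)}))$, which equals $O(\eps_c\exp(\sqrt{\log(1/\eps_c)}))$ under the sample-size hypothesis $n=\Omega(d/\eps_c^2)$.
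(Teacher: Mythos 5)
Your proposal has genuine gaps at the two places where the Poisson case is actually hard, and in both cases the asserted step would fail as written. First, the stationarity step: you claim that the exit condition plus convexity of the inner program makes $\bigl\|\sum_{i\in\hat S}(\exp(\langle\hat\beta,\x_i\rangle)-y_i)\x_i+\lambda\hat\beta\bigr\|$ negligibly small. But $\eta$-suboptimality in \emph{value} of a convex program bounds the gradient only through a smoothness constant, and for Poisson $b''(\theta)=\exp(\theta)$ is unbounded, so no such constant exists (moreover the returned iterate is merely $\eta$-suboptimal on its re-selected set, not an exact minimizer). This is precisely why the paper needs Lemma~\ref{lemma:alt-poisson}: it case-analyzes the curvature $H$ along the $\beta^*-\hat\beta$ segment, showing via the log-sum inequality that if $H\ge C\,dn$ then the directional gradient is so large that an $\Omega(1/\eps^2)$ improvement is available, contradicting termination, and otherwise applies the smooth argument (Proposition~\ref{prop:opt-smooth}) with $\eta=\eps^2/(dn)$ to get a $\max(\eps,\eps^2/\|\beta^*-\hat\beta\|)$ directional stationary point. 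Nothing in your proposal substitutes for this case analysis; your restricted-subset lower bound on $H$ addresses the strong-convexity direction only, not the upper-curvature issue that blocks the stationarity claim.

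Second, your control of the corrupted and clean tails does not deliver an $n$-free bound of the stated order. The threshold argument for $B$ is unsound: the Poisson loss is not monotone in $|\langle\hat\beta,\x_i\rangle|$ (a point with $y_i=0$ and $\langle\hat\beta,\x_i\rangle$ very negative has loss near zero), and an adversary who plants labels just below the clean $\eps_c$-quantile $\exp(\Theta(\sqrt{\log(1/\eps_c)}))$ keeps every per-sample loss below your threshold while creating residuals of that same size, so per-point loss thresholding alone cannot bound the residuals; on top of this, your final $\|B\|\lesssim\eps_c n\exp(\tau)\cdot\mathrm{polylog}(n)$ carries a $\mathrm{polylog}(n)$ factor that cannot be ``absorbed'' into the $n$-independent target $O(\eps_c\exp(\sqrt{\log(1/\eps_c)}))$ — it diverges as $n\to\infty$. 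The same defect appears on the clean side: full-sum concentration $O(\sqrt{dn})$ plus a trimming cap of $\exp(O(\sqrt{\log n}))$ makes your $O(\eps_c n)$ symmetric-difference correction cost $\eps_c\exp(O(\sqrt{\log n}))$, again $n$-dependent; the correct cap after removing the top $\eps n$ labels is $\exp(\Theta(\sqrt{\log(1/\eps_c)}))$, and the correct tool is subset resilience of the heavy-tailed noise $(y_i-\exp({\beta^*}^\top\x_i))\x_i$ over all $(1-2\eps)n$-subsets, with the moment order optimized at $k=\Theta(\sqrt{\log(1/\eps)})$ — this optimization is exactly where the $\exp(\sqrt{\log(1/\eps_c)})$ factor in the theorem comes from (Lemma~\ref{lemma:poisson-resilience}). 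The paper then bounds corrupted residuals by a swap comparison of likelihoods between $\hat S\cap E$ and $T\setminus\hat S$, converts that likelihood bound into a per-point squared-residual bound via the Stirling proxy $g_{\hat\beta^\top\x_i}(y_i)=\exp(\hat\beta^\top\x_i)-y_i\hat\beta^\top\x_i+y_i\log y_i-y_i$, and finishes with Cauchy--Schwarz against covariate resilience. Finally, you never establish the claimed iteration bound $dn/\eps_c^2$ (in the paper this follows from the $O(1)$ initial negative log-likelihood after trimming and nonnegativity of the Poisson negative log-likelihood), which is part of the theorem statement.
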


\begin{theorem}[Binomial regression with label corruption]\label{thm:binomial-label}
Let $S = \{\x_i, y_i\}_{i=1}^n$ be generated by a Binomial regression model with sub-Gaussian Design, with $\eps_c$-fraction of label corruption and $n = \Omega(\frac{d+\log(1/\delta)}{\eps^2})$. With probability $1-\delta$, Algorithm~\ref{alg:alt} with parameters $\eps = \eps_c, \eta = \eps_c^2/m$, constant $R\ge \|\beta^*\|$ terminate within $m^2/\eps_c^2$ iterations, and output an estimate $\hat\beta$ such that
$$
\|\hat\beta-\beta^*\|= {O}\left(\eps_c\sqrt{\frac{\log(m/\eps_c)\log(1/\eps_c)}{{m}}}\right)
$$
\end{theorem}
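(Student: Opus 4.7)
The plan is to follow the same alternating-minimization stationarity framework used for the Gaussian and Poisson label-corruption theorems (Theorems~\ref{thm:gaussian-label} and~\ref{thm:poisson-main}), specialised to the binomial loss. At termination the log-likelihood has stopped improving by more than $\eta = \eps_c^2/m$, so the optimum over $\beta$ on the final trimmed set $\hat{S}$ is an approximate first-order stationary point; in particular, the trimmed gradient $\frac{1}{n}\sum_{i\in\hat{S}}(b'(\langle\hat\beta,\x_i\rangle) - y_i)\x_i$ is small. The goal is to convert this stationarity condition into an $\ell_2$-bound on $\hat\beta - \beta^*$ by combining restricted strong convexity with sharp control over the contribution of corrupted points kept in $\hat{S}$ and clean points trimmed out of $\hat{S}$. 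The termination time bound follows as in the Poisson proof from the fact that the per-sample negative log-likelihood is $O(m)$-bounded when $\|\beta\|\le R$ and $\x$ is sub-Gaussian.

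Two scale factors drive the final rate. First, the Hessian $\sum_i b''(\langle\beta,\x_i\rangle)\x_i\x_i^\top$ is $\Omega(m)\cdot I_d$ once restricted to a constant fraction of clean samples, because $b''(\theta) = m\cdot e^\theta/(1+e^\theta)^2$ is $\Omega(m)$ whenever $|\theta|=O(1)$, and for $\|\beta\|\le R$ with sub-Gaussian $\x_i$ this holds for all but a small fraction of indices; this gives the $1/\sqrt{m}$ in the denominator of the claimed bound via standard sub-Gaussian covariance concentration. Second, the clean residual $y_i - b'(\langle\beta^*,\x_i\rangle)$, conditionally on $\x_i$, is a centered sum of $m$ independent Bernoulli variables and hence sub-Gaussian with parameter $O(\sqrt{m})$, so the $\eps$-tail of $|y_i - b'(\langle\beta^*,\x_i\rangle)|$ is $O(\sqrt{m\log(1/\eps)})$. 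Multiplying by the $\sqrt{\log(1/\eps)}$ top-$\eps$-quantile of sub-Gaussian covariate projections bounds the gradient perturbation from clean-but-trimmed points by $O(\eps_c\sqrt{m\log(m/\eps_c)\log(1/\eps_c)})$, and dividing by the strong-convexity scale $m$ produces the advertised rate.

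The main obstacle is bounding the contribution of corrupted points that survive the trimming step, since the adversary may pick any $y\in\{0,\dots,m\}$. The lever is the comparative-likelihood condition: because such points are kept in $\hat{S}$ in preference to some clean points, their negative log-likelihood at $\hat\beta$ is no larger than that of the worst clean point retained, and the binomial log-likelihood $-y\langle\hat\beta,\x_i\rangle + m\log(1+e^{\langle\hat\beta,\x_i\rangle})$ depends in a controlled way on $y$ and $\langle\hat\beta,\x_i\rangle$. Turning this into a pointwise bound on $|b'(\langle\hat\beta,\x_i\rangle)-y_i|\cdot\|\x_i\|$ at the right scale requires a union bound over both the $m+1$ possible adversarial labels and the sub-Gaussian covariate tails, and this is exactly where the extra $\log(m/\eps_c)$ factor (relative to the Gaussian rate with $\log(1/\eps)$) enters. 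Once both perturbation terms are controlled at $O(\eps_c\sqrt{m\log(m/\eps_c)\log(1/\eps_c)})$, plugging them into the strong-convexity inequality yields
\[
m\|\hat\beta-\beta^*\|^2 \;\lesssim\; \|\hat\beta-\beta^*\|\cdot\eps_c\sqrt{m\log(m/\eps_c)\log(1/\eps_c)} \;+\; \text{lower-order terms},
\]
and solving gives the stated $O\!\bigl(\eps_c\sqrt{\log(m/\eps_c)\log(1/\eps_c)/m}\bigr)$ bound.
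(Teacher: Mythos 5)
Your high-level architecture matches the paper's: termination with $\eta=\eps_c^2/m$ yields an approximate first-order stationary point (Lemma~\ref{lemma:alt-smooth} with smoothness $b''\le m$), restricted strong convexity at scale $\Omega(m)$ follows because $b''(\theta)=\Theta(m)$ on the bulk of sub-Gaussian projections $|\langle\beta,\x_i\rangle|=O(1)$, the clean residual $(y_i-b'(\langle\beta^*,\x_i\rangle))/\sqrt m$ is $1$-sub-Gaussian and handled by resilience, and the surviving corrupted points are controlled through the comparative-likelihood property of the trimmed set; the $m^2/\eps_c^2$ iteration count is argued the same way. This is essentially the skeleton of Lemma~\ref{lemma:binomial-stationary}.

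However, your mechanism for the key step---bounding the contribution of $\hat S\cap E$---has a genuine gap, and it mis-locates the source of the $\log(m/\eps_c)$ factor. You propose ``a union bound over the $m+1$ possible adversarial labels,'' but adversarial labels are chosen adaptively after inspecting the data, so there is nothing stochastic to union-bound over on $E$; all probabilistic leverage must come from the clean side. The paper's actual route is: (i) the clean negative log-likelihood $z_i=-\log f(y_i|\langle\beta^*,\x_i\rangle)$ satisfies $\Pr(z_i\ge\log(1/\delta)\mid\x_i)\le m\delta$ because, conditional on $\x_i$, $y_i$ has support of size $m+1$; hence $z_i-\log m$ is sub-exponential and resilience (Corollary~\ref{cor:sub-exponential-resilience}) gives $\frac1n\sum_{i\in T\setminus\hat S}z_i\lesssim\eps\log(m/\eps)$---this support-size argument on \emph{clean} data, not a union bound over adversarial values, is where $\log(m/\eps)$ enters. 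One also needs the Taylor shift from $\beta^*$ to $\hat\beta$, since the trimming comparison happens at $\hat\beta$; it contributes $\sqrt m\,\eps\log(1/\eps)\|\beta^*-\hat\beta\|+m\eps\log(1/\eps)\|\beta^*-\hat\beta\\|^2$ terms that are not a priori lower-order and must be absorbed by strong convexity. (ii) More seriously, you dropped the $\log\binom{m}{y}$ term from the likelihood, and converting the aggregate likelihood bound on $\hat S\cap E$ into $\frac1n\sum_{i\in\hat S\cap E}(y_i-b'(\langle\hat\beta,\x_i\rangle))^2\lesssim\eps\, m\log(m/\eps)+\cdots$ is the genuinely nontrivial missing piece: the paper builds a proxy function $g_{\hat\beta^\top\x_i}(y)$ from the Stirling-type bound on $\log\binom{m}{y}$ (Fact~\ref{fact:log-binom}) which lower-bounds the negative log-likelihood, vanishes with zero derivative at $y=b'(\hat\beta^\top\x_i)$, and has second derivative $\frac1y+\frac1{m-y}\ge 4/m$ in $y$, yielding the quadratic-at-scale-$1/m$ conversion. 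Without that proxy, your expression $-y\langle\hat\beta,\x_i\rangle+m\log(1+e^{\langle\hat\beta,\x_i\rangle})$ is linear in $y$, so a small value does not bound $|y_i-b'(\langle\hat\beta,\x_i\rangle)|$ at all. A minor additional slip: the comparative-likelihood inequality compares kept corrupted points to clean points \emph{trimmed out} ($T\setminus\hat S$), not to ``the worst clean point retained.''
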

\begin{theorem}[A class of generalized linear model with label corruption]\label{thm:glm-label}
Let $S = \{\x_i, y_i\}_{i=1}^n$ be generated by a generalized linear model with sub-Gaussian Design, with $\eps_c$-fraction of label corruption and $n = \Omega(\frac{d+\log(1/\delta)}{\eps^2})$. Assuming that $C_0\le b''(\cdot)\le C$ for non-zero constants $C_0, C$, $b(0)=0, b'(0)=0$, and $\log(c(y))=O(\log(1/\eps_c)), \forall y\le \Theta(\sqrt{\log(1/\eps_c)})$, then with probability $1-\delta$, Algorithm~\ref{alg:alt} with parameters $\eps = \eps_c, \eta = \eps_c^2, R=\infty$ terminates within $\log(1/\eps_c)/\eps_c^2$ iterations, and output an estimate $\hat\beta$ such that
$$
\|\hat\beta-\beta^*\|= {O}(\eps_c\log(1/\eps_c))
$$
\end{theorem}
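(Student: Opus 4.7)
The plan is to reduce the statement to three quantitative estimates at the approximate stationary point produced by Algorithm~\ref{alg:alt}, mirroring the Gaussian analysis of Theorem~\ref{thm:gaussian-label} but with $b''$ playing the role that the identity Hessian plays in the least squares case. The uniform two-sided bound $C_0\le b''(\cdot)\le C$ together with $b(0)=b'(0)=0$ gives, via Taylor's theorem, the quadratic envelope $\tfrac{C_0}{2}\theta^2\le b(\theta)\le \tfrac{C}{2}\theta^2$ and $|b'(\theta)|\le C|\theta|$, and via the standard cumulant identity $\log\mathbb{E}[e^{t(y-\mathbb{E}y)}\mid x]=b(\theta+t)-b(\theta)-b'(\theta)t\le \tfrac{C}{2}t^2$ it also implies that the clean noise $\eta_i:=y_i-b'(\beta^{*\top}x_i)$ is conditionally sub-Gaussian with variance proxy $O(1)$. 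These are the only analytic properties of the family I would use.

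First I would show that Algorithm~\ref{alg:alt} terminates in $\log(1/\eps_c)/\eps_c^2$ rounds at an approximate stationary pair $(\hat S,\hat\beta)$. The initial pruning step removes the $\eps_c n$ largest $|y_i|$, so by sub-Gaussianity of $\eta_i$ and of $\beta^{*\top}x_i$ every surviving clean label satisfies $|y_i|=O(\sqrt{\log(1/\eps_c)})$; combined with the quadratic envelope for $b$ and the hypothesis $-\log c(y)=O(\log(1/\eps_c))$ on that range, each per-sample loss is $O(\log(1/\eps_c))$ uniformly over $\|\beta\|\le R$. Hence the total loss is $O(n\log(1/\eps_c))$, monotone descent by $\eta=\eps_c^2$ per round gives the iteration count, and the $\eta$-stopping rule together with strong convexity of $\beta\mapsto\sum_{i\in\hat S}-\log f(y_i|\beta^\top x_i)$ (Hessian lower bounded by $C_0\sum_{i\in\hat S}x_ix_i^\top$) translates into the approximate first-order condition
\[
\Bigl\|\tfrac{1}{n}\textstyle\sum_{i\in\hat S}\bigl(b'(\hat\beta^\top x_i)-y_i\bigr)x_i\Bigr\|=O\bigl(\eps_c\sqrt{\log(1/\eps_c)}\bigr).
\]

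Next, I would split $\hat S=(\hat S\cap T)\cup(\hat S\cap E)$ with $|\hat S\cap E|\le\eps_c n$, substitute $y_i=b'(\beta^{*\top}x_i)+\eta_i$ on $T$, and rewrite the stationarity identity as
\[
\sum_{i\in\hat S\cap T}\bigl(b'(\hat\beta^\top x_i)-b'(\beta^{*\top}x_i)\bigr)x_i=\sum_{i\in\hat S\cap T}\eta_i x_i-\sum_{i\in\hat S\cap E}\bigl(b'(\hat\beta^\top x_i)-y_i\bigr)x_i+\text{slack}.
\]
Projecting onto $v=(\hat\beta-\beta^*)/\|\hat\beta-\beta^*\|$, the left-hand side is at least $C_0\sum_{i\in\hat S\cap T}\langle v,x_i\rangle^2\cdot\|\hat\beta-\beta^*\|=\Omega(n)\cdot\|\hat\beta-\beta^*\|$ by sub-Gaussian covariance concentration over any $(1-2\eps_c)$-subset, and the clean noise term is controlled by the supremum bound $\sup_{|U|\le n}\|\sum_{i\in U}\eta_i x_i\|/n=O(\eps_c\sqrt{\log(1/\eps_c)})$ restricted to sets of size $\eps_c n$, via the same covering/chaining argument used for Theorem~\ref{thm:gaussian-label}.

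The main obstacle, and the step that actually consumes the hypothesis on $\log c(y)$, is bounding the adversarial contribution. The key is the set-selection invariant: for any $i\in\hat S\cap E$ and any $j\in T\cap(S^{(0)}\setminus\hat S)$,
\[
-\log c(y_i)-y_i(\hat\beta^\top x_i)+b(\hat\beta^\top x_i)\le -\log c(y_j)-y_j(\hat\beta^\top x_j)+b(\hat\beta^\top x_j).
\]
For a typical clean $j$ the right-hand side is $O(\log(1/\eps_c))$ because $|y_j|,|\hat\beta^\top x_j|=O(\sqrt{\log(1/\eps_c)})$ (preprocessing plus sub-Gaussian concentration of $\hat\beta^\top x_j$ since $\|\hat\beta\|\le R$), $b\le \tfrac{C}{2}(\cdot)^2$, and $-\log c(y_j)=O(\log(1/\eps_c))$ by hypothesis. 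Combining with the matching lower bound $b\ge \tfrac{C_0}{2}(\cdot)^2$ and completing the square in $y_i$ forces $|y_i-b'(\hat\beta^\top x_i)|=O(\sqrt{\log(1/\eps_c)})$ on the typical slice of adversarial indices; the atypical $O(\eps_c)$ fraction is absorbed into standard second-order terms. Aggregating over $\hat S\cap E$ with a sub-Gaussian maximal bound on $\|x_i\|$ projections yields $\|\sum_{i\in\hat S\cap E}(b'(\hat\beta^\top x_i)-y_i)x_i\|=O(\eps_c n\log(1/\eps_c))$. Inserting the three bounds into the projected stationarity identity and dividing by $n$ gives $\|\hat\beta-\beta^*\|=O(\eps_c\log(1/\eps_c))$, as claimed.
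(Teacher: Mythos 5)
Your overall architecture is the paper's: you prove termination plus an approximate first-order condition using the Hessian bound coming from $b''\le C$ (the paper's Proposition~\ref{prop:opt-smooth} and Lemma~\ref{lemma:alt-smooth}), you establish conditional sub-Gaussianity of the noise via the exact cumulant identity $\log\E[e^{t(y-\E y)}\mid\theta]=b(\theta+t)-b(\theta)-b'(\theta)t\le Ct^2$ (the paper proves this as an inline proposition in Lemma~\ref{lemma:glm-stationary}), you get the $\Omega(\|\hat\beta-\beta^*\|^2)$ clean-part lower bound from $b''\ge C_0$ plus covariate resilience, and you spend the $\log c(y)$ hypothesis on converting a likelihood bound for the adversarial points into a squared-error bound via the quadratic envelope, finishing with Cauchy--Schwarz and resilience of the (uncorrupted) covariates. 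One cosmetic slip: extracting the first-order condition from the $\eta$-stopping rule uses \emph{smoothness} (the Hessian upper bound $C$), not the strong convexity you cite; strong convexity gives the reverse (Polyak--\L{}ojasiewicz-type) inequality and does not yield a small gradient from $\eta$-suboptimality.

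The genuine problem is the step ``for a typical clean $j$ the right-hand side is $O(\log(1/\eps_c))$.'' This is backwards: since $\hat S$ collects the $(1-2\eps_c)n$ \emph{smallest} losses within $S^{(0)}$, the clean points available for exchange are precisely the clean points with the \emph{largest} losses under $\hat\beta$ --- systematically the atypical ones --- and with $\eps=\eps_c$ the exchange pool $T\cap(S^{(0)}\setminus\hat S)$ is only guaranteed to contain $|\hat S\cap E|$ clean points, not $\Theta(\eps_c n)$ of them. Consequently the pointwise conclusion $|y_i-b'(\hat\beta^\top x_i)|=O(\sqrt{\log(1/\eps_c)})$ for each $i\in\hat S\cap E$ is not obtainable: if, say, $|\hat S\cap E|=1$, the single comparison point can be the clean sample with the largest loss, of order $\log n$, not $\log(1/\eps_c)$. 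The repair --- and what the paper actually does --- is to work with sums rather than pointwise bounds: show $z_i=-\log f(y_i|\langle\beta^*,x_i\rangle)$ is sub-exponential (using $\Pr(z_i\ge t)\le e^{-t/2}$) and invoke the sub-exponential resilience bound (Corollary~\ref{cor:sub-exponential-resilience}) to get $\frac{1}{n}\sum_{i\in T\setminus\hat S}z_i\lesssim\eps\log(1/\eps)$; after the exchange inequality and your quadratic-envelope conversion this directly yields $\frac{1}{n}\sum_{i\in\hat S\cap E}\bigl(y_i-b'(\cdot)\bigr)^2\lesssim\eps\log(1/\eps)$, which is exactly the quantity Cauchy--Schwarz needs --- no per-point bound is ever required. (Alternatively, the order-statistic device $\max_{Q\subset T,|Q|=\eps n}\min_{i\in Q}$ from the paper's Poisson proof works, but it too is a worst-case-over-subsets bound, not a ``typical $j$'' bound.) Your phrase ``the atypical $O(\eps_c)$ fraction is absorbed into standard second-order terms'' gestures at this but is not an argument; as written, the aggregation step of your proof rests on a false pointwise claim, even though the averaged version of the same inequality rescues the final $O(\eps_c\log(1/\eps_c))$ rate.
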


\subsection{Proof Sketch}
We provide an intuitive proof sketch for the above theorems (the full proofs are deferred to Section ~\ref{app:label-corruption} in the Appendix). The high level proof framework is similar to~\cite{chen2022online}, although the details are drastically different since the focus of our paper is on random design with strong contamination for a wide range of generalized linear model while~\cite{chen2022online} focuses on linear (least square) regression with Huber corruption on the labels. 

The guarantee of our alternating minimization algorithm relies on two claims: First, the algorithm returns an approximate stationary point $\hat{\beta}$. Second, any approximate stationary point will be close to the true coefficient $\beta^*$. In this section, we will present high level intuition of the proof of the two claims. 

\textbf{Alternating minimization algorithm returns an approximate stationary point.} First we define the first order approximate stationary point as follows. Let $\hat{\beta}\in \reals^d$ be a regression coefficient vector and $\hat{S}$ contains the set of datapoints of size $(1-\eps)n$ with the largest log-likelihood under $\hat\beta$. We call $\hat\beta$ a $\gamma$-approximate stationary point if 
\begin{align*}
\frac{1}{n}\sum_{i\in \hat{S}}\nabla_{\beta} \log f(y_i|\langle\hat\beta, \x_i\rangle)^\top \frac{(\beta^*-\hat\beta)}{\|\beta^*-\hat\beta\|} \le \gamma
\end{align*}
i.e., the gradient of the log-likelihood projected along the $\beta^*-\hat\beta$ direction is small. Our goal is to show when the algorithm terminates, that is when $\hat\beta$ can not be improved by more than $\eta$, the gradient along the $\beta^*-\hat\beta$ must be small.
This is clear where the empirical log-likelihood function is smooth, simply because if the gradient is large, one can improve the log-likelihood by more than $\eta$ which will result in a contradiction. The smoothness (norm of the Hessian matrix) of the empirical log-likelihood in generalized linear model is directly related to the range of $b''(\theta)$. In particular, $b''(\theta)$ is bounded for Gaussian regression and Binomial regression. 

However, a problem arises for Poisson regression where $b''(\theta) = \exp(\theta)$ becomes extremely large for large $\theta$, which results in non-smooth curvature for empirical log-likelihood. We overcome this difficulty by leveraging the special property of function $b(\theta)=\exp(\theta)$ in the Poisson regression setting. Observing that the derivative $b'(\theta)$ is equal to second order derivative $b''(\theta)$ for Poisson regression, the gradient along the $\beta^*-\hat\beta$ direction can not be small when the second order derivative along the $\beta^*-\hat\beta$ direction gets large, which will result in a more than $\eta$ improvement of the log-likelihood by moving toward $\beta^*$ and therefore a contradiction. Hence, the second order derivative along the $\beta^*-\hat\beta$ direction must be small, and we blue have the same argument as in the smooth objective function setting.

\textbf{Any approximate stationary point will be close to the true coefficient.} 
Let us first write down the $\gamma$-approximate stationary condition for generalized linear model as
\begin{align*}
\frac{1}{n}\sum_{i\in \hat{S}}\nabla_{\beta} \log f(y_i|\langle\hat\beta, \x_i\rangle)^\top (\beta^*-\hat\beta)
= \frac{1}{n}\sum_{i\in \hat{S}} (y_i-b'({\hat\beta}^\top\x_i)) (\beta^*-\hat\beta)^\top\x_i \le \gamma\|\beta^*-\hat\beta\|
\end{align*}
Recall that $T$ contains the set of uncorrupted data points, and $E$ contains the set of data points that is controlled by the adversary. Split $\hat{S}$ into $\hat{S}\cap T$ and $\hat{S}\cap E$, and rearrange the terms we get
\begin{align*}
& \frac{1}{n}\sum_{i\in \hat S \cap T}(y_i-b'({\hat\beta}^\top\x_i)) (\beta^*-\hat\beta)^\top\x_i \le - \frac{1}{n}\sum_{i\in \hat S \cap E} (y_i-b'({\hat\beta}^\top\x_i)) (\beta^*-\hat\beta)^\top\x_i + \gamma\|\beta^*-\hat\beta\|.
\end{align*}
To obtain an upper bound on $\|\beta^*-\hat\beta\|$, we will prove a lower bound in terms of $\|\beta^*-\hat\beta\|$ on the left hand side, and an upper bound in terms of $\|\beta^*-\hat\beta\|$ on the right hand side. Finally we will combine the upper and lower bound into an upper bound on $\|\beta^*-\hat\beta\|$.

\textbf{Lower bound on the LHS.}  Note that $\hat{S}\cap T$ contains uncorrupted data points. The high level intuition is that since mean of $y_i$ is $b'({\beta^*}^\top\x_i)$ and $b'(\cdot)$ is monotone, $\left(y_i-b'({\beta^*}^\top\x_i)\right) ({\beta^*}-\hat\beta)^\top\x_i$ should be roughly $O\left(\left(({\beta^*}-\hat\beta)^\top\x_i\right)^2\right)$, and $\left(({\beta^*}-\hat\beta)^\top\x_i\right)^2$ should be proportional to $\|\beta^*-\hat\beta\|^2$ given enough samples. More formally, we will decompose the LHS as
\begin{align*}
&\frac{1}{n} \sum_{i\in \hat S \cap T} (y_i-b'({\hat\beta}^\top\x_i)) (\beta^*-\hat\beta)\x_i\nonumber\\
=& \frac{1}{n} \sum_{i\in \hat S \cap T} \left(y_i-b'({\beta^*}^\top\x_i)\right) ({\beta^*}-\hat\beta)^\top\x_i + \frac{1}{n} \sum_{i\in \hat S \cap T} \left(b'({\beta^*}^\top\x_i) - b'(\hat\beta^\top\x_i) \right)({\beta^*}-\hat\beta)^\top\x_i.
\end{align*}

The first term contains a $(1-\eps)$ fraction of uncorrupted random examples sampled from a zero mean distribution with certain tail bound, e.g. sub-exponential for Gaussian and Binomial regression, $k$-th moment bound for Poisson regression. 

Therefore, we can apply \textit{resilience} property to bound the first term. Overall, we heavily utilize the \textit{resilience} property of the sample set that is drawn from ``nice'' distributions. Take sample mean as an example, \textit{resilience}~\cite{steinhardt2017resilience, zhu2019generalized} (also known as \textit{stability}~\cite{diakonikolas2019recent}) dictates that given a large enough sample set $S=\{\x_i\}_{i=1}^n$, the sample mean of any large enough subset of $S$ will be close to each other. We define mean resilience formally here:
\begin{definition}[Resilience]
Given a sample set $S=\{\x_i\}_{i=1}^n$, suppose for any $T\subset S, |T|\ge (1-\eps) n$, it holds that $\|\frac{1}{|T|}\sum_{i \in T} \x_i - \frac{1}{|S|}\sum_{i \in S} \x_i\|\le \tau$, then we call the set $S$ satisifes $(\eps, \tau)$-resilience.
\end{definition}

 Specifically, under sub-Gaussian distribution, a set $S$ of i.i.d. samples with size $n = \Omega(d/\eps^2)$ satisfies $(\eps, \eps\sqrt{\log(1/\eps)})$ resilience with high probability. Resilience property applies to sub-exponential and $k$-th moment bounded distribution as well, and this gives us a way to control the behavior of any subset of good data.

For the second term, we prove that $\frac{1}{n} \sum_{i\in \hat S \cap T}b(\beta^\top\x_i)$ is a strongly convex function again using the resilience property, which implies 
$\frac{1}{n} \sum_{i\in \hat S \cap T} \left(b'({\beta^*}^\top\x_i) - b'(\hat\beta^\top\x_i) \right)({\beta^*}-\hat\beta)^\top\x_i = \Omega(\|\beta^*-\hat\beta\|^2)$.

\textbf{Upper bound on the RHS.} To upper bound $- \frac{1}{n}\sum_{i\in \hat S \cap E} (y_i-b'({\hat\beta}^\top\x_i)) (\beta^*-\hat\beta)^\top\x_i$, we will prove an upper bound on $\sqrt{\frac{1}{n}\sum_{i\in \hat S \cap E} (y_i-b'({\hat\beta}^\top\x_i))^2}$ and $\sqrt{\frac{1}{n}\sum_{i\in \hat S \cap E} ((\beta^*-\hat\beta)^\top\x_i)^2}$ separately, then apply Cauchy-Schwarz inequality. The key difficulty is bounding $\sqrt{\frac{1}{n}\sum_{i\in \hat S \cap E} (y_i-b'({\hat\beta}^\top\x_i))^2}$, as it contains corrupted data points controlled by an adversary, which does not follow any good property possessed by the good stochastic data. However, since $\hat{S}$ contains $(1-\eps)n$ datapoints with the largest log-likelihood under $\hat\beta$, we can argue that 
$$
\sum_{i\in \hat{S}\cap E}-\log f(y_i|\langle\hat\beta, \x_i\rangle) {\le} \sum_{i\in T\setminus \hat S}-\log f(y_i|\langle\hat\beta, \x_i\rangle)
$$
or even
$$
\max_{i\in \hat{S}\cap E}-\log f(y_i|\langle\hat\beta, \x_i\rangle) {\le} \min_{i\in T\setminus \hat S}-\log f(y_i|\langle\hat\beta, \x_i\rangle)
$$
since otherwise one can replace the data points in $\hat{S}\cap E$ by the ones in $T\setminus\hat{S}$ to form a new set with better likelihood than $\hat{S}$. This gives us an upper bound on the negative log-likelihood of $y_i$ in $\hat{S}\cap E$. Therefore we adopt a two step approach to upper bound the $\sqrt{\frac{1}{n}\sum_{i\in \hat S \cap E} ((\beta^*-\hat\beta)^\top\x_i)^2}$. First we prove an upper bound on the negative log-likelihood on $T\setminus \hat{S}$, which becomes a negative log-likelihood bound on $\hat{S}\cap E$ immediately. Second we turn the negative log-likelihood bound into a square error bound.

The two steps vary drastically for different regression models. For the first step of upper bounding the negative log-likelihood, in Gaussian regression we use the resilience property of the quadratic form of sub-Gaussian random variable. In Poisson regression, we leverage the resilience property of $y_i\x_i$ which is heavy tailed. In Binomial regression, since the distribution only has support size $m$, we directly analyze the resilience of the negative log-likelihood. In general GLMs, due to the generality of the likelihood function, we have to again analyze the resilience of the negative log-likelihood directly. For the second step of turning the log-likelihood bound to a quadratic bound, we get the bound trivially in Gaussian regression since Gaussian likelihood is indeed quadratic. For Poisson and Binomial setting, we have to build a proxy function which lower bound the negative log-likelihood function $-\log f(y_i|\langle\hat\beta, \x_i\rangle)$ to connect it to quadratic function. For the general class of GLMs, we leverage the bounds on the $b''(\cdot)$ and $\log(c(y))$ to obtain a quadratic bound.

\subsection{Proof for Poisson Regression}
As an illustrative example, we show how the above proof sketch can be used for Poisson regression to formally prove Theorem~\ref{thm:poisson-main}. (Proofs for the other GLMs are deferred to Section ~\ref{app:label-corruption} in the Appendix)

\begin{lemma}[Approximate stationary point close to $\beta^*$ for Poisson regression]\label{lemma:poisson-stationary-main}
Given a set of datapoints $S = \{\x_i, y_i\}_{i=1}^n$ generated by a Poisson regression model with $\eps$-fraction of label corruption, and the largest $\eps n$ labels removed.
Let $\hat{\beta}$ be a $\max(\eps, \eps^2/\|\beta^*-\hat\beta\|)$-stationary point and $\|\hat\beta\|\le R$. Given that $n=\Omega(\frac{d}{\eps^2})$, with probability $0.99$, it holds that
$$
\|\hat\beta-\beta^*\| = O(\eps\exp(\Theta(\sqrt{\log(1/\eps)})))
$$
\end{lemma}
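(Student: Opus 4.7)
The plan is to instantiate the general framework laid out in the proof sketch for the Poisson case. Starting from the $\gamma$-approximate stationary condition with $\gamma=\max(\eps,\eps^2/\|\beta^*-\hat\beta\|)$ and using $\nabla_\beta\log f(y_i|\hat\beta^\top\x_i)=(y_i-\exp(\hat\beta^\top\x_i))\x_i$, I would first write
\begin{align*}
\frac{1}{n}\sum_{i\in \hat S\cap T}(y_i-\exp(\hat\beta^\top\x_i))(\beta^*-\hat\beta)^\top\x_i
\le -\frac{1}{n}\sum_{i\in \hat S\cap E}(y_i-\exp(\hat\beta^\top\x_i))(\beta^*-\hat\beta)^\top\x_i+\gamma\|\beta^*-\hat\beta\|,
\end{align*}
and then decompose the LHS as the ``noise'' term $\frac{1}{n}\sum_{i\in\hat S\cap T}(y_i-\exp(\beta^{*\top}\x_i))(\beta^*-\hat\beta)^\top\x_i$ plus the ``bias'' term $\frac{1}{n}\sum_{i\in\hat S\cap T}(\exp(\beta^{*\top}\x_i)-\exp(\hat\beta^\top\x_i))(\beta^*-\hat\beta)^\top\x_i$.

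For the bias term I would exploit the fact that $\beta\mapsto\exp(\beta^\top\x_i)$ is strongly convex along $\x_i$ with curvature $\exp(\hat\beta^\top\x_i)$; integrating this pointwise gives a term of order $(\exp(\hat\beta^\top\x_i))\cdot((\beta^*-\hat\beta)^\top\x_i)^2$ up to a second-order correction, and then I would use the sub-Gaussian design together with $\|\hat\beta\|\le R$ to argue that for a $1-O(\eps)$ fraction of good indices $\exp(\hat\beta^\top\x_i)=\Theta(1)$ and $((\beta^*-\hat\beta)^\top\x_i)^2$ concentrates around $\|\beta^*-\hat\beta\|^2$ via resilience on degree-two polynomials of the $\x_i$; this yields a lower bound $\Omega(\|\beta^*-\hat\beta\|^2)$. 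For the noise term, I would use that the $y_i-\exp(\beta^{*\top}\x_i)$ are mean-zero with $k$-th moment controlled by $\exp(\beta^{*\top}\x_i)^{k/2}$, combine with $\x_i$ sub-Gaussian and apply the resilience (stability) of the empirical first moment of a heavy-tailed product, which will cost a factor $\eps^{1-1/k}$ and, after optimizing $k=\Theta(\log(1/\eps))$, produce a slack of order $\eps\exp(\Theta(\sqrt{\log(1/\eps)}))\cdot\|\beta^*-\hat\beta\|$. This matches the target rate and is absorbable into the quadratic lower bound.

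For the RHS I would use Cauchy--Schwarz to split it into $\sqrt{\tfrac{1}{n}\sum_{i\in\hat S\cap E}(y_i-\exp(\hat\beta^\top\x_i))^2}$ times $\sqrt{\tfrac{1}{n}\sum_{i\in\hat S\cap E}((\beta^*-\hat\beta)^\top\x_i)^2}$. The second factor is handled by resilience of empirical second moments of linear projections of the (sub-Gaussian) uncorrupted $\x_i$, together with the fact that the filtering step guarantees that $\hat S\cap E$ has size at most $\eps n$; this gives $O(\sqrt{\eps\log(1/\eps)})\,\|\beta^*-\hat\beta\|$. The first factor is where the swap argument enters: because $\hat S$ minimizes the trimmed negative log-likelihood on $S^{(0)}$, any point in $\hat S\cap E$ has negative log-likelihood at most that of any point in $T\setminus\hat S$, so I can transfer bounds on $y_i$ from good data to corrupted data. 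The initial pruning of the largest $\eps n$ labels plus a standard Poisson tail bound controls $y_i$ on $T\setminus\hat S$ by $O(\exp(R)\log(1/\eps))$; I would then build a quadratic proxy function $q(y,\theta)\le -\log f(y|\theta)+\log(c(y))$ for Poisson and use the likelihood transfer together with $\log(y!)\le y\log y$ to convert the log-likelihood bound into a bound of the form $(y_i-\exp(\hat\beta^\top\x_i))^2=O(\text{poly}\log(1/\eps))$ uniformly on $\hat S\cap E$. Multiplying the two factors gives an RHS of order $\eps\exp(\Theta(\sqrt{\log(1/\eps)}))\cdot\|\beta^*-\hat\beta\|$.

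Combining the $\Omega(\|\beta^*-\hat\beta\|^2)$ lower bound on the LHS with the $O(\eps\exp(\Theta(\sqrt{\log(1/\eps)})))\cdot\|\beta^*-\hat\beta\|$ upper bound on the RHS and cancelling one factor of $\|\beta^*-\hat\beta\|$ yields the claimed estimate $\|\beta^*-\hat\beta\|=O(\eps\exp(\Theta(\sqrt{\log(1/\eps)})))$; the $\eps^2/\|\beta^*-\hat\beta\|$ option in the choice of $\gamma$ absorbs the degenerate case in which $\|\beta^*-\hat\beta\|$ is already very small. The main obstacle is the unboundedness of $b''(\theta)=\exp(\theta)$ in the Poisson model: it forces me to carry a nontrivial truncation argument in the resilience step for $y_i$ (to avoid catastrophic contributions from covariates where $\beta^{*\top}\x_i$ is large) and to choose the moment order $k\asymp\sqrt{\log(1/\eps)}$ carefully so that the resulting rate $\eps\exp(\Theta(\sqrt{\log(1/\eps)}))$ beats every polynomial of $\eps$ but remains sub-linear, which is precisely what drives the exotic dependence in the statement.
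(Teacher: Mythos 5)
Your proposal matches the paper's proof essentially step for step: the same stationarity split into $\hat S\cap T$ and $\hat S\cap E$, the same noise-plus-strong-convexity lower bound on the good terms (with a truncation set handling the unbounded curvature $\exp(\beta^\top\x_i)$, and $k$-th moment resilience at $k\asymp\sqrt{\log(1/\eps)}$ for the noise), and the same swap/optimality argument plus Stirling-type proxy $g_{\hat\beta^\top\x_i}(y_i)=\exp(\hat\beta^\top\x_i)-y_i\hat\beta^\top\x_i+y_i\log y_i-y_i$ to convert the likelihood bound on $\hat S\cap E$ into a squared-error bound before Cauchy--Schwarz. One quantitative slip worth flagging (though it is absorbed into the final rate): after pruning, the surviving labels and hence the squared errors on $\hat S\cap E$ are bounded by $\exp(\Theta(\sqrt{\log(1/\eps)}))$, not $O(\exp(R)\log(1/\eps))$ or $\mathrm{poly}\log(1/\eps)$, since the Poisson rate $\exp({\beta^*}^\top\x_i)$ itself reaches $\exp(\Theta(\sqrt{\log(1/\eps)}))$ at the $\eps$-quantile — and your first mention of $k=\Theta(\log(1/\eps))$ would ruin the bound via the $\exp(Ck)$ factor, but you correctly settle on $k\asymp\sqrt{\log(1/\eps)}$ at the end.
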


Since $b''(\theta) = \exp(\theta)$ is unbounded for Poisson regression, the following lemma (proved in the appendix) shows that alternating minimization algorithm still return an approximate stationary point
\begin{lemma}[Algorithm~\ref{alg:alt} finds an approximate stationary point for Poisson regression]\label{lemma:alt-poisson-main}
Given a set of datapoints $S = \{\x_i, y_i\}_{i=1}^n$ generated by a Poisson model with $\eps_c$-fraction of corruption. Assuming that $n=\Omega(\frac{d+\log(1/\delta)}{\eps^2})$, then with probability $1-\delta$, the output of Algorithm~\ref{alg:alt} with input parameters $\eps=2\eps_c, R\ge \|\beta^*\|, \eta = \eps^2/(dn)$, is a $\max(\eps, \frac{2\eps^2}{\|\beta^*-\hat\beta\|})$-approximate stationary point. \end{lemma}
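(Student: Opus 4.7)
The plan is to lift the algorithm's termination condition to the one-parameter family $\phi(\alpha) := L(\beta(\alpha), \hat S^{(t)})$ with $\beta(\alpha) := (1-\alpha)\hat\beta + \alpha\beta^*$ and $L(\beta,\hat S) := \tfrac{1}{n}\sum_{i\in\hat S}-\log f(y_i\mid\beta^\top\x_i)$, and then to use a second-order Taylor expansion along this line to bound the directional gradient
$-\phi'(0) = \tfrac{1}{n}\sum_{i\in\hat S^{(t)}}\nabla_\beta\log f(y_i\mid\hat\beta^\top\x_i)^\top(\beta^*-\hat\beta)$,
which is exactly the quantity controlling the $\gamma$-approximate stationarity condition.

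First I would transfer the stopping criterion into a uniform lower bound on $\phi$. Since $\hat\beta^{(t+1)} = \argmin_{\|\beta\|\le R}L(\beta,\hat S^{(t)})$ and the termination criterion dictates $L(\hat\beta^{(t+1)},\hat S^{(t)}) \ge L(\hat\beta,\hat S^{(t)}) - \eta$, every feasible $\beta$ satisfies $L(\beta,\hat S^{(t)}) \ge L(\hat\beta,\hat S^{(t)}) - \eta$. Because $\|\beta(\alpha)\| \le R$ for all $\alpha\in[0,1]$, this gives $\phi(\alpha) \ge \phi(0)-\eta$ on $[0,1]$. Combining with Taylor's theorem and convexity of $L$ yields
\[
-\eta \;\le\; \phi(\alpha) - \phi(0) \;\le\; \alpha\phi'(0) + \frac{\alpha^2}{2}\sup_{s\in[0,\alpha]}\phi''(s),
\]
so $-\phi'(0) \le \eta/\alpha + (\alpha/2)\,M$ with $M := \sup_{s\in[0,1]}\phi''(s)$. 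Optimizing over $\alpha\in(0,1]$ gives $-\phi'(0) \le \max\bigl(\sqrt{2\eta M},\,2\eta\bigr)$.

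The main obstacle, and the reason Poisson is flagged as special, is controlling $M$: explicitly $\phi''(s) = \tfrac{1}{n}\sum_{i\in\hat S^{(t)}}\exp(\beta(s)^\top\x_i)\bigl((\beta^*-\hat\beta)^\top\x_i\bigr)^2$, and since $b''(\theta)=\exp(\theta)$ is unbounded the usual worldwide smoothness bound is unavailable. I would exploit the hard constraint $\|\beta(s)\|\le R$ together with sub-Gaussianity of $\x_i$: for any fixed $(s,\beta)$, $\mathbb{E}[\exp(\beta^\top\x_1)((\beta^*-\hat\beta)^\top\x_1)^2] = O(\|\beta^*-\hat\beta\|^2)$ by combining $\mathbb{E}[\exp(2\beta^\top\x)]\le \exp(2R^2)$ with $\mathbb{E}[(v^\top\x)^4]\lesssim \|v\|^4$ through Cauchy--Schwarz. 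Truncating the low-probability event $\max_i\|\x_i\|\gtrsim\sqrt{d\log n}$ bounds every summand, and a Bernstein-type tail bound together with an $\varepsilon$-net over $\{\|\beta\|\le R\}$ upgrades this to a uniform bound $M \le M_0\,\|\beta^*-\hat\beta\|^2$ with $M_0$ at most $\mathrm{polylog}(n)$.

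With $\eta = \eps^2/(dn)$ and $n = \Omega(d/\eps^2)$, we obtain $\sqrt{2\eta M_0}\le \eps$ and $2\eta \le 2\eps^2$, so $-\phi'(0) \le \max(\eps\|\beta^*-\hat\beta\|,\,2\eps^2)$, i.e.\ precisely the claimed $\max(\eps,\,2\eps^2/\|\beta^*-\hat\beta\|)$-approximate stationary condition. An alternative route hinted at in the paper's sketch uses the Poisson identity $b'(\theta)=b''(\theta)$: if $\phi''$ along $\beta^*-\hat\beta$ were large, a carefully chosen step toward $\beta^*$ would yield a super-$\eta$ decrease of $L$ and contradict termination, bypassing the need for a global Hessian bound; however, the direct Taylor route above already suffices once $\phi''$ is controlled via $\|\beta\|\le R$ and sub-Gaussian concentration.
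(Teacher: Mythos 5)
Your first half---termination of Algorithm~\ref{alg:alt} means no feasible point improves the trimmed objective by $\eta$, restrict to the segment $\beta(\alpha)=(1-\alpha)\hat\beta+\alpha\beta^*$ (feasible since $\|\beta^*\|\le R$), Taylor-expand, and optimize over $\alpha$ to get $-\phi'(0)\le\max(\sqrt{2\eta M},2\eta)$---is exactly the paper's Proposition~\ref{prop:opt-smooth} and Lemma~\ref{lemma:alt-smooth}. The gap is the step where you claim a uniform curvature bound $M\le M_0\|\beta^*-\hat\beta\|^2$ with $M_0=\mathrm{polylog}(n)$ over $\{\|\beta\|\le R\}$ via an $\varepsilon$-net: this claim is false in high dimension, not merely hard to prove. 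For a \emph{fixed} $\beta$ the expectation bound $\E[\exp(\beta^\top\x)(v^\top\x)^2]=O(e^{cR^2}\|v\|^2)$ is fine, but the supremum of the empirical average over the ball is at least its value at $\beta=R\,\x_j/\|\x_j\|$ for any sample $j$, which contributes $\exp(R\|\x_j\|)(v^\top\x_j)^2/n=\exp(\Theta(R\sqrt{d}))\,(v^\top\x_j)^2/n$ since typically $\|\x_j\|=\Theta(\sqrt{d})$; your truncation at $\|\x_i\|\lesssim\sqrt{d\log n}$ only caps summands at $\exp(R\sqrt{d\log n})$, which is still exponential in $\sqrt d$. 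Structurally, the net has $e^{\Theta(d)}$ points, so the union bound needs per-point failure probability $e^{-\Omega(d)}$, at which deviation level the summands (log-normal-tailed, hence not sub-exponential---Bernstein does not apply as stated) genuinely reach $e^{\Theta(R\sqrt d)}$. And you cannot avoid uniformity: $\hat\beta$ is an adaptive function of the adversarially corrupted labels, so it may align with a covariate, which is precisely the bad direction above.

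The route you set aside as optional is in fact how the paper closes this hole. It first reduces segment curvature to the two endpoints via $\exp(((1-\lambda)\hat\beta+\lambda\beta^*)^\top\x_i)\le\exp(\hat\beta^\top\x_i)+\exp({\beta^*}^\top\x_i)$; the $\beta^*$ endpoint is benign because $\beta^*$ is fixed, so $\max_i {\beta^*}^\top\x_i\le\sqrt{\log n}$ w.h.p. For the data-dependent endpoint it defines $H=\frac{1}{n\|\beta^*-\hat\beta\|^2}\sum_{i\in\hat S}\exp(\hat\beta^\top\x_i)((\beta^*-\hat\beta)^\top\x_i)^2$ and argues by cases: if $H\le C\,dn$, your Taylor argument with the deliberately tiny $\eta=\eps^2/(dn)$ yields the claimed $\max(\eps,2\eps^2/\|\beta^*-\hat\beta\|)$ stationarity; if $H\ge C\,dn$, the Poisson identity $b'(\theta)=b''(\theta)$ together with the log-sum inequality shows the directional gradient is itself $\Omega(H/d)$, so a step toward $\beta^*$ decreases the objective by $\Delta^2/4H=\Omega(n/d)\gg\eta$, contradicting termination---i.e., large curvature at termination is impossible, rather than uniformly bounded. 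Note that the choice $\eta=\eps^2/(dn)$, and the resulting $dn/\eps_c^2$ iteration bound in Theorem~\ref{thm:poisson-main}, exist precisely because the certifiable curvature bound at termination is $O(dn)$; if your $\mathrm{polylog}(n)$ bound were true, a much larger $\eta$ would suffice and the iteration count would shrink accordingly, which is a good sanity check that the bound is too strong.
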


\begin{proof}[Proof of Theorem~\ref{thm:poisson-main}]
Lemma~\ref{lemma:alt-poisson-main} implies the output of Algorithm~\ref{alg:alt} is a $\max(\eps, \frac{\eps^2}{\|\beta^*-\hat\beta\|})$ approximate stationary point. Lemma~\ref{lemma:poisson-stationary-main} then implies that
$\|\hat\beta-\beta^*\|= O(\eps\exp(\sqrt{\log(1/\eps)}))$. To bound the number of iterations, we need an upper bound on the negative log-likelihood on $\beta = 0$, and a uniform lower bound on the negative log-likelihood. The initial negative log-likelihood is upper bounded by $\frac{1}{n}\sum_{i\in \hat{S}^{(1)}}\log(y_i!)+1 \le O(\E[y_i^2]+1) = O(1)$ where $S^{(1)}$ contains the smallest $(1-\eps)n$ labels. Trivially, there is a $0$ lower bound on the negative log-likelihood for Poisson distribution. Therefore, the algorithm will terminate in $dn/\eps_c^2$ iterations.
\end{proof}

\section{Result for Sample Corruption Model}\label{sec:sample-corruption-model}
The learning problem becomes much harder in the presence of label and covariate corruption, since it is hard to tell whether a data point is corrupted by simply looking at the likelihood of $y_i$. From a technical level, the resilience condition we leveraged on covariate $\x_i$ in set $E$ breaks down when there is covariate corruption. Luckily, we are able to restore the resilience property by first running the filtering algorithm for robust mean estimation~\cite{dong2019quantum}. Specifically, if the covariate distribution has identity (or known) covariance and sub-Gaussian tail, one can apply the filtering algorithm to the $\eps$ corrupted data set, and the resulting data set $\{w_i\x_i\}_{i=1}^n$ will have close to identity covariance and the same resilient condition as an uncorrupted data set. This prepossessing step only takes nearly linear time. This approach is firstly proposed in~\cite{pensia2020robust} as a general method to make an algorithm robust against covariate-corruptions.

\subsection{Algorithm}

\begin{algorithm}
\caption{Alternating minimization of trimmed maximum likelihood estimator in sample corruption model}\label{alg:alt-sample2}
\SetKwInput{KwData}{Input}
\SetKwInput{KwResult}{Output}
 \KwData{Set of examples $S = \{(\x_1,y_1),\ldots,(\x_n, y_n)\}$, $\Sigma$, $\eps$, $\eta$, $R$}
 \KwResult{$\hat{\beta}$}
$ S_0 \gets \{(\Sigma^{-1/2}\x_1,y_1),\ldots,(\Sigma^{-1/2}\x_n, y_n)\}$ \tcp{Whiten the covariates.}
$ S' \gets$ Filtering$(S, \eps)$ \tcp{Algorithm 4 in~\cite{dong2019quantum}}
$\hat\beta\gets $ Algorithm~\ref{alg:alt}$(S', \eps, \eta, R)$\;
 Return $\hat\beta$\;
\end{algorithm}
The guarantee of Algorithm~\ref{alg:alt-sample2} is formalized in the following theorems. 
\begin{theorem}[Gaussian regression with sample corruption]\label{thm:gaussian-sample}
Given a set of datapoints $S = \{\x_i, y_i\}_{i=1}^n$ generated by a Gaussian regression model with $y_i = \langle\x_i,\beta^*\rangle + \eta_i, \eta_i \sim N(0, \sigma^2)$, sub-Gaussian design, $\eps_c$-fraction of sample corruption and $n = \Omega(\frac{d+\log(1/\delta)}{\eps^2})$. With probability $1-\delta$, Algorithm~\ref{alg:alt-sample2} with parameters $\eps = \eps_c, \eta = \eps_c^2, R=\infty$ terminate within $O(\frac{1}{\min(1,\sigma^2)\eps_c^2})$ iterations, and output an estimate $\hat\beta$ such that
$$
\|\hat\beta-\beta^*\|= O(\sigma\eps_c\log(1/\eps_c))
$$
\end{theorem}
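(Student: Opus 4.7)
\textbf{Proof plan for Theorem~\ref{thm:gaussian-sample}.}

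The plan is to reduce the sample-corruption setting to the label-corruption setting already handled by Theorem~\ref{thm:gaussian-label}, using the covariate filter as a preprocessing step in the spirit of~\cite{pensia2020robust}. First, I would use the whitening step in Algorithm~\ref{alg:alt-sample2} to assume without loss of generality that $\Sigma = I_d$: after multiplying each $\x_i$ by $\Sigma^{-1/2}$, the clean covariates are isotropic sub-Gaussian, the regression target becomes $\Sigma^{1/2}\beta^{*}$, and the $\ell_2$ recovery guarantee on the whitened target is exactly what Theorem~\ref{thm:gaussian-sample} asks for since $\Sigma = I_d$ is the assumption in the main text (and the general case only rescales by $\Sigma^{1/2}$). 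Second, I would invoke the guarantee of the filtering algorithm of~\cite{dong2019quantum}: with high probability it returns a (re-weighted/truncated) sample $S'$ of size at least $(1-O(\eps_c))n$ that still contains all but $O(\eps_c)n$ of the original clean points, and on which the empirical covariance of the covariates satisfies $\|\frac{1}{|S'|}\sum_{(\x,y)\in S'} \x\x^{\top} - I_d\|_{\mathrm{op}} = O(\eps_c \log(1/\eps_c))$. In particular, the surviving adversarial covariates can no longer have large norm along any direction without being detected.

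Third, I would argue that the output of the filter satisfies the same resilience conditions as an uncorrupted sub-Gaussian sample, but in a restricted form: for every subset $R\subset S'$ with $|R|\ge (1-O(\eps_c))|S'|$, and every unit vector $\v$, both $\frac{1}{n}\sum_{i\in R} (\v^{\top}\x_i)^2$ and the corresponding truncated first-moment quantities match the i.i.d.\ case up to a factor $1 + O(\eps_c\log(1/\eps_c))$. This is what the proof of Theorem~\ref{thm:gaussian-label} really uses about the covariates in both the ``lower bound on the LHS'' step (to get strong convexity of $\frac{1}{n}\sum_{i\in \hat S\cap T} b(\beta^{\top}\x_i) = \frac{1}{2n}\sum (\beta^{\top}\x_i)^2$) and in the Cauchy–Schwarz upper bound on the RHS (to bound $\frac{1}{n}\sum_{i\in \hat S\cap E}((\beta^{*}-\hat\beta)^{\top}\x_i)^2$). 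Because filtering provides these bounds uniformly over all subsets of $S'$, one can treat the post-filter dataset as if it were a clean random design, with a relabelled $T' = S'\cap T$ of uncorrupted points and a relabelled $E'$ of size at most $O(\eps_c)|S'|$ of surviving adversarial points, and run the label-corruption analysis verbatim.

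Finally, I would check that the algorithmic guarantee in Theorem~\ref{thm:gaussian-label} transfers: Algorithm~\ref{alg:alt} is applied to $S'$ with the same parameter $\eps = \eps_c$, so the termination in $O(1/(\min(1,\sigma^2)\eps_c^2))$ iterations and the approximate-stationarity conclusion are unchanged (the Gaussian negative log-likelihood is still smooth, no matter what $\x_i$'s survived filtering), and the ``approximate stationary point is close to $\beta^{*}$'' lemma plugged in with the post-filter resilience bounds yields $\|\hat\beta - \beta^{*}\| = O(\sigma\eps_c \log(1/\eps_c))$.

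\textbf{Main obstacle.} The delicate step is not the algorithmic one but the bookkeeping around the filter. The adversary controls both $\x_i$ and $y_i$ jointly, so after filtering the surviving adversarial labels $y_i$ and their covariates are correlated in an arbitrary way; the resilience statement on $\frac{1}{n}\sum_{i\in \hat S\cap E}(y_i - \langle \hat\beta,\x_i\rangle)^2$ that was immediate in the label-corruption proof (because the adversarial $\x_i$ were still drawn i.i.d.) now has to be recovered from the operator-norm guarantee the filter provides, combined with the fact that $\hat S$ is selected to minimise the trimmed least-squares loss. Getting a tight $\sqrt{\eps_c\log(1/\eps_c)}$ bound here, rather than the looser $\sqrt{\eps_c}$ bound implicit in~\cite{pensia2020robust}, is what distinguishes this theorem from prior work and is where the proof needs the most care.
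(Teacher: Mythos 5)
Your plan is essentially the paper's proof: whiten with $\Sigma^{-1/2}$, run the filter of~\cite{dong2019quantum} so that the surviving covariates satisfy $\|\frac{1}{n}\sum_{i\in S'}\x_i\x_i^\top - I\| = O(\eps_c\log(1/\eps_c))$, deduce (this is the paper's Corollary~\ref{prop:corrupt}, ``a corrupted sample with small covariance is also resilient'') that the surviving adversarial covariates satisfy $\|\frac{1}{n}\sum_{i\in \hat S\cap E}\x_i\x_i^\top\| = O(\eps_c\log(1/\eps_c))$, and then rerun the label-corruption argument of Lemma~\ref{lemma:gaussian-stationary} verbatim, exactly as the paper does in Section~\ref{sec:sample-corruption-model-appendix}; the termination count also transfers as you say, since smoothness of the Gaussian negative log-likelihood does not depend on which points survived.

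One correction to your ``main obstacle'' paragraph: you have the two Cauchy--Schwarz factors reversed. The bound on $\frac{1}{n}\sum_{i\in\hat S\cap E}(y_i-\hat\beta^\top\x_i)^2$ was never ``immediate because the adversarial $\x_i$ were still drawn i.i.d.'' In the label-corruption proof it comes from the optimality of $\hat S$, namely $\frac{1}{n}\sum_{i\in\hat S}(y_i-\hat\beta^\top\x_i)^2 \le \frac{1}{n}\sum_{i\in T}(y_i-\hat\beta^\top\x_i)^2$, combined with resilience of the \emph{clean} residuals $\eta_i+(\beta^*-\hat\beta)^\top\x_i$ on $T\setminus\hat S$ -- an argument that transfers unchanged to the sample-corruption setting because it only touches uncorrupted points, and in particular the arbitrary correlation between surviving adversarial $y_i$ and $\x_i$ is harmless (the label-corruption analysis never used any independence for adversarial pairs, whose labels were already arbitrary). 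The factor that genuinely breaks, and the \emph{only} place the filter's operator-norm guarantee enters, is $\frac{1}{n}\sum_{i\in\hat S\cap E}((\beta^*-\hat\beta)^\top\x_i)^2 \le \|\frac{1}{n}\sum_{i\in\hat S\cap E}\x_i\x_i^\top\|\cdot\|\beta^*-\hat\beta\|^2$ -- which you do correctly earmark for the filter in your third paragraph. Relatedly, the improvement to $\sigma\eps_c\log(1/\eps_c)$ over the $\sigma\sqrt{\eps_c}$ of~\cite{pensia2020robust} comes from the sharper stationary-point analysis already present in the label-corruption theorem, not from extra care in the filtering step: as the paper notes, in the Gaussian case Algorithm~\ref{alg:alt-sample2} is identical to Algorithm 3 of~\cite{pensia2020robust}, and only the analysis differs.
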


\begin{theorem}[Poisson regression with sample corruption]\label{thm:poisson-sample}
Given a set of datapoints $S = \{\x_i, y_i\}_{i=1}^n$ generated by a Poisson regression model with $\eps_c$-fraction of label corruption and $n = \Omega(\frac{d}{\eps^2})$. With probability $0.99$, Algorithm~\ref{alg:alt-sample2} with parameters $\eps = 2\eps_c, \eta = \eps_c^2/(dn), R\ge \|\beta^*\|$ terminate within $dn/\eps_c^2$ iterations, and output an estimate $\hat\beta$ such that
$$
\|\hat\beta-\beta^*\|= O(\eps_c\exp(\sqrt{\log(1/\eps_c)}))
$$
\end{theorem}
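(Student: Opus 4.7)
The plan is to reduce Theorem~\ref{thm:poisson-sample} to the label-corruption analysis of Theorem~\ref{thm:poisson-main} using the preprocessing strategy of~\cite{pensia2020robust}. The whitening step in Algorithm~\ref{alg:alt-sample2} reduces the problem to the identity-covariance setting (since $\Sigma$ is known), after which the filtering algorithm of~\cite{dong2019quantum} is invoked on the $\eps$-corrupted covariates. The point of this preprocessing is that, upon output of the filter, the surviving adversarial covariates can no longer be arbitrarily placed: the filtered set $S'$ satisfies the same covariate resilience / stability bounds that an uncorrupted sub-Gaussian sample would satisfy, up to an $O(\eps \log(1/\eps))$ perturbation. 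In effect, the covariates in the adversarial set $E$ look ``clean enough'' that the remaining freedom of the adversary is essentially only in the labels.

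First, I would state precisely which guarantees the filter provides: writing the filter output as $\{(\x'_i, y_i)\}$, the clean points in $T$ are almost entirely retained, and on the retained set the covariate empirical second moment is $I + O(\eps\log(1/\eps))$ in operator norm, while $\frac{1}{n}\sum_{i \in E \cap S'} \x'_i {\x'_i}^\top \preceq O(\eps \log(1/\eps)) \cdot I$. These are exactly the ingredients used in the label-corruption proof. I would then re-run the argument of Lemma~\ref{lemma:poisson-stationary-main} verbatim on $S'$: the LHS lower bound relies only on (i) mean resilience of $(y_i - \exp({\beta^*}^\top\x_i))\x_i$ over subsets of $T$, which is unaffected, and (ii) the strong-convexity estimate $\frac{1}{n}\sum_{i \in \hat S \cap T} \exp(\beta^\top \x_i) \x_i \x_i^\top \succeq e^{-O(\max(\|\beta\|,1))} I$, which again depends only on the unchanged set $T$ and the resilience of $T$'s covariates.

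The RHS is where the sample corruption matters. For the two-step upper bound, the log-likelihood control on $T \setminus \hat S$ transfers unchanged because it uses only the uncorrupted $\{(\x_i,y_i)\}_{i\in T}$. Turning the likelihood bound into a quadratic bound gives the same $\sqrt{\frac{1}{n}\sum_{i\in \hat S\cap E}(y_i - \exp(\hat\beta^\top\x'_i))^2} = \eps^{1/2}\exp(\Theta(\sqrt{\log(1/\eps)}))$ estimate as in the label case. The remaining factor $\sqrt{\frac{1}{n}\sum_{i \in \hat S \cap E}((\beta^*-\hat\beta)^\top \x'_i)^2}$ is precisely the quantity bounded by the filter guarantee: it is at most $\sqrt{O(\eps\log(1/\eps))}\,\|\beta^*-\hat\beta\|$, matching the clean-covariate bound up to constants. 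Combining LHS and RHS via Cauchy--Schwarz yields $\|\hat\beta - \beta^*\| = O(\eps_c \exp(\sqrt{\log(1/\eps_c)}))$. Finally, Lemma~\ref{lemma:alt-poisson-main} carries over because its proof uses only bounds on $b''(\cdot)$ at the clean data and a likelihood-improvement argument that is oblivious to where the covariates came from; this gives the approximate-stationary property of the output and the $dn/\eps_c^2$ iteration bound.

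The main obstacle is step (ii) above: verifying that the filtering algorithm's guarantees in~\cite{dong2019quantum}, which are stated for robust mean/covariance estimation of a single sub-Gaussian distribution, translate into the joint resilience properties required by the Poisson-specific proof, in particular a uniform bound of the form $\frac{1}{n}\sum_{i \in \hat S \cap E} ((\v)^\top \x'_i)^2 \le O(\eps \log(1/\eps)) \|\v\|^2$ for every direction $\v$. This direction-uniform control on the worst-case adversarial subset that survives filtering is the crucial analytic input, and once it is in place, the rest of the proof is a transcription of Lemma~\ref{lemma:poisson-stationary-main} and Lemma~\ref{lemma:alt-poisson-main}, followed by the same combination step used in Theorem~\ref{thm:poisson-main}.
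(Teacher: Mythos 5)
Your proposal is correct and follows essentially the same route as the paper: Algorithm~\ref{alg:alt-sample2} whitens, runs the filter of~\cite{dong2019quantum} as in~\cite{pensia2020robust}, and the paper then reuses the label-corruption analysis wholesale, with the only change being that $\frac{1}{n}\sum_{i\in \hat S\cap E}((\beta^*-\hat\beta)^\top\x_i)^2$ is now controlled by the filter's covariance guarantee rather than by resilience of uncorrupted sub-Gaussian samples. The ``main obstacle'' you flag --- the direction-uniform bound $\frac{1}{n}\sum_{i\in \hat S\cap E}(\v^\top\x_i)^2\le O(\eps\log(1/\eps))\|\v\|^2$ --- is exactly what the paper supplies via Corollary~\ref{prop:corrupt}: the filter output satisfies $\|\frac{1}{n}\sum_{i\in \hat S}\x_i\x_i^\top - I\|=O(\eps\log(1/\eps))$, and subtracting the resilient clean part $\hat S\cap T$ yields $\|\frac{1}{n}\sum_{i\in \hat S\cap E}\x_i\x_i^\top\|=O(\eps\log(1/\eps))$, after which the proof of Lemma~\ref{lemma:poisson-stationary-main} and the stationarity/iteration argument of Lemma~\ref{lemma:alt-poisson-main} go through as you describe.
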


\begin{theorem}[Binomial regression with sample corruption]\label{thm:binomial-sample-appendix}
Given a set of datapoints $S = \{\x_i, y_i\}_{i=1}^n$ generated by a Binomial regression model with $\eps_c$-fraction of sample corruption and $n = \Omega(\frac{d+\log(1/\delta)}{\eps^2})$. With probability $1-\delta$, Algorithm~\ref{alg:alt-sample2} with parameters $\eps = \eps_c, \eta = \eps_c^2, R\ge \|\beta^*\|$ terminate within $m/\eps_c^2$ iterations, and output an estimate $\hat\beta$ such that
$$
\|\hat\beta-\beta^*\|= {O}(\eps_c\sqrt{\frac{\log(m/\eps_c)\log(1/\eps_c)}{{m}}})
$$
\end{theorem}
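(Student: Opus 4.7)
The plan is to reduce Theorem~\ref{thm:binomial-sample-appendix} to the label-corruption result Theorem~\ref{thm:binomial-label}. After the whitening step in line 1 of Algorithm~\ref{alg:alt-sample2}, the clean covariates are i.i.d.\ sub-Gaussian with identity covariance, and estimating $\Sigma^{1/2}\beta^*$ in $\ell_2$ on the whitened data is equivalent to estimating $\beta^*$ in the original problem. It then suffices to show that after the filtering preprocessing, the label-corruption analysis of Theorem~\ref{thm:binomial-label} still applies to the filtered dataset $S'$, at the cost of only a constant blow-up in the effective corruption level.

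The filtering algorithm of Dong et al., as invoked by Pensia et al., produces $S'$ with two structural properties that I will rely on: (i) the clean points that survive the filter (the set $S'\cap T$) still satisfy the sub-Gaussian resilience conditions used in the proof of Theorem~\ref{thm:binomial-label} (namely $(\eps,\eps\sqrt{\log(1/\eps)})$-resilience of the sample mean in any direction, near-isotropy $\frac{1}{n}\sum_{i\in S'\cap T}\x_i\x_i^\top \succeq (1-O(\eps\log(1/\eps)))I$, and the analogous resilience of the empirical negative log-likelihood on the clean part); and (ii) the surviving adversarial covariates satisfy the uniform second-moment bound $\sup_{\|v\|=1}\frac{1}{n}\sum_{i\in S'\cap E}(v^\top\x_i)^2 = O(\log(1/\eps))$, matching what clean sub-Gaussian samples would give.

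With (i) and (ii) in hand, the proof of Theorem~\ref{thm:binomial-label} ports over essentially unchanged. The LHS lower bound of the Section~\ref{sec:main} sketch uses only clean-sample resilience, hence only (i). The RHS upper bound uses the two-step approach outlined there: first, the optimality of $\hat S$ combined with resilience of the negative log-likelihood on the clean set gives an upper bound on $-\log f(y_i|\hat\beta^\top\x_i)$ for $i\in\hat S\cap E$; second, a proxy-function argument specific to the Binomial log-likelihood converts this into a bound on $\frac{1}{n}\sum_{i\in\hat S\cap E}(y_i-b'(\hat\beta^\top\x_i))^2$. Cauchy--Schwarz against property (ii) applied to $v=(\beta^*-\hat\beta)/\|\beta^*-\hat\beta\|$ then yields the desired RHS bound in terms of $\|\beta^*-\hat\beta\|$, and combining LHS with RHS gives the claimed $O(\eps_c\sqrt{\log(m/\eps_c)\log(1/\eps_c)/m})$ rate. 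Approximate stationarity of Algorithm~\ref{alg:alt}'s output on $S'$ follows as in the smooth-objective case since $b''(\theta) = m\exp(\theta)/(1+\exp(\theta))^2 \le m/4$ is uniformly bounded, and the iteration count of $m/\eps_c^2$ follows from $\eta=\eps_c^2$ together with the $O(m)$ range of the per-sample log-likelihood over $\|\hat\beta\|\le R$.

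The main obstacle is establishing property (ii): the filter's termination condition directly guarantees only that the overall reweighted covariance of $S'$ is close to identity, not a separate bound on the adversarial contribution. The standard workaround, carried out in Pensia et al., is that if $\frac{1}{n}\sum_{i\in S'\cap E}(v^\top\x_i)^2$ were large in some direction $v$, the reweighted empirical covariance would move far from $I$ along $v$ once combined with the clean-part lower bound from (i), contradicting the filter's termination guarantee; I will import their lemma directly. Once (ii) is in place, the rest of the argument is a direct rerun of the proof of Theorem~\ref{thm:binomial-label}.
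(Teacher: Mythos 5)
Your proposal is correct and follows essentially the same route as the paper: Algorithm~\ref{alg:alt-sample2} whitens, runs the filter of Dong et al., and the paper's Corollary~\ref{prop:corrupt} is exactly your property (ii), obtained by the same contradiction you describe (near-isotropy of the surviving clean part plus the filter's whole-set covariance guarantee force $\|\frac{1}{n}\sum_{i\in \hat{S}\cap E}\x_i\x_i^\top\| = O(\eps\log(1/\eps))$), after which the label-corruption proof of Theorem~\ref{thm:binomial-label} is rerun unchanged. One small correction: your property (ii) as written, $\sup_{\|v\|=1}\frac{1}{n}\sum_{i\in S'\cap E}(v^\top\x_i)^2 = O(\log(1/\eps))$, mixes normalizations --- with the $\frac{1}{n}$ prefactor the bound you need (and which the imported lemma actually gives) is $O(\eps\log(1/\eps))$, and the $O(\log(1/\eps))$ version would only survive Cauchy--Schwarz to yield an $O(\sqrt{\eps})$-type final rate rather than the claimed $O\bigl(\eps_c\sqrt{\log(m/\eps_c)\log(1/\eps_c)/m}\bigr)$.
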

\begin{theorem}[A class of generalized linear model with sample corruption]\label{thm:glm-sample}
Let $S = \{\x_i, y_i\}_{i=1}^n$ be generated by a generalized linear model with sub-Gaussian Design, with $\eps_c$-fraction of sample corruption and $n = \Omega(\frac{d+\log(1/\delta)}{\eps^2})$. Assuming that $C_0\le b''(\cdot)\le C$ for non-zero constants $C_0, C$, $b(0)=0, b'(0)=0$, and $\log(c(y))=O(\log(1/\eps_c)), \forall y\le \Theta(\sqrt{\log(1/\eps_c)})$ With probability $1-\delta$, Algorithm~\ref{alg:alt-sample2} with parameters $\eps = \eps_c, \eta = \eps_c^2, R=\infty$ terminate within $\log(1/\eps_c)/\eps_c^2$ iterations, and output an estimate $\hat\beta$ such that
$$
\|\hat\beta-\beta^*\|= {O}(\eps_c\log(1/\eps_c))
$$
\end{theorem}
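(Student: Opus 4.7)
The plan is to reduce Theorem~\ref{thm:glm-sample} to Theorem~\ref{thm:glm-label} via the covariate-filtering preprocessing, following the approach of~\cite{pensia2020robust}. After whitening by $\Sigma^{-1/2}$, the covariates have identity population covariance, so it suffices to prove the bound for $\Sigma = I$. The filtering subroutine of~\cite{dong2019quantum} then assigns weights $w_i \in [0,1]$ to the whitened covariates such that the weighted dataset $\{w_i \x_i\}$ satisfies a sub-Gaussian-style resilience condition even though an $\eps_c$-fraction of the raw $\x_i$ have been adversarially corrupted. The output of filtering is then passed to Algorithm~\ref{alg:alt}, and the bulk of the work is to re-run the proof of Theorem~\ref{thm:glm-label} on this filtered dataset.

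First I would quote/derive the quantitative guarantee produced by filtering: after its single preprocessing pass we obtain an ``effective-good'' index set $T$ of size at least $(1 - O(\eps_c))n$ on which (i) the $(\x_i, y_i)$ pairs are distributed according to the original GLM model with sub-Gaussian $\x_i$, and (ii) for every unit vector $\v$ and every subset $T' \subseteq T$ of size at least $(1 - O(\eps_c))|T|$, the quantities $\frac{1}{n}\sum_{i\in T'}\v^\top\x_i$, $\frac{1}{n}\sum_{i\in T'}(\v^\top\x_i)^2$, and $\frac{1}{n}\sum_{i\in T'}(y_i - b'({\beta^*}^\top \x_i))\v^\top\x_i$ all satisfy the $(\eps,\eps\sqrt{\log(1/\eps)})$-resilience bounds used in the label-corruption proof. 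This is precisely the set of resilience conditions that Section~\ref{sec:main} invokes, so provided these transfer, the entire chain of inequalities from Theorem~\ref{thm:glm-label} goes through.

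Second, I would replay the two-part argument of Theorem~\ref{thm:glm-label} on the filtered data. The lower bound on the LHS,
$\frac{1}{n}\sum_{i\in \hat S \cap T}(y_i - b'(\hat\beta^\top \x_i))(\beta^* - \hat\beta)^\top\x_i \;\ge\; \Omega(\|\beta^*-\hat\beta\|^2) - O(\eps_c\sqrt{\log(1/\eps_c)})\|\beta^*-\hat\beta\|$,
follows from restored covariate resilience together with the strong convexity implied by $C_0 \le b''(\cdot) \le C$. The upper bound on the RHS is handled exactly as in the label-corruption case: the trimmed-likelihood optimality transfers the negative log-likelihood bound from $T\setminus \hat S$ to $\hat S \cap E$, the hypothesis $\log c(y) = O(\log(1/\eps_c))$ for $y \le \Theta(\sqrt{\log(1/\eps_c)})$ together with $b''(\cdot) \le C$ converts the log-likelihood bound into a squared-error bound on $y_i - b'(\hat\beta^\top\x_i)$, and the filtered covariate resilience bounds $\sqrt{\frac{1}{n}\sum_{i\in \hat S \cap E}((\beta^*-\hat\beta)^\top\x_i)^2} = O(\sqrt{\eps_c\log(1/\eps_c)})\,\|\beta^*-\hat\beta\|$. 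Combining via Cauchy--Schwarz and rearranging yields $\|\hat\beta - \beta^*\| = O(\eps_c \log(1/\eps_c))$, and the iteration count follows as in Theorem~\ref{thm:glm-label} from the bounded smoothness of the log-likelihood.

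The main obstacle will be carefully handling the weighting introduced by filtering: the output of~\cite{dong2019quantum} is a weighted rather than hard-thresholded dataset, so one must argue that the resilience-based inequalities behave correctly under weights $w_i \in [0,1]$, and in particular that passing from weighted sums $\sum_i w_i \mathbf{1}[i \in A] \,(\cdot)$ back to the plain sums $\sum_{i \in A}(\cdot)$ used by Algorithm~\ref{alg:alt} only costs a factor of $1+O(\eps_c)$ on the resilience radius. A secondary subtlety is the mismatch between the fraction of corruptions assumed by Algorithm~\ref{alg:alt} and the residual fraction of outliers surviving filtering, which is handled by running Algorithm~\ref{alg:alt} with parameter $\eps = \eps_c$ as stated and absorbing the extra $O(\eps_c)$ filtering residue into the resilience constants; once this bookkeeping is settled, no new mechanism beyond Theorem~\ref{thm:glm-label} is needed.
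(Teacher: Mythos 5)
Your proposal is correct and follows essentially the same route as the paper: whiten, run the filtering algorithm of~\cite{dong2019quantum} as preprocessing, and replay the label-corruption proof of Theorem~\ref{thm:glm-label}, where the one genuinely new step is exactly your ``filtered covariate resilience'' claim --- the paper obtains it (Corollary~\ref{prop:corrupt}) by combining the filter's guarantee $\|\frac{1}{n}\sum_{i\in \hat{S}}\x_i\x_i^\top - I\| = O(\eps_c\log(1/\eps_c))$ with resilience of the surviving good points to conclude $\|\frac{1}{n}\sum_{i\in \hat{S}\cap E}\x_i\x_i^\top\| = O(\eps_c\log(1/\eps_c))$, restoring the Cauchy--Schwarz bound on $\hat{S}\cap E$. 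The weighting bookkeeping you flag as the main obstacle is sidestepped in the paper, which invokes the set-valued (hard-thresholding) form of the filter's output rather than soft weights $w_i\in[0,1]$.
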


The proof is the same compared to the label corruption setting except that since $\x_i, i\in E$ is now controlled by the adversary, we can no longer bound $\sqrt{\frac{1}{n}\sum_{i\in \hat S \cap E} ((\beta^*-\hat\beta)^\top\x_i)^2}$ by the resilience property of (uncorrupted) sub-Gaussian samples. Instead, we will leverage the fact that corrupted sample with small covariance is also resilient.



\section{Conclusion}\label{sec:conclusion}
In this paper, we provided a general theoretical analysis showing that a simple and practical heuristic namely the iterative trimmed MLE estimator achieves minimax optimal error rates upto a logarithmic factor under adversarial corruptions for a wide class of generalized linear models (GLMs). 
It would also be interesting to study whether our techniques can be extended to design robust algorithms for more general exponential families beyond GLMs.

\bibliographystyle{alpha}
\bibliography{references}

\newpage

\appendix

\section{Label Corruption Proofs}
\label{app:label-corruption}

\subsection{Gaussian}

\begin{proof}[Proof of Theorem~\ref{thm:gaussian-label}]
Applying Lemma~\ref{lemma:alt-smooth} with $\eta = \eps^2$, $C=1/\sigma^2$ implies the output of Algorithm~\ref{alg:alt} is a $O(\eps/\sigma)$ approximate stationary point. Lemma~\ref{lemma:gaussian-stationary} then implies that
$\|\hat\beta-\beta^*\|= O(\sigma\eps\log(1/\eps))$. To bound the number of iterations, we need an upper bound on the negative log-likelihood on $\beta = 0$, and a uniform lower bound on the negative log-likelihood. The initial negative log-likelihood is upper bounded by $\frac{1}{n}\sum_{i\in \hat{S}^{(1)}}y_i^2/\sigma^2$ where $S^{(1)}$ contains the smallest $(1-\eps)n$ labels, which is bounded by $O(\max(1, 1/\sigma^2))$. Trivially, there is a $0$ lower bound on the negative log-likelihood for Gaussian. Therefore, the algorithm will terminate in $\frac{1}{\min(1, \sigma^2)\eps_c^2}$ iterations.
\end{proof}
\begin{lemma}[Approximate stationary point close to $\beta^*$ for Gaussian regression]\label{lemma:gaussian-stationary}
Given a set of datapoints $S = \{\x_i, y_i\}_{i=1}^n$ generated by a Gaussian regression model with $\eps$-fraction of label corruption.
Let $\hat{\beta}$ be a $\eps/\sigma$-stationary point defined in Definition~\ref{def:approx-sp}. Given that $n=\Omega(\frac{d+\log(1/\delta)}{\eps^2})$, with probability $1-\delta$, it holds that
$$
\|\hat\beta-\beta^*\| = O(\sigma\eps\log(1/\eps))
$$
\end{lemma}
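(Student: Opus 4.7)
The plan is to specialize the general two-sided argument sketched in Section~\ref{sec:main} to Gaussian regression, exploiting the fact that the negative log-likelihood is literally a quadratic in $y_i - \hat\beta^\top\x_i$. Starting from the $\eps/\sigma$-approximate stationarity condition, clearing the $1/\sigma^2$ factor in the Gaussian score, and splitting $\hat{S} = (\hat S \cap T) \cup (\hat S \cap E)$ yields
\[
\frac{1}{n}\sum_{i\in \hat S \cap T}(y_i-\hat\beta^\top\x_i)(\beta^*-\hat\beta)^\top\x_i \;\le\; -\frac{1}{n}\sum_{i\in \hat S \cap E}(y_i-\hat\beta^\top\x_i)(\beta^*-\hat\beta)^\top\x_i \;+\; \sigma\eps\|\beta^*-\hat\beta\|.
\]
The aim is a quadratic lower bound on the LHS and a linear upper bound on the RHS (in $\|\beta^*-\hat\beta\|$), which together control the error.

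For the LHS, I would substitute $y_i - \hat\beta^\top\x_i = \eta_i + (\beta^*-\hat\beta)^\top\x_i$ to separate a noise cross-term from a pure quadratic form. The quadratic form $\frac{1}{n}\sum_{\hat S \cap T}((\beta^*-\hat\beta)^\top\x_i)^2$ is bounded below by $\Omega(\|\beta^*-\hat\beta\|^2)$ using the sub-Gaussian resilience of the empirical covariance on any $(1-O(\eps))$-fraction, which holds uniformly in direction once $n = \Omega((d+\log(1/\delta))/\eps^2)$. The noise cross-term $\frac{1}{n}\sum_{\hat S \cap T}\eta_i(\beta^*-\hat\beta)^\top\x_i$ is a mean-zero sub-exponential average with Orlicz parameter $O(\sigma\|\beta^*-\hat\beta\|)$, so sub-exponential mean resilience bounds its magnitude over any $(1-O(\eps))$-subsample by $O(\sigma\|\beta^*-\hat\beta\|\,\eps\log(1/\eps))$.

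For the RHS I would apply Cauchy--Schwarz and bound the two factors separately. The factor $\frac{1}{n}\sum_{\hat S \cap E}((\beta^*-\hat\beta)^\top\x_i)^2$ is a sum over at most $\eps n$ \emph{clean} covariates, and upper resilience of squared sub-Gaussians gives $O(\eps\log(1/\eps)\|\beta^*-\hat\beta\|^2)$. The factor $\frac{1}{n}\sum_{\hat S \cap E}(y_i-\hat\beta^\top\x_i)^2$ is where the trimming enters: because $\hat S$ minimizes the sum of Gaussian NLLs and the NLL is monotone in $(y_i - \hat\beta^\top\x_i)^2$, a swap argument (valid since $|T \setminus \hat S| \ge |\hat S \cap E|$) gives $\sum_{\hat S \cap E}(y_i-\hat\beta^\top\x_i)^2 \le \sum_{T \setminus \hat S}(y_i-\hat\beta^\top\x_i)^2$. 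The right-hand side is a sum of $(\eta_i + (\beta^*-\hat\beta)^\top\x_i)^2$ over $O(\eps n)$ clean points, which is sub-exponential with parameter $O((\sigma+\|\beta^*-\hat\beta\|)^2)$, and its top-$\eps$ resilience yields $O(\eps\log(1/\eps)(\sigma^2+\|\beta^*-\hat\beta\|^2))\,n$. This is the step where Gaussianity makes the argument short: no proxy function is needed to convert a likelihood bound into a squared-error bound, unlike in the Poisson and Binomial cases.

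Putting the two sides together and dividing by $\|\beta^*-\hat\beta\|$ gives a bound of the form
\[
\Omega(1)\cdot\|\beta^*-\hat\beta\| \;\le\; O(\sigma\eps\log(1/\eps)) + O(\eps\log(1/\eps))\cdot\|\beta^*-\hat\beta\|,
\]
and for sufficiently small $\eps$ the second right-hand term is absorbed into the left, yielding $\|\beta^*-\hat\beta\| = O(\sigma\eps\log(1/\eps))$. The main technical obstacle I anticipate is ensuring that the several resilience statements above hold simultaneously and uniformly over the data-dependent set $\hat S$ and direction $\beta^* - \hat\beta$; this is standard via a covering / $\eps$-net argument and is precisely what the $n = \Omega((d+\log(1/\delta))/\eps^2)$ sample bound pays for.
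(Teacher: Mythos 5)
Your proposal is correct and follows essentially the same route as the paper's own proof of Lemma~\ref{lemma:gaussian-stationary}: the identical split of the stationarity condition over $\hat S\cap T$ and $\hat S\cap E$, the substitution $y_i-\hat\beta^\top\x_i=\eta_i+(\beta^*-\hat\beta)^\top\x_i$ with sub-Gaussian/sub-exponential resilience for the lower bound, and Cauchy--Schwarz plus the optimality-of-$\hat S$ swap (the paper writes it as $\sum_{i\in\hat S}(y_i-\hat\beta^\top\x_i)^2\le\sum_{i\in T}(y_i-\hat\beta^\top\x_i)^2$, which is equivalent to your $\hat S\cap E$ versus $T\setminus\hat S$ comparison) for the upper bound. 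Your observation that Gaussianity makes the likelihood-to-squared-error conversion trivial, unlike the Poisson/Binomial proxy-function arguments, matches the paper's own discussion as well.
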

\begin{proof}
Let $\hat{S}$ be the set defined in Definition~\ref{def:approx-sp}. 
The first order stationary property guarantees
$$
\frac{1}{n}\sum_{i\in \hat{S}}\frac{1}{\sigma^2}(y_i-\hat\beta^\top\x_i)(\beta^*-\hat\beta)^\top\x_i \le \frac{\eps}{\sigma}\|\beta^*-\hat\beta\|
$$
Denote $T=G\setminus L$ as the uncorrupted set of data points. Then we get
\begin{align}
\frac{1}{n}\sum_{i\in \hat{S}\cap T}(y_i-\hat\beta^\top\x_i)(\beta^*-\hat\beta)^\top\x_i \le -\frac{1}{n}\sum_{i\in \hat{S}\cap E}(y_i-\hat\beta^\top\x_i)(\beta^*-\hat\beta)^\top\x_i + \sigma\eps\|\beta^*-\hat\beta\|\label{eqn:gaussian-main}
\end{align}
\textbf{Lower bound on the LHS}

We will first establish a lower bound on the LHS of Equation~\ref{eqn:gaussian-main}, which contains terms from $\hat{S}\cap T$. Note that
\begin{align}
&\frac{1}{n}\sum_{i\in \hat{S}\cap T}(y_i-\hat\beta^\top\x_i)(\beta^*-\hat\beta)^\top\x_i\nonumber\\
=& \frac{1}{n}\sum_{i\in \hat{S}\cap T}(\eta_i+(\beta^*-\hat\beta)^\top\x_i)(\beta^*-\hat\beta)^\top\x_i\nonumber\\
=& \frac{1}{n}\sum_{i\in \hat{S}\cap T}\eta_i(\beta^*-\hat\beta)^\top\x_i +((\beta^*-\hat\beta)^\top\x_i)^2\nonumber\\
\stackrel{\text{resilience}}{\gtrsim}& -\sigma\eps\log(1/\eps)\|\beta^*-\hat\beta\| + \|\beta^*-\hat\beta\|^2(1-\eps\log(1/\eps))\label{eqn:ScapT},
\end{align}
where we have leveraged the following resilience property in Equation~\ref{eqn:ScapT}
\begin{align*}
\|\frac{1}{n}\sum_{i\in \hat{S}\cap T}\eta_i\x_i\|\lesssim \sigma\eps\log(1/\eps)\\
\|\frac{1}{n}\sum_{i\in \hat{S}\cap T}\x_i\x_i^\top-I\|\lesssim \eps\log(1/\eps)
\end{align*}

\textbf{Upper bound on the RHS}

Then we establish an upper bound on the RHS of Equation~\ref{eqn:gaussian-main}, which contains terms from $\hat{S}\cap E$
\begin{align*}
&-\frac{1}{n}\sum_{i\in \hat{S}\cap E}(y_i-\hat\beta^\top\x_i)(\beta^*-\hat\beta)^\top\x_i\\
\stackrel{\text{Cauchy-Schwarz}}{\lesssim}& \left(\frac{1}{n}\sum_{i\in \hat{S}\cap E}(y_i-\hat\beta^\top\x_i)^2 \right)^{1/2} \left(\frac{1}{n}\sum_{i\in \hat{S}\cap E}((\beta^*-\hat\beta)^\top\x_i)^2\right)^{1/2}\\
\stackrel{\text{resilience}}{\lesssim}& \left(\frac{1}{n}\sum_{i\in \hat{S}\cap E}(y_i-\hat\beta^\top\x_i)^2 \right)^{1/2} \eps^{1/2}\log(1/\eps)^{1/2}\|\beta^*-\hat\beta\|
\end{align*}
Observe that since
\begin{align*}
\left(\frac{1}{n}\sum_{i\in \hat{S}}(y_i-\hat\beta^\top\x_i)^2 \right)\stackrel{\text{optimality of }\hat S}{\le} \left(\frac{1}{n}\sum_{i\in {T}}(y_i-\hat\beta^\top\x_i)^2 \right),
\end{align*}
it holds that
\begin{align*}
\left(\frac{1}{n}\sum_{i\in \hat{S}\cap E}(y_i-\hat\beta^\top\x_i)^2 \right)&\le \left(\frac{1}{n}\sum_{i\in {T}}(y_i-\hat\beta^\top\x_i)^2 \right) - \left(\frac{1}{n}\sum_{i\in \hat{S}\cap T}(y_i-\hat\beta^\top\x_i)^2 \right)\\
&= \left(\frac{1}{n}\sum_{i\in T\setminus \hat S}(y_i-\hat\beta^\top\x_i)^2 \right)\\
&=\left(\frac{1}{n}\sum_{i\in T\setminus \hat S}(\eta_i+(\beta^*-\hat\beta)^\top\x_i)^2 \right)\\
&\stackrel{\text{resilience}}{=}O\left(\sigma^2\eps \log(1/\eps) + \|\beta^*-\hat\beta\|^2\eps \log (1/\eps)\right)
\end{align*}
Combining we get
\begin{align}
\frac{1}{n}\sum_{i\in \hat{S}\cap E}(y_i-\hat\beta^\top\x_i)(\beta^*-\hat\beta)^\top\x_i\le \eps\log(1/\eps)(\sigma\|\beta^*-\hat\beta\|+\|\beta^*-\hat\beta\|^2).\label{eqn:gaussian-RHS}
\end{align}
Plugging in Equation~\ref{eqn:ScapT} and~\ref{eqn:gaussian-RHS} into Equation~\ref{eqn:gaussian-main}, we we have
\begin{align*}
-\sigma\eps\log(1/\eps)\|\beta^*-\hat\beta\| + \|\beta^*-\hat\beta\|^2(1-\eps\log(1/\eps))\\
\lesssim \eps\log(1/\eps)(\sigma\|\beta^*-\hat\beta\|+\|\beta^*-\hat\beta\|^2) + \eps\sigma\|\beta^*-\hat\beta\|\\
\implies \eps\log(1/\eps)(\sigma\|\beta^*-\hat\beta\|+\|\beta^*-\hat\beta\|^2)\gtrsim \|\beta^*-\hat\beta\|^2\\
\implies \|\beta^*-\hat\beta\|\lesssim \sigma\eps\log(1/\eps)
\end{align*}
\end{proof}

\subsection{Poisson}

\begin{lemma}[Resilience condition for Poisson regression]\label{lemma:poisson-resilience}
With probability $0.99$ it holds that for all $Q\subset T$ with $|Q|\ge(1-2\eps)n$,
\begin{align*}
\frac{1}{n}\|\sum_{i\in Q} \left(y_i-\exp({\beta^*}^\top\x_i)\right) \x_i\|\lesssim  \eps\exp(\Theta(\sqrt{\log(1/\eps)}))
\end{align*}
and for all $Q\subset T$ with $Q\le \eps n$,
\begin{align*}
\frac{1}{n}\sum_{i\in Q} y_i\lesssim  \eps\exp(\Theta(\sqrt{\log(1/\eps)}))\\
\frac{1}{n}\|\sum_{i\in Q} y_i\x_i\|\lesssim  \eps\exp(\Theta(\sqrt{\log(1/\eps)}))
\end{align*}
\end{lemma}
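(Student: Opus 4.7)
My plan is to prove the three resilience bounds by first establishing tail estimates for the relevant random variables under the Poisson regression model, then converting those tails into top-$k$ sum bounds via integration of the survival function, and finally (for item~(1)) combining a concentration step for the full sum on $T$ with a resilience bound on the dropped subset. The key preliminary is a marginal tail bound: since $\x_i$ is sub-Gaussian with $\|\beta^*\|\le R$, $\Pr(|{\beta^*}^\top\x_i|>t)\le 2\exp(-t^2/R^2)$, hence $\lambda_i:=\exp({\beta^*}^\top\x_i)$ satisfies $\Pr(\lambda_i>s)\le 2\exp(-(\log s)^2/R^2)$ for $s\ge 1$. Conditioning on $\lambda_i$ and applying a Chernoff bound for Poisson then yields $\Pr(y_i>t)\le\exp(-\Omega(\log^2 t))$ for large $t$, and similarly for $y_i\,|v^\top\x_i|$ for any unit $v$ (since $v^\top\x_i$ is sub-Gaussian). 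The pivotal quantile is $\Pr(y_i>t)=\eps\ \Leftrightarrow\ t=\exp(\Theta(\sqrt{\log(1/\eps)}))$, which I call $t_\eps$.

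Items~(2) and~(3) follow from a standard top-$k$ sum estimate for heavy-tailed random variables. For any $|Q|\le\eps n$, $\sum_{i\in Q}y_i$ is at most the sum of the $\eps n$ largest $y_i$'s, which by integration by parts on the survival function is bounded in expectation by $\eps n\cdot t_\eps+n\int_{t_\eps}^\infty\Pr(y_i>t)\,dt=O(\eps n\cdot t_\eps)$; a Bernstein/Chernoff step converts this expectation estimate into a high-probability statement and yields~(2). For~(3) I apply the same argument to $y_i\,|v^\top\x_i|$ for each unit $v$ on a $1/2$-net of $S^{d-1}$, union bound over the net (of size $5^d$, easily absorbed by $n=\Omega(d/\eps^2)$), and take a supremum.

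For~(1), I split $\sum_{i\in Q}(y_i-\lambda_i)\x_i=\sum_{i\in T}(y_i-\lambda_i)\x_i-\sum_{i\in T\setminus Q}(y_i-\lambda_i)\x_i$. The first piece is a sum of $n$ independent mean-zero vectors whose per-sample covariance obeys $\E[(y_i-\lambda_i)^2\,\x_i\x_i^\top]=\E[\lambda_i\,\x_i\x_i^\top]\preceq O(1)\cdot I$ (by sub-Gaussian moment generating function control and Cauchy--Schwarz against the fourth moment of $v^\top\x_i$). Truncating $y_i$ at $t_\eps$ and using matrix Bernstein on the truncated part gives $O(\sqrt{d/n}\,\mathrm{polylog})=O(\eps\,\mathrm{polylog}(1/\eps))$; the contribution from the untruncated rare events is absorbed by a top-$k$ resilience bound of exactly the shape of~(3). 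The second piece, on $T\setminus Q$ with $|T\setminus Q|\le 2\eps n$, is dominated by the top-$2\eps n$ value of $|(y_i-\lambda_i)v^\top\x_i|$ across $v$, which by the same quantile integration gives $\eps\exp(\Theta(\sqrt{\log(1/\eps)}))$.

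The main obstacle is the heavy-tailed nature of $y_i$ inherited from the exponential link: $\lambda_i$ is only log-normally concentrated, so neither $y_i$ nor $y_i\,v^\top\x_i$ is sub-exponential, and off-the-shelf scalar/matrix Bernstein bounds do not apply out of the box. The resolution is to truncate at the specific level $t_\eps=\exp(\Theta(\sqrt{\log(1/\eps)}))$, so that the truncated part is sub-exponential (and concentrates at the right scale), while the excess tail is swept into the resilience term. This precise coupling between truncation threshold and resilience factor is what produces the $\exp(\Theta(\sqrt{\log(1/\eps)}))$ blow-up in all three bounds and is the technically delicate part of the argument.
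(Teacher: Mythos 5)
Your proposal is correct in substance and reaches the same quantitative bounds, but it takes a genuinely different route from the paper. The paper's proof is moment-based: it shows via Cauchy--Schwarz, the Poisson moment fact $\E[(y-\lambda)^k]\le\max(k^k,\lambda^k)$, and the sub-Gaussian MGF of ${\beta^*}^\top\x$ that each of the three random quantities is $k$-th moment bounded with moment norm $\exp(\Theta(k))$, and then invokes a black-box resilience result for $k$-th-moment-bounded distributions (Corollary G.1 of Zhu et al.), yielding $\exp(Ck)\bigl(\eps^{1-1/k}+\sqrt{d/n}\bigr)$ and optimizing $k=\sqrt{\log(1/\eps)}$. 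Your truncation level $t_\eps=\exp(\Theta(\sqrt{\log(1/\eps)}))$ is exactly dual to this choice of $k$: truncating at the $\eps$-quantile and optimizing the moment index are two parametrizations of the same tradeoff, which is why both arguments produce the factor $\exp(\Theta(\sqrt{\log(1/\eps)}))$. What the paper's route buys is brevity --- the uniform-over-subsets combinatorics and the net arguments are delegated to the cited corollary; what your route buys is a self-contained, elementary derivation that makes the source of the sub-polynomial blow-up transparent. Three spots in your plan need more care than you give them, though none is fatal. First, plain Bernstein plus a union bound over the $5^d$ net does not apply to the untruncated variables (tails $\exp(-c\log^2 t)$ are heavier than sub-exponential); the excess above $t_\eps$ must be controlled by $v$-free scalar quantities, e.g.\ Cauchy--Schwarz against the uniform spectral bound $\sup_{\|v\|=1}\frac1n\sum_i(v^\top\x_i)^2=O(1)$ together with a Markov bound on $\frac1n\sum_i y_i^2\mathbf{1}[y_i>t]$ (Markov suffices since this quantity does not depend on $v$ or $Q$, and only probability $0.99$ is required); this costs only constants inside the $\Theta(\cdot)$ in the exponent. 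Second, the truncation bias $\E[(y-y\wedge t_\eps)\,v^\top\x]$ should be bounded by H\"older with exponent $q\approx\log(1/\eps)$, giving $\eps\exp(\Theta(\sqrt{\log(1/\eps)}))$; naive Cauchy--Schwarz gives the lossy $\sqrt{\eps}\,t_\eps$, which would break the bound. Third, $T$ is an adversarially selected $(1-\eps)n$-subset of the i.i.d.\ sample (the adversary inspects before corrupting), so your ``sum of $n$ independent mean-zero vectors'' step should be applied to the full i.i.d.\ set, with the $\eps n$ removed points absorbed into the same small-subset term you already use for $T\setminus Q$.
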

\begin{proof} $ $\newline
\textbf{Proof of the first statement}\\
We first prove the distribution of
$
\left(y-\exp({\beta^*}^\top\x)\right)\x
$
is $k$-th moment bounded. Note that the $k$-th moment along direction $v$ can be written as

\begin{align*}
&\E[\left(y-\exp({\beta^*}^\top\x)\right)^k (\v^\top\x)^k]\\
\stackrel{\text{Cauchy-Schwarz}}{\le}& \sqrt{\E[\left(y-\exp({\beta^*}^\top\x)\right)^{2k}]}\sqrt{\E[(\v^\top\x)^{2k}]}\\
\end{align*}
Applying the Poisson $k$-th moment bound from Fact~\ref{fact:k-moment-poisson}, and the $k$-th moment bound of sub-Gaussian random variable yields
\begin{align}
\le& \sqrt{\E[\max\{\exp(2k({\beta^*}^\top\x)) , 1\}{2k}^{2k}]} \sqrt{(2k)^{k}}\nonumber\\
\le& \exp(\Theta(k^2\|{\beta^*}\|^2)) k^{\Theta(k)}\label{eqn:poi-gaussian-moment}\\
=& \exp(\Theta(k^2))\nonumber,
\end{align}
where we use the fact that $\E[\exp(\lambda x)]\le \exp(\kappa^2\lambda^2)$ for $\kappa$-sub-Gaussian random variable in Equation~\ref{eqn:poi-gaussian-moment}. Applying Corollary G.1 in~\cite{zhu2019generalized}, we have that with probability $0.99$, $\forall Q\subset T, |Q\ge (1-2\eps)n$,
\begin{align*}
\frac{1}{n}\|\sum_{i\in Q} \left(y_i-\exp(\langle\beta^*,\x_i\rangle)\right) \x_i\|\le  \exp(C k )  (\eps^{1-1/k}+\sqrt{d/n})
\end{align*}
Setting $k = \sqrt{\log(1/\eps)}/$ in the above upper bound yields an upper bound of 
\begin{align*}
\eps\exp(\Theta(\sqrt{\log(1/\eps)}))
\end{align*}

\textbf{Proof of the second statement}\\
Since $\E[y^k]^{1/k} = \exp(\Theta(k))$ 
Applying Corollary G.1 in~\cite{zhu2019generalized} and setting $k = \sqrt{\log(1/\eps)}$, we have that with probability $0.99$, $\forall Q\subset T, |Q\ge (1-\eps)n$,
\begin{align*}
\frac{1}{n}|\sum_{i\in Q} (y_i-\E[y])|\le  \eps\exp(\Theta(\sqrt{\log(1/\eps)}))
\end{align*}

This implies for all $Q\subset T$ with $|Q|\le \eps n$,
\begin{align*}
\frac{1}{n}|\sum_{i\in Q} y_i-\E[y]|\le  \eps\exp(\Theta(\sqrt{\log(1/\eps)}))\\
\stackrel{\E[y]=O(1)}{\implies} \frac{1}{n}|\sum_{i\in Q} y_i|\le  \eps\exp(\Theta(\sqrt{\log(1/\eps)}))
\end{align*}
Setting $k = \sqrt{\log(1/\eps)}/$ in the above upper bound yields an upper bound of $\eps\exp(\Theta(\sqrt{\log(1/\eps)}))$

\textbf{Proof of the third statement}\\
Since we have the same $k$-th moment bound on $y$ and $y-\E[y]$. The bound $\frac{1}{n}\|\sum_{i\in Q} y_i\x_i\|\le  \eps\exp(\Theta(\sqrt{\log(1/\eps)}))$ can be proved similarly.
\end{proof}

\begin{theorem}[Poisson regression with label corruption (Restatement of Theorem~\ref{thm:poisson-main})]\label{thm:poisson-main-restate}
Let $S = \{\x_i, y_i\}_{i=1}^n$ be a set of data points generated by a Poisson regression model with sub-Gaussian design, with $\eps_c$-fraction of label corruption and $n = \Omega(\frac{d}{\eps_c^2})$. With probability $0.99$, Algorithm~\ref{alg:alt} with parameters $\eps = 2\eps_c, \eta = \eps_c^2/(dn), R\ge \|\beta^*\|$ terminate within $dn/\eps_c^2$ iterations, and output an estimate $\hat\beta$ such that
$$
\|\hat\beta-\beta^*\|\lesssim \eps_c\exp(\Theta(\sqrt{\log(1/\eps_c)}))
$$
\end{theorem}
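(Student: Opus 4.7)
The plan is to reduce Theorem~\ref{thm:poisson-main-restate} to two modular lemmas that are already stated in the excerpt: a structural lemma showing that the alternating minimization procedure must terminate at an approximate first-order stationary point of the trimmed log-likelihood, and a geometric lemma showing that any such approximate stationary point must be close to the true parameter $\beta^*$. Once these are in hand, the theorem is essentially an instantiation with the specified parameter choices, plus a bookkeeping argument for the iteration count.

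First I would invoke Lemma~\ref{lemma:alt-poisson-main} with the parameters $\eps = 2\eps_c$, $R \ge \|\beta^*\|$, and $\eta = \eps_c^2/(dn)$. Using the sample size assumption $n = \Omega(d/\eps_c^2)$, the lemma yields that with probability at least $0.99$, the vector $\hat\beta$ output by Algorithm~\ref{alg:alt} is a $\max(\eps,\, 2\eps^2/\|\beta^*-\hat\beta\|)$-approximate stationary point in the sense of the trimmed log-likelihood. Second, I would feed this conclusion into Lemma~\ref{lemma:poisson-stationary-main}, whose hypotheses are exactly this stationarity condition (along with $\|\hat\beta\|\le R$ and the initial pruning of the top $\eps n$ largest-magnitude labels that Algorithm~\ref{alg:alt} already performs in line 1). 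That lemma directly concludes $\|\hat\beta - \beta^*\| = O\bigl(\eps \exp(\Theta(\sqrt{\log(1/\eps)}))\bigr)$, and substituting $\eps = 2\eps_c$ gives the target bound $\|\hat\beta-\beta^*\| = O\bigl(\eps_c\exp(\sqrt{\log(1/\eps_c)})\bigr)$.

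For the iteration-count bound, I would observe that Algorithm~\ref{alg:alt} only exits once the per-iteration improvement in the trimmed objective drops below $\eta$. Hence the total number of iterations is at most $(\Phi_{\text{init}} - \Phi_{\min})/\eta$, where $\Phi_{\text{init}}$ and $\Phi_{\min}$ are the initial and minimum possible values of the trimmed negative log-likelihood encountered by the algorithm. At initialization $\hat\beta^{(1)} = 0$, the Poisson negative log-likelihood reduces to $\frac{1}{n}\sum_{i\in \hat S^{(0)}}\log(y_i!)$, which, after the $\eps n$ largest labels have been removed in $S^{(0)}$ and using the standard moment bound on Poisson samples from Lemma~\ref{lemma:poisson-resilience}, is $O(1)$ with high probability. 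Since the Poisson density is at most $1$, the negative log-likelihood is nonnegative, giving the trivial lower bound $\Phi_{\min}\ge 0$. Thus the algorithm terminates in $O(1/\eta) = O(dn/\eps_c^2)$ iterations, as claimed.

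The main conceptual obstacle (which is absorbed into the two supporting lemmas) is that the Poisson log-likelihood is not globally smooth because $b''(\theta) = \exp(\theta)$ is unbounded, so the usual smooth-objective argument — "if the gradient projected along $\beta^*-\hat\beta$ were large, one step would improve the objective by more than $\eta$" — cannot be applied directly and requires using the special identity $b'=b''$ specific to Poisson regression. For the present theorem, however, this difficulty is fully encapsulated inside Lemma~\ref{lemma:alt-poisson-main}, and the proof of Theorem~\ref{thm:poisson-main-restate} itself is just the clean assembly described above.
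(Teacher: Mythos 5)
Your proposal is correct and follows essentially the same route as the paper's proof: instantiate Lemma~\ref{lemma:alt-poisson-main} to obtain the $\max(\eps, 2\eps^2/\|\beta^*-\hat\beta\|)$-approximate stationarity, pass this to Lemma~\ref{lemma:poisson-stationary-main} for the error bound, and bound the iteration count by $O(1/\eta)$ using the $O(1)$ upper bound on the initial trimmed negative log-likelihood at $\hat\beta^{(1)}=0$ (via $\frac{1}{n}\sum_{i}\log(y_i!)+1 = O(\E[y_i^2]+1)$ after trimming) together with the trivial lower bound of $0$ from the Poisson pmf being at most $1$. Your remarks on checking the hypotheses of Lemma~\ref{lemma:poisson-stationary-main} (the removal of the largest $\eps n$ labels in line 1, and $\|\hat\beta\|\le R$) match what the paper uses implicitly, so there is nothing to add.
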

\begin{proof}[Proof of Theorem~\ref{thm:poisson-main-restate}]
Lemma~\ref{lemma:alt-poisson} implies the output of Algorithm~\ref{alg:alt} is a $\max(\eps, \frac{\eps^2}{\|\beta^*-\hat\beta\|})$ approximate stationary point. Lemma~\ref{lemma:poisson-stationary} then implies that
$\|\hat\beta-\beta^*\|= O(\eps\exp(\sqrt{\log(1/\eps)}))$. To bound the number of iterations, we need an upper bound on the negative log-likelihood on $\beta = 0$, and a uniform lower bound on the negative log-likelihood. The initial negative log-likelihood is upper bounded by $\frac{1}{n}\sum_{i\in \hat{S}^{(1)}}\log(y_i!)+1 \le O(\E[y_i^2]+1) = O(1)$ where $S^{(1)}$ contains the smallest $(1-\eps)n$ labels. Trivially, there is a $0$ lower bound on the negative log-likelihood for Poisson distribution. Therefore, the algorithm will terminate in $dn/\eps_c^2$ iterations.
\end{proof}

\begin{lemma}[Approximate stationary point close to $\beta^*$ for Poisson regression (Restatement of Lemma~\ref{lemma:poisson-stationary-main})]\label{lemma:poisson-stationary}
Given a set of datapoints $S = \{\x_i, y_i\}_{i=1}^n$ generated by a Poisson regression model with $\eps$-fraction of label corruption, and the largest $\eps n$ labels removed.
Let $\hat{\beta}$ be a $\max(\eps, \eps^2/\|\beta^*-\hat\beta\|)$-stationary point defined in Definition~\ref{def:approx-sp} and $\|\hat\beta\|\le R$. Given that $n=\Omega(\frac{d}{\eps^2})$, with probability $0.99$, it holds that
$$
\|\hat\beta-\beta^*\| = O(\eps\exp(\Theta(\sqrt{\log(1/\eps)})))
$$
\end{lemma}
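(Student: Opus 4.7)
The plan is to follow the framework of Section~4.2 specialized to Poisson regression, where $b(\theta) = \exp(\theta)$. We may assume $\|\beta^* - \hat\beta\| \ge \eps$, since otherwise the conclusion is immediate. In this range $\max(\eps, \eps^2/\|\beta^*-\hat\beta\|) \le \eps$, so the approximate-stationarity condition (writing out the Poisson gradient $\nabla_\beta \log f = (y_i - \exp(\beta^\top \x_i))\x_i$) gives
\begin{equation*}
\frac{1}{n}\sum_{i \in \hat S}(y_i - \exp(\hat\beta^\top \x_i))(\beta^* - \hat\beta)^\top \x_i \le \eps \|\beta^* - \hat\beta\|.
\end{equation*}
Splitting $\hat S = (\hat S \cap T) \cup (\hat S \cap E)$ and moving the adversarial part to the right, the goal becomes a lower bound on the good-data contribution of order $\|\beta^* - \hat\beta\|^2$ and an upper bound on the adversarial contribution of order $\eps \exp(\Theta(\sqrt{\log(1/\eps)}))\|\beta^* - \hat\beta\|$.

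For the lower bound on the good-data side, I decompose $y_i - \exp(\hat\beta^\top\x_i) = (y_i - \exp(\beta^{*\top}\x_i)) + (\exp(\beta^{*\top}\x_i) - \exp(\hat\beta^\top\x_i))$. The noise piece is handled directly by the resilience bound of Lemma~\ref{lemma:poisson-resilience}, giving an error of at most $\eps \exp(\Theta(\sqrt{\log(1/\eps)}))\|\beta^*-\hat\beta\|$ uniformly over large subsets $Q \subset T$. For the curvature piece, I prove that $\frac{1}{n}\sum_{i \in \hat S \cap T}\exp(\beta^\top \x_i)\x_i\x_i^\top \succeq \Omega(1)\cdot I$ uniformly for $\|\beta\| \le R$, by first discarding the small $\gamma$-fraction of indices on which $\beta^\top \x_i$ is very negative (using sub-Gaussian covariate resilience) and then using $\exp(\beta^\top\x_i) \ge \exp(-O(\|\beta\|\log(1/\gamma)))$ on the remainder with $\gamma$ chosen as a small constant. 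This strong convexity of $\frac{1}{n}\sum \exp(\beta^\top\x_i)$ on the compact ball yields the desired $\Omega(\|\beta^*-\hat\beta\|^2)$ lower bound.

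For the adversarial upper bound, Cauchy--Schwarz reduces matters to controlling $\frac{1}{n}\sum_{i \in \hat S \cap E}((\beta^*-\hat\beta)^\top\x_i)^2$ and $\frac{1}{n}\sum_{i \in \hat S \cap E}(y_i - \exp(\hat\beta^\top\x_i))^2$. The first is immediately bounded by $\eps\log(1/\eps)\|\beta^*-\hat\beta\|^2$ via sub-Gaussian resilience. The second is the main technical step: although $y_i$ for $i \in E$ is arbitrary, optimality of $\hat S$ forces
\begin{equation*}
\max_{i \in \hat S \cap E}[-\log f(y_i \mid \hat\beta^\top \x_i)] \le \min_{i \in T \setminus \hat S}[-\log f(y_i \mid \hat\beta^\top \x_i)],
\end{equation*}
and the right-hand side is controlled by applying Lemma~\ref{lemma:poisson-resilience} term-by-term to $\exp(\hat\beta^\top\x_i)$, $-y_i\hat\beta^\top\x_i$, and $\log y_i!$, yielding a pointwise bound of $\exp(\Theta(\sqrt{\log(1/\eps)}))$ on the negative log-likelihood at each corrupted point. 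To turn this into a squared-error bound, I introduce the proxy function $g_\theta(y) := \exp(\theta) - y\theta + y\log y - y$, which lower bounds the negative log-likelihood (via $\log y! \ge y \log y - y$), vanishes along with its first derivative at $y = \exp(\theta)$, and has $g''_\theta(y) = 1/y$. A two-case split on whether $y_i \ge \exp(\hat\beta^\top \x_i)/2$ or not, combined with the initial trimming of the largest $\eps n$ labels which enforces $y_i \le \exp(\Theta(\sqrt{\log(1/\eps)}))$ on the surviving data, yields $(y_i - \exp(\hat\beta^\top \x_i))^2 \le \exp(\Theta(\sqrt{\log(1/\eps)}))$ for every $i \in \hat S \cap E$.

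Combining, the stationary inequality becomes $\Omega(\|\beta^*-\hat\beta\|^2) \le O(\eps\exp(\Theta(\sqrt{\log(1/\eps)})))\|\beta^*-\hat\beta\| + \eps\|\beta^*-\hat\beta\|$, whose solution is the claimed bound $\|\hat\beta - \beta^*\| = O(\eps \exp(\Theta(\sqrt{\log(1/\eps)})))$. The chief obstacle is keeping the three $\exp(\Theta(\sqrt{\log(1/\eps)}))$ factors mutually consistent: they arise because Poisson moments only satisfy $\E[y^k]^{1/k} = \exp(\Theta(k))$, forcing the optimal truncation level $k = \Theta(\sqrt{\log(1/\eps)})$ in each application of resilience, and the proxy-function case analysis must respect exactly this truncation scale.
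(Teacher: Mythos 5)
Your proposal matches the paper's own proof essentially step for step: the same stationarity split into $\hat S\cap T$ and $\hat S\cap E$, the same use of Lemma~\ref{lemma:poisson-resilience} for the noise term, the same strong-convexity argument via discarding the $\gamma$-fraction with very negative $\beta^\top\x_i$, the same swap/optimality argument with term-by-term resilience to bound the negative log-likelihood on $\hat S\cap E$, and the identical proxy function $g_\theta(y)=\exp(\theta)-y\theta+y\log y-y$ with the same two-case split at $y_i \ge \exp(\hat\beta^\top\x_i)/2$. It is correct as written (you even state $g''_\theta(y)=1/y$ correctly, where the paper's text has a minor typo labeling it $g'$), so there is nothing substantive to add.
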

\begin{proof}
Recall that $y_i \sim \Poi(\exp({\beta^*}^\top \x_i))$.
Poisson regression log-likelihood: 
$$
\log \Pr(y_i| \langle\beta^*, \x_i\rangle) = y_i ({\beta^*}^\top\x_i)-\exp({\beta^*}^\top\x_i)-\log y_i!
$$
If $\|\beta^*-\hat\beta\|\le \eps$, the proof is done. When $\|\beta^*-\hat\beta\|\ge \eps$, the first order approximate stationary property guarantees
\begin{align*}
&\frac{1}{n}\sum_{i\in S} (y_i-\exp({\hat\beta}^\top\x_i)) (\beta^*-\hat\beta)^\top\x_i \le \eps\|\beta^*-\hat\beta\|\\
\implies& \frac{1}{n}\sum_{i\in \hat S \cap T} (y_i-\exp({\hat\beta}^\top\x_i)) (\beta^*-\hat\beta)^\top\x_i\\
\le& - \frac{1}{n}\sum_{i\in \hat S \cap E} (y_i-\exp({\hat\beta}^\top\x_i)) (\beta^*-\hat\beta)^\top\x_i + \eps\|\beta^*-\hat\beta\|
\end{align*}
\textbf{Lower bound on the LHS}

We will first establish a lower bound on the LHS, which contains terms from $\hat{S}\cap T$. Note that

\begin{align}
&\frac{1}{n}\sum_{i\in \hat S \cap T} (y_i-\exp({\hat\beta}^\top\x_i)) (\beta^*-\hat\beta)^\top\x_i\nonumber\\
=\;& \frac{1}{n}\sum_{i\in \hat S \cap T} \left(y_i-\exp({\beta^*}^\top\x_i)\right) ({\beta^*}-\hat\beta)^\top\x_i\\
&+ \frac{1}{n}\sum_{i\in \hat S \cap T} \left(\exp({\beta^*}^\top\x_i) - \exp(\hat\beta^\top\x_i) \right)({\beta^*}-\hat\beta)^\top\x_i,\label{eqn:poisson-LHS}
\end{align}
and we bound the two terms separately.

Lemma~\ref{lemma:poisson-resilience} implies 
\begin{align}
\frac{1}{n}\sum_{i\in \hat S \cap T} \left(y_i-\exp(\beta^\top\x_i)\right) (\beta^*-\hat\beta)^\top\x_i \le  \|\hat\beta-\beta^*\| \eps\exp(\Theta(\sqrt{\log(1/\eps)}))\label{eqn:poisson-LHS-1}
\end{align}

Now we bound the second term in Equation~\ref{eqn:poisson-LHS}. 
By resilience property (Proposition~\ref{prop:resilience}), the set $L_\beta=\{\beta^\top\x_i<-C\|\beta\|\log(1/\gamma)\}$ satisfies $|L_\beta|\le \gamma n$ for any $\gamma>\eps$, and also it is clear that $\frac{1}{n}\sum_{i\in (\hat S\cap T)\setminus L_\beta}\x_i\x_i^\top \succeq (1- C\gamma\log(1/\gamma))\cdot I$, hence

$$
\frac{1}{n}\sum_{i\in (\hat S\cap T)\setminus L_\beta}\exp(\beta^\top\x_i)\x_i\x_i^\top \succeq \exp(-C\|\beta\|\log(1/\gamma))(1-C\gamma\log(1/\gamma))\cdot I 
$$
Let $\gamma$ be a small constant yields
$$
\frac{1}{n}\sum_{i\in \hat S\cap T}\exp(\beta^\top\x_i)\x_i\x_i^\top\succeq e^{-O(\max(\|\beta\|,1))} I
$$
This implies that 
$\sum_{i\in \hat S\cap T}\exp(\beta^\top\x_i)$ is a strongly convex function in $\beta$ in the range of $\|\beta\|=O(1)$. Since both $\|\beta^*\|=O(1)$ and $\|\hat\beta\|=O(1)$, by the definition of strongly convex function
\begin{align*}
\frac{1}{n}(\nabla \sum_{i\in \hat S\cap T}\exp({\beta^*}^\top\x_i) - \nabla \sum_{i\in \hat S\cap T}\exp(\hat\beta^\top\x_i))({\beta^*}-\hat\beta) \ge \Omega(\|\beta^*-\hat\beta\|^2)\\
\implies \frac{1}{n}(\sum_{i\in \hat S\cap T}\exp({\beta^*}^\top\x_i) -  \sum_{i\in \hat S\cap T}\exp(\hat\beta^\top\x_i))({\beta^*}-\hat\beta)^\top\x_i \ge \Omega(\|\beta^*-\hat\beta\|^2)
\end{align*}


Combining this term with Equation~\ref{eqn:poisson-LHS-1}, we have shown that
\begin{align}
\frac{1}{n}\sum_{i\in \hat S \cap T} (y_i-\exp({\hat\beta}^\top\x_i)) (\beta-\hat\beta)^\top\x_i \ge C_1\|\beta^*-\hat\beta\|^2 - \|\hat\beta-\beta^*\| \eps\exp(C_2\sqrt{\log(1/\eps)})\label{eqn:poisson-LHS-final}
\end{align}
\textbf{Upper bound on the RHS}

\textbf{1. Upper bound on the negative log-likelihood.} Recall that $|\hat S| = (1-2\eps)n$, $|T| = (1-\eps)n$, and the negative log-likelihood of Poisson regression is $$\exp({\hat\beta}^\top\x_i) - y_i ({\hat\beta}^\top\x_i) + \log y_i!.$$ By the optimality of $\hat S$, it must hold that
\begin{align*}
\max_{i\in \hat{S}\cap E}\exp({\hat\beta}^\top\x_i) - y_i ({\hat\beta}^\top\x_i) + \log y_i! \\{\le} \min_{i\in T\setminus \hat{S}}\exp({\hat\beta}^\top\x_i) -y_i ({\hat\beta}^\top\x_i) + \log y_i\end{align*}
since otherwise one can replace a data point in $\hat{S}\cap E$ by one in $T\setminus\hat{S}$. Since $|T\setminus \hat{S}|\ge \eps n$
\begin{align*}
&\min_{i\in T\setminus\hat{S}}\exp({\hat\beta}^\top\x_i) -y_i({\hat\beta}^\top\x_i) + \log y_i \\
\le& \max_{Q\subset T, |Q|=\eps n}\min_{i\in Q}\exp({\hat\beta}^\top\x_i) -y_i ({\hat\beta}^\top\x_i) + \log y_i!\\
\le &\left(\max_{Q\subset T, |Q|=\eps n/3}\min_{i\in Q}\exp({\hat\beta}^\top\x_i) +  \max_{Q\subset T, |Q|=\eps n/3}\min_{i\in Q}-y_i ({\hat\beta}^\top\x_i) + \max_{Q\subset T, |Q|=\eps n/3}\min_{i\in Q}\log y_i!\right)
\end{align*}
Applying Lemma~\ref{lemma:poisson-resilience} and Proposition~\ref{prop:resilience} gives
\begin{align*}
 \le& \left(\exp(\Theta(\sqrt{\log(1/\eps)}))+\max_{Q\subset T, |Q|=\eps n/3}\min_{i\in Q} y_i\log y_i\right)\\
 \le& \left(\exp(\Theta(\sqrt{\log(1/\eps)}))+ \exp(\Theta(\sqrt{\log(1/\eps)}))\Theta(\sqrt{\log(1/\eps)})\right)\\
 \le& \left(\exp(\Theta(\sqrt{\log(1/\eps)}))\right)
\end{align*}
\textbf{2. Turn likelihood bound into square error bound}. Now we have an upper bound on the negative log-likelihood of  a data point in $\hat{S}\cap E$, next step we will turn the log-likelihood bound to a squared error bound. Define proxy function $g_{\hat\beta^\top\x_i}(y_i)$ as 
$$
g_{\hat\beta^\top\x_i}(y_i)  = \exp({\hat\beta}^\top\x_i) - y_i ({\hat\beta}^\top\x_i) + y_i\log y_i - y_i,
$$
which, by Fact~\ref{fact:log-factorial}, is always smaller the negative log-likelihood function
$$
-\log f(y_i|\langle\hat\beta, \x_i\rangle) = \exp({\hat\beta}^\top\x_i) - y_i ({\hat\beta}^\top\x_i) + \log y_i!,
$$
and hence $g_{\hat\beta^\top\x_i}(y_i)\le\exp(\Theta(\sqrt{\log(1/\eps)}))$.

Note that 
\begin{align*}
    g_{\hat\beta^\top\x_i}(\exp(\hat\beta^\top\x_i)) = 0\\ g_{\hat\beta^\top\x_i}'(\exp(\hat\beta^\top\x_i)) = 0\\
    g_{\hat\beta^\top\x_i}'(y_i) = \frac{1}{y_i}.
\end{align*}
Therefore, we can lower bound $g_{\hat\beta^\top\x_i}'(y_i)$ as
$$
g_{\hat\beta^\top\x_i}(y_i) \ge \min(\frac{1}{y_i}, \frac{1}{\exp(\hat\beta^\top\x_i)})(y_i-\exp(\hat\beta^\top\x_i))^2.
$$

Now there are two cases to consider: 1) if $y_i\ge \exp(\hat\beta^\top\x_i)/2$, it holds that
$$
(y_i-\exp(\hat\beta^\top\x_i))^2 \le \sqrt{y_ig_{\hat\beta^\top\x_i}(y_i)} \le 2\exp(\Theta(\sqrt{\log(1/\eps)})),
$$
where we leveraged the fact that $g_{\hat\beta^\top\x_i}(y_i)\le\exp(\Theta(\sqrt{\log(1/\eps)}))$ and $y_i \le \exp(\Theta(\sqrt{\log(1/\eps)}))$ after throwing away the largest $\eps n$ $y_i$s in the beginning of the algorithm. 2) if $y<\exp(\hat\beta^\top\x_i)/2$, we have
\begin{align*}
g_{\hat\beta^\top\x_i}(y_i) \ge  \frac{1}{\exp(\hat\beta^\top\x_i)}(y_i-\exp(\hat\beta^\top\x_i))^2\ge \frac{1}{2}|y_i-\exp(\hat\beta^\top\x_i)|\\
\implies (y_i-\exp(\hat\beta^\top\x_i))^2\le \exp(\Theta(\sqrt{\log(1/\eps)})).
\end{align*}



By Cauchy Schwarz
\begin{align}
&- \frac{1}{n}\sum_{i\in \hat S \cap E} (y_i-\exp({\hat\beta}^\top\x_i))(\beta^*-\hat\beta)^\top\x_i\nonumber\\
\le&\; \sqrt{\frac{1}{n}\sum_{i\in \hat S \cap E} (y_i-\exp({\hat\beta}^\top\x_i))^2}\sqrt{\frac{1}{n}\sum_{i\in \hat S \cap E} ((\beta^*-\hat\beta)^\top\x_i)^2}\nonumber\\
\le&\; \eps^{1/2}\exp\left(\Theta(\sqrt{\log(1/\eps)})\right) \eps^{1/2}\log^{1/2}(1/\eps)\|\beta^*-\hat\beta\|\nonumber\\
 =&\; \eps\exp\left(C_3(\sqrt{\log(1/\eps)})\right)\|\beta^*-\hat\beta\|\label{eqn:poisson-RHS-final}
\end{align}
\textbf{Combining LHS and RHS}. Combining Equation~\ref{eqn:poisson-LHS-final} and Equation~\ref{eqn:poisson-RHS-final} yields
\begin{align*}
    C_1\|\beta^*-\hat\beta\|^2 - \|\hat\beta-\beta^*\| \eps\exp(C_2(\sqrt{\log(1/\eps)})) \\
    \le \eps\exp\left(C_3(\sqrt{\log(1/\eps)})\right)\|\beta^*-\hat\beta\| + \eps\|\beta^*-\hat\beta\|\\
    \implies \|\beta^*-\hat\beta\|\le \eps \exp\left(C_4(\sqrt{\log(1/\eps)})\right)
\end{align*}
\end{proof}

\subsection{Binomial}

\begin{proof}[Proof of Theorem~\ref{thm:binomial-label}]
Lemma~\ref{lemma:alt-smooth} implies the output of Algorithm~\ref{alg:alt} is a $\max(\eps/\sqrt{m}, \frac{\eps^2}{m\|\beta^*-\hat\beta\|})$ approximate stationary point. Lemma~\ref{lemma:binomial-stationary} then implies that
$\|\hat\beta-\beta^*\|= O(\eps\sqrt{\frac{\log(m/\eps)\log(1/\eps)}{{m}}})$. The initial negative log-likelihood is upper bounded by $m$, and trivially, there is a $0$ lower bound on the negative log-likelihood. Therefore, the algorithm will terminate in $m^2/\eps_c^2$ iterations.
\end{proof}

\begin{lemma}[Approximate stationary point close to $\beta^*$ for Binomial regression]\label{lemma:binomial-stationary}
Given a set of datapoints $S = \{\x_i, y_i\}_{i=1}^n$ generated by a Binomial regression model with $m$ trials and $\eps$-fraction of label corruption.
Let $\hat{\beta}$ be an $\max(\eps/\sqrt{m}, \frac{\eps^2}{m\|\beta^*-\hat\beta\|})$-stationary point defined in Definition~\ref{def:approx-sp} with $\|\hat\beta\|\le R$. Given that $n=\Omega(\frac{d+\log(1/\delta)}{\eps^2})$, with probability $1-\delta$, it holds that
$$
\|\beta^*-\hat\beta\|\le {O}(\eps\sqrt{\frac{\log(m/\eps)\log(1/\eps)}{{m}}})
$$
\end{lemma}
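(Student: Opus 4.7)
The plan is to follow the same LHS/RHS splitting framework used in the Poisson proof of Lemma~\ref{lemma:poisson-stationary-main}, but with the binomial-specific quantities: $b'(\theta) = m\,\sigma(\theta)$ and $b''(\theta) = m\,\sigma(\theta)(1-\sigma(\theta))$ where $\sigma$ is the sigmoid. Assuming $\|\beta^*-\hat\beta\| > \eps\sqrt{\log(m/\eps)\log(1/\eps)/m}$ (otherwise we are done), the approximate stationary condition with slack $\gamma = \max(\eps/\sqrt{m}, \eps^2/(m\|\beta^*-\hat\beta\|))$ yields
\begin{align*}
\tfrac{1}{n}\sum_{i\in \hat S\cap T}(y_i - b'(\hat\beta^\top\x_i))(\beta^*-\hat\beta)^\top\x_i \le -\tfrac{1}{n}\sum_{i\in \hat S\cap E}(y_i - b'(\hat\beta^\top\x_i))(\beta^*-\hat\beta)^\top\x_i + \tfrac{\eps}{\sqrt m}\|\beta^*-\hat\beta\|.
\end{align*}

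For the LHS, I would decompose into the usual noise term $\frac{1}{n}\sum_{\hat S\cap T}(y_i - b'(\beta^{*\top}\x_i))({\beta^*}-\hat\beta)^\top\x_i$ and a curvature term $\frac{1}{n}\sum_{\hat S\cap T}(b'(\beta^{*\top}\x_i) - b'(\hat\beta^\top\x_i))({\beta^*}-\hat\beta)^\top\x_i$. The noise term is the average of a zero-mean random variable whose coordinates have variance at most $m/4 \cdot \|\x_i\|^2$ and are bounded by $m\|\x_i\|$; a sub-exponential resilience argument (the binomial analogue of the one in Lemma~\ref{lemma:poisson-resilience}, but now with $\sqrt{m}$-sub-Gaussian increments) gives the bound $O(\sqrt m\,\eps\sqrt{\log(1/\eps)})\|\beta^*-\hat\beta\|$. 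For the curvature term, I would use that on the $(1-O(\eps\log(1/\eps)))$-fraction subset of $\hat S\cap T$ where $|\hat\beta^\top\x_i|, |\beta^{*\top}\x_i| \le O(R)$ (which holds by sub-Gaussian tails plus Proposition~\ref{prop:resilience}), $b''(\cdot) \ge c\cdot m$ for a constant $c=c(R)$, so the empirical Hessian of $\sum b(\beta^\top\x_i)/n$ along the segment from $\hat\beta$ to $\beta^*$ is at least $\Omega(m)\cdot I$ by covariance concentration, yielding the strong convexity lower bound $\Omega(m\|\beta^*-\hat\beta\|^2)$.

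For the RHS, I would apply Cauchy--Schwarz. The factor $(\frac{1}{n}\sum_{\hat S\cap E}((\beta^*-\hat\beta)^\top\x_i)^2)^{1/2}$ is at most $\sqrt{\eps\log(1/\eps)}\|\beta^*-\hat\beta\|$ by sub-Gaussian-covariate resilience, since the covariates are uncorrupted in the label-corruption model. For the factor $(\frac{1}{n}\sum_{\hat S\cap E}(y_i-b'(\hat\beta^\top\x_i))^2)^{1/2}$, I would use the optimality of $\hat S$ to transfer to $T\setminus\hat S$: the pointwise negative log-likelihood on $\hat S\cap E$ is dominated by the minimum NLL on $T\setminus\hat S$, which can be controlled by the resilience of the NLL on good data. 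Since the NLL under the true distribution has bounded expectation and sub-exponential tails (the support is $\{0,\dots,m\}$ and deviations of $y_i$ from $m\sigma(\beta^{*\top}\x_i)$ are $\sqrt m$-sub-Gaussian), a max-min resilience argument yields $\min_{i\in T\setminus\hat S}-\log f(y_i|\hat\beta^\top\x_i) \le O(\log(m/\eps))$ after accounting for $\log\binom{m}{y}$. I would then convert this NLL bound into a squared-error bound via a proxy function, using that the binomial NLL minus constants equals $m\,\mathrm{KL}(y_i/m\,\|\,\sigma(\hat\beta^\top\x_i))$, and Pinsker's inequality gives $(y_i-b'(\hat\beta^\top\x_i))^2 \le (m/2)\cdot\mathrm{KL}$; combined this produces $\frac{1}{n}\sum_{\hat S\cap E}(y_i-b'(\hat\beta^\top\x_i))^2 \lesssim \eps\,m\log(m/\eps)$.

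Combining the two sides gives $m\|\beta^*-\hat\beta\|^2 \lesssim \eps\sqrt{m\log(m/\eps)\log(1/\eps)}\,\|\beta^*-\hat\beta\| + \sqrt m\,\eps\sqrt{\log(1/\eps)}\,\|\beta^*-\hat\beta\|$, which after dividing yields $\|\beta^*-\hat\beta\| \lesssim \eps\sqrt{\log(m/\eps)\log(1/\eps)/m}$ as desired. The main obstacle I anticipate is the NLL-to-squared-error proxy step: unlike Gaussian regression where the NLL is literally quadratic, here I need a Pinsker-type inequality that remains sharp across the full range of $\sigma(\hat\beta^\top\x_i)$ values (including cases where $\sigma(\hat\beta^\top\x_i)$ is close to $0$ or $1$, where the KL can explode relative to the $\ell_2$ gap); handling this uniformly, together with the constants hidden in $\log\binom{m}{y}$ via Stirling's approximation, is the most delicate part of the argument.
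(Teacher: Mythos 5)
Your overall architecture matches the paper's proof closely: the same noise-plus-curvature decomposition of the LHS with strong convexity parameter $\Theta(m)$ from $b''(\theta)\ge c(R)\,m$ on the truncated set, the same Cauchy--Schwarz split on the RHS with the $\sqrt{\eps\log(1/\eps)}\,\|\beta^*-\hat\beta\|$ covariate factor, the same sub-exponential tail argument for the NLL (via the support of size $m+1$, giving the $\eps\log(m/\eps)$ resilience bound), and essentially the same Stirling/KL proxy to convert the likelihood bound into a squared-error bound (your Pinsker step $(y_i-b'(\hat\beta^\top\x_i))^2\lesssim m\cdot\mathrm{KL}$ is the paper's $g''_{\hat\beta^\top\x_i}(y)\ge 4/m$ in disguise, and your worry about KL exploding near the boundary is misplaced: you only need KL to \emph{dominate} the squared gap, and Pinsker holds uniformly).

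There is, however, one genuine gap, in the NLL-transfer step. You claim $\min_{i\in T\setminus\hat S}-\log f(y_i|\hat\beta^\top\x_i)\le O(\log(m/\eps))$, justified by the sub-exponential tail of the NLL \emph{under the true parameter}. But the tail argument applies to $z_i=-\log f(y_i|\langle\beta^*,\x_i\rangle)$, whereas you need the NLL at $\hat\beta$; the two differ by the shift $\bigl(b(\hat\beta^\top\x_i)-y_i\hat\beta^\top\x_i\bigr)-\bigl(b({\beta^*}^\top\x_i)-y_i{\beta^*}^\top\x_i\bigr)$, which for a typical good point is of order $\sqrt{m}\,\|\beta^*-\hat\beta\| + m\|\beta^*-\hat\beta\|^2$ and cannot be discarded a priori (when $\|\beta^*-\hat\beta\|$ is not yet known to be small, the fraction of points with $|(\hat\beta-\beta^*)^\top\x_i|$ small enough to make the shift $O(1)$ can be far below $\eps$, so you cannot guarantee such a point lies in the specific set $T\setminus\hat S$). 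The paper handles this explicitly: it bounds $\sum_{i\in T\setminus\hat S}$ of the NLL at $\hat\beta$ by the NLL at $\beta^*$ plus a second-order Taylor term, using $b''\le m$ and the resilience bound from Equation~\ref{eqn:binomial-yx-resilience}, which yields the extra contributions $\sqrt{m}\,\eps\log(1/\eps)\|\beta^*-\hat\beta\|+m\,\eps\log(1/\eps)\|\beta^*-\hat\beta\|^2$ in the squared-error bound; these survive Cauchy--Schwarz but are absorbed by the $\Omega(m\|\beta^*-\hat\beta\|^2)$ strong-convexity term since $\eps\log(1/\eps)\ll 1$, so the final rate is unchanged. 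Your proof needs this shift term tracked and absorbed in the same way. A minor additional quibble: your noise-term rate $\sqrt{m}\,\eps\sqrt{\log(1/\eps)}$ is the sub-Gaussian resilience rate, but $(y_i-b'({\beta^*}^\top\x_i))\,\v^\top\x_i$ is only sub-exponential, so the correct (and, per the paper's augmented-vector argument, sufficient) rate is $\sqrt{m}\,\eps\log(1/\eps)$; this does not affect the conclusion since $\log(1/\eps)\le\sqrt{\log(m/\eps)\log(1/\eps)}$.
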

\begin{proof}
Recall that the log-likelihood of Binomial regression is  
$$
\log \Pr(y|\langle\beta, \x\rangle) = y\beta^\top\x - m\log(1+\exp(\beta^\top\x))+\log \binom{m}{y}
$$

The following holds for fucntion $b(\beta^\top\x)$ in the Binomial regression.
\begin{align*}
b(\beta^\top\x)& = m\log(1+\exp(\beta^\top\x))\\
\text{Mean function: }\; b'(\beta^\top\x)& = m\frac{1}{1+\exp(-\beta^\top\x)}\\
\text{Variance function: }\; b''(\beta^\top\x)& = m \frac{1}{\left(1+\exp(-\beta^\top\x)\right)\left(1+\exp(\beta^\top\x)\right)}
\end{align*}
The first order approximate stationary property guarantees
\begin{align*}
&\frac{1}{n}\sum_i (y_i-b'({\hat\beta}^\top\x_i)) (\beta^*-\hat\beta)^\top\x_i \le \frac{\eps}{\sqrt{m}}\|\beta^*-\hat\beta\|\\
\implies& \frac{1}{n}\sum_{i\in \hat S \cap T} (y_i-b'({\hat\beta}^\top\x_i)) (\beta^*-\hat\beta)^\top\x_i\\
&\le - \frac{1}{n}\sum_{i\in \hat S \cap E} (y_i-b'({\hat\beta}^\top\x_i)) (\beta^*-\hat\beta)^\top\x_i + \frac{\eps}{\sqrt{m}}\|\beta^*-\hat\beta\|
\end{align*}
\textbf{Lower bound on the LHS}

We will first establish a lower bound on the LHS, which contains terms from $\hat{S}\cap T$. 
Note that
\begin{align}
\sum_{i\in \hat S \cap T} (y_i-b'({\hat\beta}^\top\x_i)) (\beta^*-\hat\beta)\x_i\nonumber\\
= \sum_{i\in \hat S \cap T} \left(y_i-b'({\beta^*}^\top\x_i)\right) ({\beta^*}-\hat\beta)^\top\x_i + \sum_{i\in \hat S \cap T} \left(b'({\beta^*}^\top\x_i) - b'(\hat\beta^\top\x_i) \right)({\beta^*}-\hat\beta)^\top\x_i,\label{eqn:binomial-LHS}
\end{align}
and we bound the two terms separately. Note that $\frac{y_i-b'(\beta^\top\x_i)}{\sqrt{m}}$ has sub-Gaussian norm $1$, $\x_i$ has sub-Gaussian norm $1$. Let $$\tilde{x}_i = \begin{bmatrix}\x_i\\ (y_i-b'(\beta^\top\x_i))/\sqrt{m}\end{bmatrix}
$$ which is a $1$ sub-Gaussian random vector with mean $0$ and covariance $\tilde{\Sigma} = \begin{bmatrix}I_d&0\\0&c\end{bmatrix}$ for a constant $c$. By Proposition~\ref{prop:resilience} we have that 
$$
\|\frac{1}{n}\sum_{i\in \hat S\cap T}\tilde{\x}_i\tilde{\x}_i^\top-\tilde{\Sigma}\|\le \eps\log(1/\eps).
$$
which implies for $\u = [0, \ldots, 1]^\top\in \reals^{d+1}$ and any $\w = [\v, 0]^\top, \|\v\|=1, \v\in \reals^d$
\begin{align*}
\frac{1}{n\sqrt{m}}\sum_{i\in \hat S \cap T} \left(y_i-b'({\beta^*}^\top\x_i)\right) \v^\top\x_i\\
= \frac{1}{n}\sum_{i\in \hat S\cap T}\u^\top\tilde{\x}_i\tilde{\x}_i^\top\w\le \eps\log(1/\eps).
\end{align*}
Hence,
\begin{align}
\frac{1}{n}\sum_{i\in \hat S \cap T} \left(y_i-b'({\beta^*}^\top\x_i)\right) ({\beta^*}-\hat\beta)^\top\x_i \le \sqrt{m}\eps\log(1/\eps)\|{\beta^*}-\hat\beta\|\label{eqn:binomial-yx-resilience}
\end{align}
Now we bound the second term in Equation~\ref{eqn:binomial-LHS}. By resilience property of sub-Gaussian distribution, for any $\|\beta\| = O(1)$, there is a constant $C$ such that the set $L_\beta=\{|\beta^\top\x_i|>C \log(1/\gamma)\}$ satisfies $|L_\beta|\le \gamma n$ for any $\gamma>\eps$, and also $\sum_{i\in (\hat S\cap T)\setminus L_\beta}\x_i\x_i^\top \succeq (1-C\gamma \log(1/\gamma)) \cdot I$. Hence

$$
\sum_{i\in (\hat S\cap T)\setminus L_\beta}b''(\beta^\top\x_i)\x_i\x_i^\top \succeq m\frac{1}{(1+1/\gamma^C)(1+\gamma^C)}(1-C\gamma\log(1/\gamma))I \succeq \Theta(m\cdot I)
$$
by Setting $\gamma$ as a small constant. This implies $\sum_{i\in (\hat S\cap T)\setminus L_\beta} b(\beta^\top\x_i)$ is a strongly convex function with parameter $\Theta(m)$ when $\|\beta\|=O(1)$. By the definition of strongly convex function, we have 
$$
\sum_{i\in \hat S \cap T} \left(b'(\beta^\top\x_i)-b'(\hat\beta^\top\x_i)\right) (\beta-\hat\beta)^\top\x_i \ge \Omega(m \|\beta-\hat\beta\|^2)
$$

Combining the two terms, we have shown that
\begin{align}
\frac{1}{n}\sum_{i\in \hat S \cap T} (y_i-b'({\hat\beta}^\top\x_i)) (\beta^*-\hat\beta)^\top\x_i \ge C_1 (m\|\beta^*-\hat\beta\|^2) - C_2(\sqrt{m} \|\hat\beta-\beta^*\| \eps\log(1/\eps) )\label{eqn:binom-lower-bound}
\end{align}

\textbf{Upper bound on the RHS}\\
\textbf{1. Upper bound on the negative log-likelihood.}
By the optimality of $\hat S$, it must hold that
\begin{align*}
&\sum_{i\in \hat{S}\cap E}b({\hat\beta}^\top\x_i) - y_i ({\hat\beta}^\top\x_i) - \log \binom{m}{y_i} {\le} \sum_{i\in T\setminus \hat S}b({\hat\beta}^\top\x_i) -y_i ({\hat\beta}^\top\x_i) - \log \binom{m}{y_i}\\
&\le \sum_{i\in T\setminus \hat S}b({{\beta^*}}^\top\x_i) -y_i ({{\beta^*}}^\top\x_i) - \log \binom{m}{y_i} \\
&+ \sum_{i\in T\setminus \hat S}(b(\hat\beta^\top\x_i)-y_i(\hat\beta^\top\x_i)) - (b({\beta^*}^\top\x_i)-y_i({\beta^*}^\top\x_i))
\end{align*}

The first summation corresponds to the negative log-likelihood of good data under the right model $\beta^*$. The second summation corresponds to the shift in the likelihood from $\beta^*$ to $\hat\beta$. We first bound the first summation.

Define random variable $z_i = -\log f(y_i|\langle\beta^*, \x_i\rangle) = b({\hat\beta}^\top\x_i) - y_i ({\hat\beta}^\top\x_i) - \log \binom{m}{y_i}$. Notice that condition on $\x_i$, $z_i$ can only take $m$ values, hence it is easy to see that $\forall\delta, \Pr(z_i\ge \log(1/\delta)|\x_i)\le m\delta$. Since this is true regardless of $\x_i$, we have 
\begin{align*}
\Pr(z_i\ge \log(1/\delta))\le m\delta\\
\implies \Pr(z_i-\log m \ge t)\le e^{-t}
\end{align*}
Hence $z_i-\log m$ is a 1 sub-exponential random variable. By the resilience property (Corollary~\ref{cor:sub-exponential-resilience}), we have that
$$
\frac{1}{n}\sum_{i\in T\setminus \hat S}b({{\beta^*}}^\top\x_i) -y_i ({{\beta^*}}^\top\x_i) - \log \binom{m}{y_i} = \frac{1}{n}\sum_{i\in T\setminus \hat S} z_i\lesssim \eps\log(m/\eps).
$$

Now we bound the second summation. Note that 
\begin{align*}
&b(\hat\beta^\top\x_i)-y_i(\hat\beta^\top\x_i) \\
\le& b({\beta^*}^\top\x_i)-y_i({\beta^*}^\top\x_i) + (b'({\beta^*}^\top\x_i)-y_i)(\hat\beta-{\beta^*})^\top\x_i + \max b''(\beta^\top\x_i)\cdot  ((\beta^*-\hat\beta)^\top\x_i)^2
\end{align*}

Leveraging Equation~\ref{eqn:binomial-yx-resilience}, resilience of sub-Gaussian sample (Proposition~\ref{prop:resilience}), and $b''(\beta^\top\x_i)\le m, $we get
\begin{align*}
\frac{1}{n}\sum_{i\in T\setminus \hat S}b(\hat\beta^\top)-y_i(\hat\beta^\top\x_i) - \sum_{i\in T\setminus \hat S}b({\beta^*}^\top\x_i)-y_i({\beta^*}^\top\x_i))\\
\lesssim \sqrt{m}\eps\log(1/\eps)\|\beta^*-\hat\beta\|+m\eps\log(1/\eps)\|\beta^*-\hat\beta\|^2
\end{align*}

Combining the two summations yields
\begin{align*}
\frac{1}{n}\sum_{i\in \hat{S}\cap E}b({\hat\beta}^\top\x_i) - y_i ({\hat\beta}^\top\x_i) - \log \binom{m}{y_i}\\ \le \frac{1}{n}\sum_{i\in T\setminus \hat S}b({\hat\beta}^\top\x_i) -y_i ({\hat\beta}^\top\x_i) - \log \binom{m}{y_i}\\
\lesssim \sqrt{m}\eps\log(1/\eps)\|\beta^*-\hat\beta\|+m\eps\log(1/\eps)\|\beta^*-\hat\beta\|^2 + \eps\log(m/\eps)
\end{align*}
\textbf{2. Turn likelihood bound into square error bound}.
From Fact~\ref{fact:log-binom}, define proxy function
\begin{align*}
g_{\hat\beta^\top\x_i}(y_i) = b({\hat\beta}^\top\x_i) -y_i ({\hat\beta}^\top\x_i) +y_i\log(\frac{y_i}{m})+(m-y_i)\log(\frac{m-y_i}{m})+\log\frac{y(m-y)}{m} + C \\
\le b({\hat\beta}^\top\x_i) -y_i ({\hat\beta}^\top\x_i) - \log \binom{m}{y_i}
\end{align*}


Note that 
\begin{align*}
    g_{\hat\beta^\top\x_i}(b'(\hat\beta^\top\x_i)) = C\\ g_{\hat\beta^\top\x_i}'(b'(\hat\beta^\top\x_i)) = 0\\
    g_{\hat\beta^\top\x_i}'(y_i) = \frac{1}{y_i}+\frac{1}{m-y_i} \ge 4/m.
\end{align*}
Combining this with the likelihood bound we get 
$$
\frac{1}{n}\sum_{i\in \hat S\cap E} ({y_i-b'(\hat\beta^\top\x_i)})^2 \lesssim \eps\left( m^2\left(\|\beta^*-\hat\beta\|^2 + \sqrt{1/m}\|\beta^*-\hat\beta\|\right) \log (1/\eps) + (\log m/\eps)m\right)
$$
By Cauchy Schwaz
\begin{align}
- \frac{1}{n}\sum_{i\in \hat S \cap E} (y_i-b'({\hat\beta}^\top\x_i))(\beta-\hat\beta)^\top\x_i\lesssim \sqrt{\frac{1}{n}\sum_{i\in \hat S \cap E} (y_i-b({\hat\beta}^\top\x_i))^2}\sqrt{\frac{1}{n}\sum_{i\in \hat S \cap E} ((\beta-\hat\beta)^\top\x_i)^2}\nonumber\\
\lesssim \eps m\left(
\sqrt{ \|\beta^*-\hat\beta\|^2+\frac{1}{\sqrt{m}}\|\beta^*-\hat\beta\| }\log(1/\eps) + \sqrt{\frac{\log (m/\eps) \log(1/\eps)}{m}} \right)\cdot \|\beta^*-\hat\beta\|\label{eqn:binom-upper-bound}
\end{align}
Combining Equation~\ref{eqn:binom-lower-bound} and Equation~\ref{eqn:binom-upper-bound} yields
$$
\|\beta^*-\hat\beta\|\le {O}(\eps\sqrt{\frac{\log(m/\eps)\log(1/\eps)}{{m}}})
$$
\end{proof}

\subsection{Generalized Linear Model}\label{sec:glm}

\begin{theorem}[Generalized linear model with label corruption (Restatement of Theorem~\ref{thm:glm-label})]\label{thm:glm-label-restatement}
Let $S = \{\x_i, y_i\}_{i=1}^n$ be generated by a generalized linear model with sub-Gaussian Design, with $\eps_c$-fraction of label corruption and $n = \Omega(\frac{d+\log(1/\delta)}{\eps^2})$. Assuming that $C_0\le b''(\cdot)\le C$ for non-zero constants $C_0, C$, $b(0)=0, b'(0)=0$, and $\log(c(y))=O(\log(1/\eps)), \forall y\le \Theta(\sqrt{\log(1/\eps)})$ With probability $1-\delta$, Algorithm~\ref{alg:alt} with parameters $\eps = \eps_c, \eta = \eps_c^2, R=\infty$ terminate within $\log(1/\eps_c)/\eps_c^2$ iterations, and output an estimate $\hat\beta$ such that
$$
\|\hat\beta-\beta^*\|= {O}(\eps_c\log(1/\eps_c))
$$
\end{theorem}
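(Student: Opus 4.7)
The plan is to follow the two-step template from the paper's proof sketch, as already executed for Poisson regression: (i) show that Algorithm~\ref{alg:alt} outputs an $O(\eps)$-approximate stationary point $\hat\beta$, and (ii) show that any such stationary point satisfies $\|\hat\beta-\beta^*\|=O(\eps\log(1/\eps))$. The boundedness of $b''$, namely $C_0\le b''(\cdot)\le C$, simplifies both parts considerably relative to Poisson, and in fact mirrors the structural situation of the Gaussian proof with an extra mild regularity assumption on $c(y)$.

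Part (i) is essentially immediate. Since $b''\le C$, the Hessian of the empirical negative log-likelihood, $\frac{1}{n}\sum_{i\in\hat S}b''(\beta^\top\x_i)\x_i\x_i^\top$, has operator norm at most $C\cdot\|\frac{1}{n}\sum_{i\in\hat S}\x_i\x_i^\top\|_{\mathrm{op}}=O(C)$ on any resilient subset, by Proposition~\ref{prop:resilience}. Hence the objective is $O(C)$-smooth, and the standard first-order argument (Lemma~\ref{lemma:alt-smooth}, as used in the Gaussian proof) forces the output at tolerance $\eta=\eps^2$ to be $O(\eps)$-approximate stationary.

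For Part (ii), write the stationary condition in the standard form and split the inner product over $\hat S\cap T$ and $\hat S\cap E$. On the LHS, decompose $\frac{1}{n}\sum_{i\in\hat S\cap T}(y_i-b'(\hat\beta^\top\x_i))(\beta^*-\hat\beta)^\top\x_i$ into a noise term $\frac{1}{n}\sum_{i\in\hat S\cap T}(y_i-b'(\beta^{*\top}\x_i))(\beta^*-\hat\beta)^\top\x_i$ and a mean-shift term $\frac{1}{n}\sum_{i\in\hat S\cap T}(b'(\beta^{*\top}\x_i)-b'(\hat\beta^\top\x_i))(\beta^*-\hat\beta)^\top\x_i$. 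The noise term is $O(\eps\log(1/\eps))\|\beta^*-\hat\beta\|$ by sub-exponential resilience: the conditional variance of $y_i$ is $b''\le C$ and $\x_i$ is sub-Gaussian, so $(y_i-b'(\beta^{*\top}\x_i))\x_i$ is sub-exponential, mirroring the Binomial argument. The mean-shift term is $\Omega(C_0\|\beta^*-\hat\beta\|^2)$ by $b''\ge C_0$ and the covariance resilience of $\x_i$ on $\hat S\cap T$. On the RHS, Cauchy--Schwarz bounds $-\frac{1}{n}\sum_{i\in\hat S\cap E}(y_i-b'(\hat\beta^\top\x_i))(\beta^*-\hat\beta)^\top\x_i$ by the product of $\sqrt{\frac{1}{n}\sum_{i\in\hat S\cap E}((\beta^*-\hat\beta)^\top\x_i)^2}=O(\sqrt{\eps\log(1/\eps)})\|\beta^*-\hat\beta\|$ and the residual $\sqrt{\frac{1}{n}\sum_{i\in\hat S\cap E}(y_i-b'(\hat\beta^\top\x_i))^2}$, which I control in two steps exactly as in the Poisson proof. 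First, by the optimality of $\hat S$,
\begin{equation*}
\sum_{i\in\hat S\cap E}-\log f(y_i|\hat\beta^\top\x_i)\;\le\;\sum_{i\in T\setminus\hat S}-\log f(y_i|\hat\beta^\top\x_i);
\end{equation*}
on the good-data side, the initial truncation of the $\eps n$ largest $|y_i|$ together with the resilience-based truncation of large $|\hat\beta^\top\x_i|$ combined with $b(0)=b'(0)=0$, $b''\le C$, and $\log c(y)=O(\log(1/\eps))$ for $|y|\lesssim\sqrt{\log(1/\eps)}$ makes every term $O(\log^2(1/\eps))$, and a resilience-of-the-average argument gives $\frac{1}{n}\sum_{i\in T\setminus\hat S}-\log f(y_i|\hat\beta^\top\x_i)=O(\eps\log^2(1/\eps))$. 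Second, convert this log-likelihood bound into a squared-residual bound via the quadratic envelope from $b''\ge C_0$: letting $\theta_i^*=(b')^{-1}(y_i)$, strong convexity gives $(\hat\beta^\top\x_i-\theta_i^*)^2\le(2/C_0)\bigl[-\log f(y_i|\hat\beta^\top\x_i)+\log f(y_i|\theta_i^*)\bigr]$, while $b''\le C$ yields $(y_i-b'(\hat\beta^\top\x_i))^2\le C^2(\hat\beta^\top\x_i-\theta_i^*)^2$, and the minimum value $-\log f(y_i|\theta_i^*)$ is itself $O(\log(1/\eps))$ by the assumption on $\log c(y)$. Combining the LHS lower bound with the resulting RHS upper bound and solving the quadratic in $\|\beta^*-\hat\beta\|$ yields $\|\hat\beta-\beta^*\|=O(\eps\log(1/\eps))$.

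The main obstacle is the second step of the RHS analysis: controlling the minimum-value term $-\log f(y_i|\theta_i^*)$ that appears in the strong-convexity envelope. Unlike the Gaussian case, where the likelihood is literally quadratic and no proxy is needed, and unlike the Poisson/Binomial cases, where one builds an explicit hand-crafted proxy, here I must read off a uniform control on $-\log f(y_i|\theta_i^*)$ from the bare assumption $\log c(y)=O(\log(1/\eps))$ and piece together the truncations of $|y_i|$, $|\beta^{*\top}\x_i|$, and $|\hat\beta^\top\x_i|$ consistently so that every term entering the likelihood estimate is polylogarithmically controlled and the final balance of LHS against RHS produces the $\eps\log(1/\eps)$ rate rather than a weaker factor.
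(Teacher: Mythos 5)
Your overall architecture (smoothness of the objective via $b''\le C$ so that Lemma~\ref{lemma:alt-smooth} applies, then the LHS/RHS split of the stationarity condition with strong convexity from $b''\ge C_0$ on the good data and Cauchy--Schwarz on the corrupted data) matches the paper's proof of Lemma~\ref{lemma:glm-stationary}, and your two-step plan for the RHS is the right shape. But there is a genuine gap exactly at the point you flag as the ``main obstacle,'' and it is not a bookkeeping issue: your Step~1 bound on the good-data side is obtained by a pointwise truncation argument (every term $O(\log^2(1/\eps))$ on the event that $|y_i|$, $|\hat\beta^\top\x_i|$ are polylogarithmically truncated), which at best yields $\frac{1}{n}\sum_{i\in T\setminus\hat S}-\log f(y_i|\hat\beta^\top\x_i)=O(\eps\log^2(1/\eps))$. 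Feeding that through your own Cauchy--Schwarz step gives $\sqrt{\eps\log^2(1/\eps)}\cdot\sqrt{\eps\log(1/\eps)}\,\|\beta^*-\hat\beta\|$, hence only $\|\hat\beta-\beta^*\|=O(\eps\log^{3/2}(1/\eps))$ --- strictly weaker than the claimed rate. Worse, a pure pointwise bound is not even safe as stated: $T\setminus\hat S$ is selected by the algorithm and can concentrate on the $\eps n$ good points with the largest $|\hat\beta^\top\x_i|$, which reach well beyond the $\eps$-quantile $\sqrt{\log(1/\eps)}$, so ``truncate then multiply by the fraction $\eps$'' does not go through without a resilience argument for the selected subset. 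The paper's fix is a different and sharper idea: it shows that the negative log-likelihood at the true parameter, $z_i=-\log f(y_i|\langle\beta^*,\x_i\rangle)$, is itself an $O(1)$ sub-exponential random variable (conditional on $\x_i$, $y_i$ is sub-Gaussian, and the assumption on $\log c(y)$ rules out density spikes, giving $\Pr(z_i\ge t)\le e^{-t/2}$), and then applies the sub-exponential resilience bound (Corollary~\ref{cor:sub-exponential-resilience}) to the worst $\eps n$-subset in one shot, obtaining $\frac{1}{n}\sum_{i\in T\setminus\hat S}z_i=O(\eps\log(1/\eps))$ with a single logarithm. This ``resilience of the log-likelihood itself'' step is the missing ingredient; your Step~2 envelope ($b''\ge C_0$ for the quadratic lower bound, $b''\le C$ and the $\log c(y)$ assumption to lower-bound $-\log f(y|(b')^{-1}(y))\ge -O(\log(1/\eps))$, cf.\ Equation~\ref{eqn:glm-neg-lb}) is essentially identical to the paper's and is fine.

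Two smaller points. First, your justification that $(y_i-b'({\beta^*}^\top\x_i))\x_i$ is sub-exponential ``because the conditional variance is $b''\le C$'' is insufficient as stated --- bounded variance does not give sub-exponential tails; the paper proves the needed sub-Gaussianity of $y-\E[y\mid\x]$ from the global bound $b''\le C$ via the exact MGF identity $\E[e^{t(y-b'(\theta))}]=\exp(b(\theta+t)-b(\theta)-tb'(\theta))\le e^{Ct^2}$, an argument you should include (it uses the same hypothesis, so this is recoverable). Second, the theorem also asserts termination within $\log(1/\eps_c)/\eps_c^2$ iterations, which your proposal never addresses: the paper derives it by upper-bounding the initial negative log-likelihood at $\beta=0$ by $\Theta(\log(1/\eps_c))$ (using $b(0)=0$ and the truncation of the labels) and lower-bounding the negative log-likelihood by $-\log(1/\eps_c)$ via Equation~\ref{eqn:glm-neg-lb}, so that at most $\log(1/\eps_c)/\eta$ improvements of size $\eta=\eps_c^2$ are possible.
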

\begin{proof}
Lemma~\ref{lemma:alt-smooth} implies the output of Algorithm~\ref{alg:alt} is a $\max(\eps_c, \frac{\eps_c^2}{\|\beta^*-\hat\beta\|})$ approximate stationary point. Lemma~\ref{lemma:glm-stationary} then implies that
$\|\hat\beta-\beta^*\|= O(\eps_c\log(1/\eps_c))$. 
Now we analyze the iteration complexity. Note that Equation~\ref{eqn:glm-RHS-neg-upper} shows that the $\eps$ quantile of negative log-likelihood $-\log f(y_i|\langle\beta^*,\x_i\rangle)$ is upper bounded by $\log(1/\eps)$. Since the algorithm start $\hat\beta$ from $0$, using the fact that $b(0)=0$, the initial negative log-likelihood is upper bounded by 
\begin{align*}
-\log f(y_i|\langle\beta^*,\x_i\rangle) - b(\langle\beta^*,\x_i\rangle) + y_i\langle\beta^*,\x_i\rangle\\
\le -\log f(y_i|\langle\beta^*,\x_i\rangle) + y_i\langle\beta^*,\x_i\rangle.
\end{align*}
Since $y_i\le \sqrt{\log(1/\eps_c)}$ and the $\eps_c$-quantile of $\langle\beta^*,\x\rangle$ is bounded by $\sqrt{\log(1/\eps_c)}$. We get that the negative log-likelihood is upper bounded by $\Theta(\log(1/\eps_C))$.
From Equation~\ref{eqn:glm-neg-lb}, we know that there is a $-\log(1/\eps_c)$ lower bound on the negative log-likelihood. Therefore, the algorithm will terminate in $\log(1/\eps_c)/\eps_c^2$ iterations.
\end{proof}
\begin{remark}
The assumption that $b''(\cdot)\le C$ makes sure the variance of $y$ is bounded, and $b''(\cdot) \ge C_0$ makes sure the distribution of $y$ never degenerate to a singular point. Without these conditions, in the minimax sense, learning $\beta^*$ with finite sample is impossible. The additional condition on $c(y)$ exists to rule out having a single $y_i$ with extremely large density. This would be problematic since the adversary can inject $y_i$ at the point and the trimmed maximum likelihood estimator will not be able to remove these datapoints. Note that this is a mild condition since this essentially only requires $c(y)\le \exp(y^2)$, and it is trivially true for probability mass function. All these assumptions are satisfied by the Gaussian regression model studied in this work.
\end{remark}
\begin{lemma}[Approximate stationary point close to $\beta^*$ for generalized linear regression]\label{lemma:glm-stationary}
Given a set of datapoints $S = \{\x_i, y_i\}_{i=1}^n$ generated by a generalized linear model with $\eps$-fraction of label corruption.
Let $\hat{\beta}$ be an $\max(\eps, \eps^2/\|\beta^*-\hat\beta\|)$-stationary point defined in Definition~\ref{def:approx-sp}.  Assuming that $C_0\le b''(\cdot)\le C$ for non-zero constants $C_0, C$, $b(0)=0, b'(0)=0$, and $\log(c(y))=O(\log(1/\eps)), \forall y\le \Theta(\sqrt{\log(1/\eps)})$. Given that $n=\Omega(\frac{d+\log(1/\delta)}{\eps^2})$, with probability $1-\delta$, it holds that
$$
\|\beta^*-\hat\beta\|\le {O}(\eps\log(1/\eps))
$$
\end{lemma}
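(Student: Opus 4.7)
The plan is to mirror the LHS/RHS decomposition used for the Gaussian and Binomial cases. Starting from the $\max(\eps,\eps^2/\|\beta^*-\hat\beta\|)$-approximate stationarity condition and splitting $\hat S=(\hat S\cap T)\cup(\hat S\cap E)$, it suffices to prove a lower bound in terms of $\|\beta^*-\hat\beta\|$ on the ``good'' sum
$$
\frac{1}{n}\sum_{i\in\hat S\cap T}(y_i-b'(\hat\beta^\top\x_i))(\beta^*-\hat\beta)^\top\x_i
$$
and an upper bound of the same form on the ``bad'' sum, and then to solve the resulting quadratic inequality in $\|\beta^*-\hat\beta\|$.

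For the lower bound I decompose $y_i-b'(\hat\beta^\top\x_i)=(y_i-b'({\beta^*}^\top\x_i))+(b'({\beta^*}^\top\x_i)-b'(\hat\beta^\top\x_i))$. The noise piece contributes at most $O(\eps\log(1/\eps))\|\beta^*-\hat\beta\|$ via sub-Gaussian resilience applied to $(y_i-b'({\beta^*}^\top\x_i))\x_i$, whose direction-wise moments are controlled because $b''\le C$ implies $\Var[y\mid\x]\le C$. The drift piece contributes $\Omega(\|\beta^*-\hat\beta\|^2)$ since $b''\ge C_0$ makes $b'$ strictly monotone with slope $\ge C_0$, and sub-Gaussian resilience of $\x_i$ gives $\frac{1}{n}\sum_{\hat S\cap T}((\beta^*-\hat\beta)^\top\x_i)^2 \ge (1-O(\eps\log(1/\eps)))\|\beta^*-\hat\beta\|^2$.

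For the upper bound I apply Cauchy--Schwarz. The covariate factor $\frac{1}{n}\sum_{\hat S\cap E}((\beta^*-\hat\beta)^\top\x_i)^2$ is $O(\eps\log(1/\eps))\|\beta^*-\hat\beta\|^2$ because only labels are corrupted, so the $\x_i$ on $E$ still satisfy sub-Gaussian resilience. The residual factor $\frac{1}{n}\sum_{\hat S\cap E}(y_i-b'(\hat\beta^\top\x_i))^2$ is handled in two steps as in the sketch. \emph{Step 1}: by optimality of $\hat S$ the per-point nlogl on $\hat S\cap E$ is dominated by the minimum over any $\eps n$-subset of $T\setminus\hat S$, and a resilience argument on $-\log f(y_i|\hat\beta^\top\x_i)=b(\hat\beta^\top\x_i)-y_i\hat\beta^\top\x_i-\log c(y_i)$ over good points yields an $O(\log(1/\eps))$ per-point bound; here I use $b(\theta)\le C\theta^2/2$, sub-Gaussian $\x$, the tail $c(y)\le\exp(-y^2/(2C))$ (which follows from the normalization $\int c(y)e^{y\theta-b(\theta)}\,dy=1$ together with $b(\theta)\le C\theta^2/2$), and the hypothesis $|\log c(y)|=O(\log(1/\eps))$ for $|y|\le\Theta(\sqrt{\log(1/\eps)})$. \emph{Step 2}: since $b''\in[C_0,C]$, the map $\theta\mapsto -\log f(y|\theta)$ is $C_0$-strongly convex and $C$-smooth with minimum at $\theta^*_y=(b')^{-1}(y)$, so
$$
(y-b'(\hat\beta^\top\x))^2 \le C^2(\hat\beta^\top\x-\theta^*_y)^2 \le \frac{2C^2}{C_0}\Bigl(-\log f(y|\hat\beta^\top\x)-\min_{\theta}\bigl(-\log f(y|\theta)\bigr)\Bigr).
$$

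Plugging the two estimates into the rearranged stationarity condition yields
$$
C_1\|\beta^*-\hat\beta\|^2 \;\le\; C_2\,\eps\log(1/\eps)\,\|\beta^*-\hat\beta\| + C_3\,\eps^2,
$$
whose solution is $\|\beta^*-\hat\beta\|=O(\eps\log(1/\eps))$. The main obstacle is Step~2: bounding $\min_{\theta}(-\log f(y|\theta))$ uniformly for adversarial $y$. The adversary may pick $y_i$ with tiny likelihood, but such a $y_i$ only survives in $\hat S$ if $-\log f(y_i|\hat\beta^\top\x_i)=O(\log(1/\eps))$, and combined with the Gaussian-type tail $c(y)\le\exp(-y^2/(2C))$ this forces $|y_i|=O(\sqrt{\log(1/\eps)}+|\hat\beta^\top\x_i|)$, which in turn controls $\theta^*_{y_i}=(b')^{-1}(y_i)$ and hence $-\min_\theta(-\log f(y_i|\theta))=-b(\theta^*_{y_i})+y_i\theta^*_{y_i}+\log c(y_i)$ up to an $O(\log(1/\eps))$ additive loss. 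This is exactly the place where the joint use of the $b''$ bounds and the $\log c(y)$ assumption is essential, and mirrors the role played by the explicit proxy functions in the Poisson and Binomial arguments, specialized here to the generic strongly-convex/smooth $b$.
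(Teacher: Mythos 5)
Your proposal follows the same skeleton as the paper's proof: the identical LHS/RHS split of the stationarity condition, resilience plus $C_0$-strong convexity of $b$ for the $\Omega(\|\beta^*-\hat\beta\|^2)$ lower bound, a two-step argument on $\hat S\cap E$ (negative log-likelihood bound via optimality of $\hat S$, then conversion to squared error via $C_0\le b''\le C$), and the same terminal quadratic inequality. However, there is one genuinely broken step. You derive a \emph{pointwise} tail bound $c(y)\le\exp(-y^2/(2C))$ from the normalization identity $\int c(y)e^{\theta y-b(\theta)}\,dy=1$. This does not follow: the identity gives $\int c(y)e^{\theta y}\,dy=e^{b(\theta)}\le e^{C\theta^2/2}$, which is a Gaussian-type bound on the moment generating function of the \emph{measure} $c(y)\,dy$, and hence controls only the integrated tail $\int_{|y|\ge t}c(y)\,dy\lesssim e^{-t^2/(2C)}$. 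The density $c$ itself can have arbitrarily tall, narrow spikes at large $y$ while respecting this mass bound, and such a spike is exactly the failure mode the paper's own remark warns about (``the adversary can inject $y_i$ at the point and the trimmed maximum likelihood estimator will not be able to remove these datapoints''); the hypothesis $\log c(y)=O(\log(1/\eps))$ is assumed only for $y\le\Theta(\sqrt{\log(1/\eps)})$ precisely because it cannot be deduced, and it says nothing about large $y$. Since your ``main obstacle'' resolution — forcing $|y_i|=O(\sqrt{\log(1/\eps)}+|\hat\beta^\top\x_i|)$ for adversarial survivors — rests entirely on this invalid pointwise bound, adversarial labels at a large-$y$ spike of $c$ would survive in $\hat S$ with small negative log-likelihood and defeat your Step 2. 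The paper closes this hole differently: Algorithm~\ref{alg:alt} begins by trimming the $\eps n$ largest labels, so every surviving label (corrupted or not) is $O(\sqrt{\log(1/\eps)})$, which places all of $\hat S\cap E$ in the range where the $\log c(y)$ assumption applies; the likelihood-to-square-error conversion then uses $g(y)\le y/C_0$ and $b(g(y))-yg(y)\ge -Cg(y)^2$ exactly as you do.

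Two smaller justification gaps, both repairable with ingredients in the paper. First, for the resilience of $(y_i-b'({\beta^*}^\top\x_i))\x_i$ at the $\eps\log(1/\eps)$ rate, you invoke only $\Var[y\mid\x]\le C$; bounded variance yields resilience only at the $O(\sqrt{\eps})$ rate, which would be fatal to the final bound. The paper instead proves that $y-\E[y\mid\x]$ is $\sqrt{C}$-sub-Gaussian via the exact MGF identity $\E[e^{t(y-b'(\theta))}]=\exp\bigl(b(\theta+t)-b(\theta)-tb'(\theta)\bigr)\le\exp(Ct^2)$, which uses the \emph{global} bound $b''\le C$ (not just the variance at $\theta$); you need this stronger statement. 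Second, you evaluate the negative log-likelihood on good points at $\hat\beta$ and assert a flat $O(\log(1/\eps))$ per-point bound; since $R=\infty$ here, $\|\hat\beta\|$ is not a priori $O(1)$, and the bound you can actually prove is $O\bigl((1+\|\hat\beta\|^2)\log(1/\eps)\bigr)$. This turns out to be absorbable, because the extra factor enters the Cauchy--Schwarz step linearly in $\|\beta^*-\hat\beta\|$ and is dominated by the quadratic lower bound when $\eps\log(1/\eps)$ is small, but you must track it. (For what it is worth, the paper evaluates the likelihood at $\beta^*$ instead, showing $z_i=-\log f(y_i\mid\langle\beta^*,\x_i\rangle)$ is sub-exponential, which avoids $\|\hat\beta\|$ entirely; your choice of working at $\hat\beta$ is actually the more natural match to the optimality of $\hat S$, which is defined relative to $\hat\beta$, so with the $\|\hat\beta\|$ factor tracked this part of your route is sound.)
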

\begin{proof}

The first order approximate stationary property guarantees
\begin{align*}
&\frac{1}{n}\sum_i (y_i-b'({\hat\beta}^\top\x_i)) (\beta^*-\hat\beta)^\top\x_i \le \eps\|\beta^*-\hat\beta\|\\
\implies& \frac{1}{n}\sum_{i\in \hat S \cap T} (y_i-b'({\hat\beta}^\top\x_i)) (\beta^*-\hat\beta)^\top\x_i \\
&\le - \frac{1}{n}\sum_{i\in \hat S \cap E} (y_i-b'({\hat\beta}^\top\x_i)) (\beta^*-\hat\beta)^\top\x_i + \eps\|\beta^*-\hat\beta\|
\end{align*}

\textbf{Lower bound on the LHS}

We first show $y$ is a sub-Gaussian random variable under our assumptions.
\begin{propo}
Suppose a generalized linear model satisfies $b''(\theta)\le C$ for all $\theta\in \reals$, then $(y-\E[y])|\langle\beta^*, \x\rangle$ has sub-Gaussian norm $\sqrt{C}$ for any $\x$
\end{propo}
\begin{proof}

First note that since a probability density function must sum to one, it hold that for any $\theta\in R$,
\begin{align*}
\int c(y)\exp(\theta y-b(\theta))dy = 1\\
\implies \int c(y)\exp(\theta y)dy = \exp(b(\theta)).
\end{align*}
Also recall that $\E[y|\theta] = b'(\theta)$.
Fix $\theta$, the moment generating function of $y-\E[y]$ can be written as
\begin{align*}
\E[e^{t(y-b'(\theta))}] = \int \exp(t(y-b'(\theta)))\cdot c(y)\exp(\theta\cdot y-b(\theta))dy\\
=\int  c(y)\exp((\theta+t)y-b(\theta)-tb'(\theta))dy\\
= \exp(b(\theta+t)-b(\theta)-tb'(\theta)).
\end{align*}
Note that since $b''(\theta) \le C$, it holds that 
$b(\theta+t)\le b(\theta)+b'(\theta)t+ Ct^2$, and therefore
\begin{align}
\exp(b(\theta+t)-b(\theta)-tb'(\theta))\\
\le \exp(Ct^2).
\end{align}
This implies $y-\E[y]|\langle\beta^*, \x\rangle$ is $\sqrt{C}$-sub-Gaussian random variable.
\end{proof}
Note that $\E[y|\x]=b'(\langle\beta^*,\x\rangle)\le C\langle\beta^*,\x\rangle + b'(0)$. Since $\langle\beta^*,\x\rangle$ has sub-Gaussian norm $1$, $\E[y|\x]-b'(0)$ is a $C$-sub-Gaussian random variable. Therefore $y-b'(0)$ is has sub-Gaussian norm $C+1$. Therefore, the following resilience property holds just like in the Gaussian case.
\begin{propo}[Resilience condition for generalized linear regression]\label{lemma:glm-resilience}
With probability $1-\delta$ it holds that for all $Q\subset T$ with $|Q|\ge(1-2\eps)n$,
\begin{align*}
\frac{1}{n}\|\sum_{i\in Q} \left(y_i-b'({\beta^*}^\top\x_i)\right) \x_i\|\le  \Theta(\eps\log(1/\eps)).
\end{align*}
and for all $Q\subset T$ with $Q\le \eps n$,
\begin{align*}
\frac{1}{n}\sum_{i\in Q} y_i\le  \Theta(\eps\sqrt{\log(1/\eps)})\\
\frac{1}{n}\|\sum_{i\in Q} y_i\x_i\|\le  \Theta(\eps\log(1/\eps))
\end{align*}
\end{propo}
We conclude that 
\begin{align}
\frac{1}{n}\sum_{i\in \hat S \cap T} \left(y_i-b'({\beta^*}^\top\x_i)\right) ({\beta^*}-\hat\beta)^\top\x_i \le \eps\log(1/\eps)\|{\beta^*}-\hat\beta\|\label{eqn:glm-yx-resilience}
\end{align}
Next, note that by the lower bound on $b''(\cdot)$, it holds that
$$
\sum_{i\in (\hat S\cap T)}b''(\beta^\top\x_i)\x_i\x_i^\top \succeq \Theta(I),
$$
and hence by strong convexity
$$
\sum_{i\in \hat S \cap T} \left(b'({\beta^*}^\top\x_i) - b'(\hat\beta^\top\x_i) \right)({\beta^*}-\hat\beta)^\top\x_i \ge \Theta(\|\beta^*-\hat\beta\|^2)
$$
Together we get
$$
\sum_{i\in \hat S \cap T} (y_i-b'({\hat\beta}^\top\x_i)) (\beta-\hat\beta)^\top\x_i\ge \Theta(\|\beta^*-\hat\beta\|^2) + \Theta(\eps\log(1/\eps)\|{\beta^*}-\hat\beta\|)
$$
\textbf{Upper bound on the RHS}\\
\textbf{1. Upper bound on the negative log-likelihood.}

Define random variable $z_i = -\log f(y_i|\langle\beta^*, \x_i\rangle)$. 
Since $y_i-\E[y_i|\x_i]$ is a sub-Gaussian random variable, for a given $\x_i$, $\Pr(y_i-\E[y_i]\ge \sqrt{\log(1/\delta)})\le \delta$, thus 
\begin{align*}
\Pr(z_i\ge \log(1/\delta)) =& \Pr(z_i\ge \log(1/\delta)|y_i\le \sqrt{\log(1/\delta)}) + \Pr(y_i\ge \sqrt{\log(1/\delta)})\\
\le& \delta\sqrt{\log(1/\delta)}+\delta\\
\implies \Pr(z_i\ge t) \le e^{-t}\sqrt{t}\le e^{-t/2}.
\end{align*}
Since this is true regardless of $\x_i$, we have $z_i$ is a sub-exponential random variable. By the resilience property (Corollary~\ref{cor:sub-exponential-resilience}), we have that
\begin{align}
\frac{1}{n}\sum_{i\in T\setminus \hat S}-\log f(y_i|\langle\beta^*, \x_i\rangle) = \frac{1}{n}\sum_{i\in T\setminus \hat S} z_i\le \Omega(\eps\log(1/\eps)).\label{eqn:glm-RHS-neg-upper}
\end{align}

\textbf{2. Turn likelihood bound into square error bound}.
Let $\theta = \langle\beta^*, \x\rangle$, and $g(\cdot)=b'^{-1}(\cdot)$. Taking derivative of the negative likelihood function over the mean yields 
$\frac{b'(\theta)-y}{b''(\theta)}$. This implies for each $y$, $b'(\theta) = y$ minimize the negative log-likelihood function, and 
$$
\left(-\log f(y|\theta)\right) \ge \left(-\log f(y|g(y))\right) + (b'(\theta)-y)^2/C.
$$

Next we lower bound the likelihood of the minimizer
$$
-\log f(y|g(y)) = b(g(y))-yg(y)-\log(c(y))
$$
Since $b(0)=0$, and note that since $b''(\theta)\le C$ 
$$
b(0) \le b(\theta)-b'(\theta)\theta + C\theta^2.
$$
This implies
$$
-\log f(y|g(y)) \ge -Cg(y)^2-\log(c(y))
$$
Since $b'(x) \ge C_0 x$, it holds that $g(y)\le y/C_0$. This implies
\begin{align}
-\log f(y|g(y)) \ge -Cy^2/C_0-\log(c(y)).\label{eqn:glm-neg-lb}
\end{align}
Since after truncation $y_i$ are all bounded by $\Theta(\sqrt{\log(1/\eps)})$ from Proposition~\ref{lemma:glm-resilience}. We have
$$
\frac{1}{n}\sum_{i\in \hat S \cap E}-\log f(y_i|\langle\beta^*,\x_i\rangle) \ge \Theta(\frac{1}{n}\sum_{i\in \hat S \cap E}(y_i- b'({\beta^*}^\top\x_i))^2) - \Theta(\eps\log(1/\eps)).
$$
Combining this with the negative log-likelihood upper bound proved for $T\setminus \hat S$, we have
$$
\frac{1}{n}\sum_{i\in \hat S \cap E}(y_i- b'({\beta^*}^\top\x_i))^2 \le \Theta(\eps\log(1/\eps))
$$

By Cauchy Schwaz
\begin{align}
- \frac{1}{n}\sum_{i\in \hat S \cap E} (y_i-b'({\hat\beta}^\top\x_i))(\beta^*-\hat\beta)^\top\x_i\le& \sqrt{\frac{1}{n}\sum_{i\in \hat S \cap E} (y_i-b({\hat\beta}^\top\x_i))^2}\sqrt{\frac{1}{n}\sum_{i\in \hat S \cap E} ((\beta^*-\hat\beta)^\top\x_i)^2}\nonumber\\
\lesssim &\eps \log(1/\eps)\|\beta^*-\hat\beta\|\label{eqn:glm-upper-bound}
\end{align}
Combining Equation~\ref{eqn:glm-upper-bound} with the lower bound yields
$$
\|\beta^*-\hat\beta\| =O( \eps \log(1/\eps)).
$$
\end{proof}
\subsection{Convergence analysis of the alternating minimization algorithm}
\begin{propo}\label{prop:opt-smooth}
Suppose $\langle\nabla f(\hat\beta), \frac{\beta^*-\hat\beta}{\|\beta^*-\hat\beta\|}\rangle = \Delta$, and $\forall \beta, \frac{1}{\|\beta^*-\hat\beta\|^2}(\beta^*-\hat\beta)^\top\nabla^2 f(\beta) (\beta^*-\hat\beta) \le H$. 
There exists a point $\beta$ such that
$$
f(\beta) \le f(\hat\beta) - \frac{\Delta^2}{4H}.
$$
\end{propo}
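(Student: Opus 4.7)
The plan is to reduce this to the one-dimensional descent lemma along the line through $\hat\beta$ in the direction $u := (\beta^*-\hat\beta)/\|\beta^*-\hat\beta\|$. Define $g(t) := f(\hat\beta + t u)$. Then $g'(0) = \langle \nabla f(\hat\beta), u\rangle = \Delta$, and by the chain rule $g''(t) = u^\top \nabla^2 f(\hat\beta + tu)\, u$, which by the hypothesis is bounded above by $H$ uniformly in $t$ (the hypothesis gives the bound at every $\beta$, and $u$ is the unit vector in the $\beta^*-\hat\beta$ direction).

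Given this, I would invoke the standard descent lemma: integrating $g''\le H$ twice yields
\begin{equation*}
g(t) \le g(0) + \Delta\, t + \tfrac{H}{2}\, t^{2} \qquad \text{for every } t\in\reals.
\end{equation*}
Choosing the minimizer $t^* = -\Delta/H$ of the right-hand quadratic gives $g(t^*) \le g(0) - \Delta^{2}/(2H)$. Setting $\beta := \hat\beta + t^* u = \hat\beta - (\Delta/H)\, u$ therefore produces
\begin{equation*}
f(\beta) \;\le\; f(\hat\beta) - \frac{\Delta^{2}}{2H} \;\le\; f(\hat\beta) - \frac{\Delta^{2}}{4H},
\end{equation*}
which is strictly better than what the proposition claims. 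Note that the sign of $\Delta$ is irrelevant: the step $t^*$ automatically points in the descent direction (opposite to $\nabla f(\hat\beta)$ along $u$).

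There is essentially no obstacle here; the only subtlety worth checking is that the smoothness hypothesis is stated along the fixed direction $u$ at \emph{every} $\beta$, which is exactly what is needed for $g''(t)\le H$ to hold along the whole line, so Taylor's theorem applies without restriction. The looser constant $1/(4H)$ stated in the proposition gives room in case one later wants to use an approximate step size (for instance $t \in [-\Delta/H,\ -\Delta/(2H)]$ still yields at least $\Delta^2/(4H)$ improvement), which is useful when this proposition is combined with the alternating-minimization termination criterion $\eta$ in Algorithm~\ref{alg:alt}.
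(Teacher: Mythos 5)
Your proof is correct, and it is the standard one-dimensional descent-lemma argument that the paper implicitly relies on (the paper states Proposition~\ref{prop:opt-smooth} without proof, and in Lemma~\ref{lemma:alt-smooth} simply instantiates it with the half-step $\beta = \hat\beta + \frac{\Delta}{2C}\cdot\frac{\beta^*-\hat\beta}{\|\beta^*-\hat\beta\|}$, which gives decrease $\frac{3\Delta^2}{8H}\ge\frac{\Delta^2}{4H}$ and explains the looser constant). Your only deviation is taking the exact minimizing step $t^*=-\Delta/H$, which yields the sharper bound $\Delta^2/(2H)$ and correctly handles either sign of $\Delta$; this is a minor quantitative strengthening, not a different route.
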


\begin{lemma}[Algorithm~\ref{alg:alt} finds an approximate stationary point for generalized linear model]\label{lemma:alt-smooth} Given a set of datapoints $S = \{\x_i, y_i\}_{i=1}^n$ generated by a generalized linear model with $\eps_c$-fraction of corruption. Assuming that $b''(\mu)\le C$ for the generalized linear model and $n=\Omega(\frac{d+\log(1/\delta)}{\eps^2})$, then with probability $1-\delta$, the output of Algorithm~\ref{alg:alt}, $\hat\beta$, is a $\max(2\sqrt{C\eta}, \frac{2\eta}{\|\beta^*-\hat\beta\|})$-approximate stationary point with input parameters $R\ge \|\beta^*\|$, $\eps=\eps_c$. In particular, when $R = \infty$, the algorithm returns a $2\sqrt{C\eta}$-approximate stationary point.
\end{lemma}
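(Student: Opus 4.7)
The plan is to argue by contrapositive: if $\hat\beta$ (the returned iterate $\hat\beta^{(t)}$) fails to be a $\gamma$-approximate stationary point with $\gamma := \max(2\sqrt{C\eta},\ 2\eta/\|\beta^*-\hat\beta\|)$, then one can find a feasible point $\beta$ with $\|\beta\|\le R$ whose sum of negative log-likelihoods on $\hat S^{(t)}$ beats that of $\hat\beta^{(t)}$ by strictly more than $\eta$. Since $\hat\beta^{(t+1)}$ is the global minimizer of that objective subject to $\|\beta\|\le R$, $\hat\beta^{(t+1)}$ would improve over $\hat\beta^{(t)}$ by more than $\eta$, contradicting the termination rule.

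\textbf{Step 1 (smoothness along the line to $\beta^*$).} Let $f(\beta)=\frac{1}{n}\sum_{i\in\hat S^{(t)}}-\log f(y_i\mid\langle\beta,\x_i\rangle)$ and $\vec d=\beta^*-\hat\beta$. The Hessian is $\nabla^2 f(\beta)=\frac{1}{n}\sum_{i\in\hat S^{(t)}} b''(\beta^\top\x_i)\x_i\x_i^\top$. Because $b''\le C$ uniformly, and because under label corruption every $\x_i$ is still an uncorrupted sub-Gaussian covariate, standard sub-Gaussian matrix concentration (applicable since $n=\Omega((d+\log(1/\delta))/\eps^2)$) together with the resilience Proposition of Section~\ref{sec:glm} yields $\frac{1}{n}\sum_{i\in\hat S^{(t)}}\x_i\x_i^\top\preceq O(1)\cdot I$ with probability $1-\delta$. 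Hence $\vec d^\top\nabla^2 f(\beta)\vec d\le H\|\vec d\|^2$ for all $\beta$ on the segment $[\hat\beta,\beta^*]$, with $H=O(C)$. (Constants can be absorbed into a redefinition of $C$, or we can take a factor of $2$ of $C$ which preserves the stated bound up to a universal constant.)

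\textbf{Step 2 (line search along $\vec d$ yields a contradiction).} Suppose for contradiction that the log-likelihood gradient projected along $\vec d/\|\vec d\|$ exceeds $\gamma$, equivalently $\langle\nabla f(\hat\beta),\vec d/\|\vec d\|\rangle=-\Delta$ with $\Delta>\gamma$. Define $\phi(\alpha):=f(\hat\beta+\alpha\vec d)$ on $[0,1]$. Step 1 gives the quadratic upper bound
\begin{equation*}
\phi(\alpha)\ \le\ \phi(0)\ -\ \Delta\|\vec d\|\,\alpha\ +\ \tfrac{1}{2}H\|\vec d\|^2\alpha^2.
\end{equation*}
If $\Delta\le H\|\vec d\|$, the minimizer $\alpha^\star=\Delta/(H\|\vec d\|)\in(0,1]$ gives $\phi(\alpha^\star)-\phi(0)\le -\Delta^2/(2H)\le -2\eta$, using $\Delta>2\sqrt{C\eta}\ge\sqrt{2H\eta}$. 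If instead $\Delta>H\|\vec d\|$, the constrained minimum over $[0,1]$ is at $\alpha=1$, giving $\phi(1)-\phi(0)\le -\Delta\|\vec d\|+\tfrac{1}{2}H\|\vec d\|^2<-\tfrac{1}{2}\Delta\|\vec d\|<-\eta$, using $\Delta>2\eta/\|\vec d\|$. In both cases the improvement along the segment exceeds $\eta$, and the entire segment lies in $\{\|\beta\|\le R\}$ since $\|\hat\beta\|\le R$ and $\|\beta^*\|\le R$ by convexity of the ball. Thus $\hat\beta^{(t+1)}=\arg\min_{\|\beta\|\le R}f(\beta)$ improves over $\hat\beta^{(t)}$ by more than $\eta$, contradicting the termination criterion. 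When $R=\infty$, only the unconstrained case is needed, so only $\Delta>2\sqrt{C\eta}$ matters, matching the last statement of the lemma.

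\textbf{Anticipated obstacle.} The routine piece is Step 2 (a one-dimensional Taylor argument), so the real work is Step 1: verifying that the sample second-moment matrix on the \emph{adaptively chosen} set $\hat S^{(t)}$ is well conditioned. This has to hold uniformly over every subset of $S^{(0)}$ of size $(1-2\eps)n$ that the algorithm might select; standard resilience of sub-Gaussian covariates under Definition~\ref{asmp:sub-gaussian-design} handles this, but the uniform-over-subsets quantifier is the point to check carefully. Once that spectral bound is in hand, the rest is a quadratic-upper-bound line search combined with feasibility of the segment $[\hat\beta,\beta^*]$.
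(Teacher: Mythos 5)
Your proposal is correct and takes essentially the same route as the paper's proof: the paper bounds the directional Hessian by $C$ (implicitly using that under label corruption every covariate is a genuine sub-Gaussian sample, so $\frac{1}{n}\sum_{i\in\hat S}\x_i\x_i^\top\preceq O(1)\cdot I$ — for this upper spectral bound PSD monotonicity of the full sample already suffices, so the uniform-over-subsets worry you flag is benign), packages your quadratic line search as Proposition~\ref{prop:opt-smooth} with improvement $\Delta^2/4H$, and performs the same two-case split — unconstrained optimal step yielding $2\sqrt{C\eta}$, versus the step capped at $\beta^*$ (feasible exactly by your convexity-of-the-ball observation since $\|\beta^*\|\le R$) yielding $2\eta/\|\beta^*-\hat\beta\|$ — before deriving the same contradiction with the termination rule. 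The remaining discrepancies are the constant bookkeeping you already note as absorbable.
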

\begin{proof}
Given that Algorithm~\ref{alg:alt} returns $\hat\beta$, and the correponding set is $\hat{S}$. Let $f(\beta)$ be the negative log likelihood function. Then $\nabla^2f(\beta) = \frac{1}{n}\sum_{i\in \hat{S}}b''(\beta^\top x_i)\x_i\x_i^\top\preceq C\cdot I$. Therefore, $\frac{1}{\|\beta^*-\hat\beta\|^2}(\beta^*-\hat\beta)^\top\nabla^2 f(\beta) (\beta^*-\hat\beta)\le C$. Applying Proposition~\ref{prop:opt-smooth}, it holds that there exists $\beta = \frac{\Delta}{2C}\frac{\beta^*-\hat\beta}{\|\beta^*-\hat\beta\|} + \hat\beta$ such that

$$
f(\beta) \le f(\hat\beta) - \frac{\Delta^2}{4C}.
$$

When $R = \infty$, since the algorithm terminate when it is impossible to make an $\eta$ improvement over the current point $\hat\beta$, it holds that
$$
|\Delta| \le 2\sqrt{C\eta},
$$
and $\hat\beta$ is a $\sqrt{2C\eta}$-approximate stationary point.

When $R$ only satisfies $\|\beta^*\|\le R$, we need to make sure $\beta$ is a convex combination of $\beta^*$ and $\hat\beta$ to make sure it is a valid solution for the optimization problem. Therefore, if $\Delta/2C\ge \|\beta^*-\hat\beta\|$, we have
$$
f(\beta^*)\le f(\hat\beta)-\Delta\|\beta^*-\hat\beta\|/2
$$
which implies 
$$
\Delta\le \frac{2\eta}{\|\beta^*-\hat\beta\|}
$$
\end{proof}
Since $b''(\theta) = \exp(\theta)$ is unbounded for Poisson regression, the above analysis does not apply, and here we prove our alternating minimization algorithm still returns an approximate stationary point
\begin{lemma}[Algorithm~\ref{alg:alt} finds an approximate stationary point for Poisson regression (Restatement of Lemma~\ref{lemma:alt-poisson-main})]\label{lemma:alt-poisson}
Given a set of datapoints $S = \{\x_i, y_i\}_{i=1}^n$ generated by a Poisson model with $\eps_c$-fraction of corruption. Assuming that $n=\Omega(\frac{d+\log(1/\delta)}{\eps^2})$, then with probability $1-\delta$, the output of Algorithm~\ref{alg:alt} with input parameters $\eps=2\eps_c, R\ge \|\beta^*\|, \eta = \eps^2/(dn)$, is a $\max(\eps, \frac{2\eps^2}{\|\beta^*-\hat\beta\|})$-approximate stationary point. \end{lemma}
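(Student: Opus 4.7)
The plan is to argue by contradiction: suppose the output $\hat\beta$ of Algorithm~\ref{alg:alt}, with associated set $\hat S$, fails to be a $\gamma$-approximate stationary point for $\gamma := \max(\eps, 2\eps^2/\|\beta^*-\hat\beta\|)$, i.e.\ $\Delta := \tfrac{1}{n}\sum_{i\in\hat S}\nabla_\beta\log f(y_i|\langle\hat\beta,\x_i\rangle)^\top u > \gamma$, where $u := (\beta^*-\hat\beta)/\|\beta^*-\hat\beta\|$. I would exhibit a point $\beta^\dagger = \hat\beta + t^\star u$ with $t^\star\in(0,\|\beta^*-\hat\beta\|]$ at which the empirical negative log-likelihood $L(\cdot)$ on $\hat S$ drops by strictly more than $\eta$. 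Since $\beta^\dagger$ is a convex combination of $\hat\beta$ and $\beta^*$ it is feasible ($\|\beta^\dagger\|\le R$), contradicting the termination rule of the algorithm.

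The standard smoothness argument of Proposition~\ref{prop:opt-smooth} fails because $b''(\theta)=e^\theta$ is unbounded. The idea is to exploit the specific form of $b$ via the elementary inequality $e^x - 1 - x \le (x^2/2)e^{x^+}$, applied with $x = tb_i$ where $b_i := u^\top\x_i$, to obtain
\begin{align*}
L(\hat\beta+tu)-L(\hat\beta) \;=\; -t\Delta \;+\; \frac{1}{n}\sum_{i\in\hat S}\exp(\hat\beta^\top\x_i)\bigl(e^{tb_i}-1-tb_i\bigr) \;\le\; -t\Delta + \frac{t^2}{2}\,e^{tM^+}H(0),
\end{align*}
where $M^+ := \max_{i\in\hat S} b_i^+$ and $H(0) := \tfrac{1}{n}\sum_{i\in\hat S}\exp(\hat\beta^\top\x_i)b_i^2$ is the directional Hessian at $\hat\beta$. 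Setting $t^\star := \min\bigl(\tfrac{\Delta}{2eH(0)},\tfrac{1}{M^+},\|\beta^*-\hat\beta\|\bigr)$ automatically enforces $t^\star M^+\le 1$ (so $e^{t^\star M^+}\le e$) and $\tfrac{t^{\star 2}}{2}eH(0)\le \tfrac{t^\star\Delta}{4}$, so substituting gives the clean decrease $L(\hat\beta+t^\star u)-L(\hat\beta)\le -\tfrac{3}{4}t^\star\Delta$.

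To turn $\tfrac{3}{4}t^\star\Delta>\eta$ into the desired contradiction, I would split into the three regimes defining $t^\star$. In the smooth regime this becomes $\Delta\gtrsim\sqrt{H(0)\eta}$; in the $M^+$-limited regime $\Delta\gtrsim M^+\eta$; and in the feasibility-limited regime $\Delta\cdot\|\beta^*-\hat\beta\|\gtrsim\eta$. With $\eta=\eps^2/(dn)$, the first two are implied by $\Delta>\eps$ together with $H(0)=O(1)$ and $M^+=O(\sqrt{\log n})$, while the third is implied by $\Delta>2\eps^2/\|\beta^*-\hat\beta\|$ since $\eta\le\eps^2$.

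The main obstacle is establishing the uniform high-probability bound $H(0)=O(1)$, since $\exp(\hat\beta^\top\x_i)(u^\top\x_i)^2$ is log-normal-like (heavy-tailed) and the pair $(\hat\beta,u)$ is data-dependent through the iterations. For any fixed $(\hat\beta,u)$ with $\|\hat\beta\|\le R$, Cauchy--Schwarz on the sub-Gaussian marginals gives $\E[\exp(\hat\beta^\top\x)(u^\top\x)^2]=O(e^{2R^2})$; combining sub-exponential Bernstein concentration with a standard $\epsilon$-net over $\{\beta:\|\beta\|\le R\}\times S^{d-1}$ and the sample complexity $n=\Omega((d+\log(1/\delta))/\eps^2)$ then upgrades this to a uniform bound with probability $1-\delta$. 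Monotonicity of $H(0)$ in the index set (the summands are nonnegative) removes the dependence on $\hat S$, and an analogous sub-Gaussian maximal inequality delivers $M^+=O(\sqrt{\log n})$.
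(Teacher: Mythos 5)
Your descent skeleton is sound and essentially reproduces the paper's smooth-case machinery (Proposition~\ref{prop:opt-smooth} and Lemma~\ref{lemma:alt-smooth}): the exact Taylor remainder for the $\exp$ term, the step size $t^\star$ capped by $\|\beta^*-\hat\beta\|$ to keep $\beta^\dagger$ feasible as a convex combination, the contradiction with the termination rule, and the three-regime accounting against $\eta=\eps^2/(dn)$ would all close the argument \emph{if} your two uniform bounds held. The genuine gap is the claimed bound $H(0)=O(1)$ --- and even the weaker bound $H(0)=O(dn)$, which is all your regime-1 computation actually needs. The summand $\exp(\hat\beta^\top\x)(u^\top\x)^2$ is not sub-exponential: its tail is of log-normal type, $\Pr\bigl(\exp(\hat\beta^\top\x)\ge s\bigr)\le \exp(-c(\log s)^2)$, so Bernstein plus an $\epsilon$-net does not apply. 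Worse, the uniform statement over $\{\|\beta\|\le R\}\times S^{d-1}$ is simply false: taking $\hat\beta=R\,\x_{i_0}/\|\x_{i_0}\|$ with $i_0$ maximizing $\|\x_{i_0}\|\approx\sqrt{d}$ and $u$ aligned with $\x_{i_0}$ makes a single term contribute roughly $e^{R\sqrt{d}}\,d/n$ to $H(0)$, which exceeds $dn$ as soon as $\sqrt{d}\gtrsim\log(d/\eps)$. Since $\hat\beta$ is data-dependent you cannot retreat to a fixed pair $(\hat\beta,u)$, and no net argument can rescue a supremum that genuinely blows up. (Your $M^+=O(\sqrt{\log n})$ has the same uniformity slip --- for data-dependent $u$ only $M^+\le\max_i\|\x_i\|=O(\sqrt{d})$ is available --- but that one is harmless, since $\sqrt{d}\,\eta\ll\eps$ still.)

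The paper closes exactly this hole with a dichotomy on the directional Hessian $H$ at $\hat\beta$, exploiting the Poisson identity $b'=b''$ rather than any uniform curvature bound. If $H\le C\,dn$, the smooth argument goes through as in your analysis (this is why $\eta=\eps^2/(dn)$ is chosen: $\sqrt{H\eta}\le\eps$ up to constants). If $H>C\,dn$, set $D=\frac{1}{n}\sum_{i\in\hat S}\exp(\hat\beta^\top\x_i)\ge H/\max_i\|\x_i\|^2=\Omega(H/d)$; the log-sum inequality gives $\frac{1}{n}\sum_{i\in\hat S}e^{\hat\beta^\top\x_i}(\hat\beta^\top\x_i)\ge D\log D$, while $\frac{1}{n}\sum_{i\in\hat S}e^{\hat\beta^\top\x_i}({\beta^*}^\top\x_i)\le D\sqrt{\log n}$ and $y_i\le\exp(\sqrt{\log(1/\eps)})$ after the initial trimming, so the gradient along $\hat\beta-\beta^*$ is itself $\Omega(H/d)$: for Poisson, large curvature forces a large gradient pointing toward an available decrease. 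Moving toward $\beta^*$ then drops the objective by $\Omega(\Delta^2/H)=\Omega(n/d)=\Omega(1/\eps^2)\gg\eta$, contradicting termination, so the large-$H$ regime never occurs at the output. Some structural ingredient of this kind, tying $b''$ back to $b'$, is necessary here (it is also the point the proof sketch in Section~\ref{sec:main} emphasizes); a purely curvature-based uniform bound as in your proposal cannot work.
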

\begin{proof}
Define
$$
H = \frac{1}{n\|\beta^*-\hat\beta\|^2}\sum_{i\in \hat{S}} \exp(\hat\beta^\top\x_i)((\beta^*-\hat\beta)^\top\x_i)^2,
$$
to be the second order derivative along the $\beta^* - \hat\beta$ direction. For every point $\beta = (1-\lambda)\hat\beta+\lambda\beta^*, 0 \le \lambda\le 1$, the second order derivative is 
\begin{align*}
\frac{1}{n\|\beta^*-\hat\beta\|^2}\sum_{i\in\hat{S}} \exp(((1-\lambda)\hat\beta+\lambda\beta^*)^\top\x_i)((\beta^*-\hat\beta)^\top\x_i)^2\\
\le \frac{1}{n\|\beta^*-\hat\beta\|^2}\left(\sum_{i\in\hat{S}} \exp(\hat\beta^\top\x_i)((\hat\beta-\beta^*)^\top\x_i)^2 + \sum_{i \in \hat{S}}\exp({\beta^*}^\top\x_i)((\beta^*-\hat\beta)^\top\x_i)^2\right)\\
\le H + \frac{1}{n\|\beta^*-\hat\beta\|^2}\sum_{i \in \hat{S}}\exp({\beta^*}^\top\x_i)((\beta^*-\hat\beta)^\top\x_i)^2
\end{align*}
Since ${\beta^*}^\top\x_i$ is sub-Gaussian, it holds that $\max_{i\in \hat{S}}{\beta^*}^\top\x_i\le \sqrt{\log(n)}$ with probability $0.99$. Hence with probability $0.99$, we have
$$
\le H + \exp(\sqrt{\log(n)})
$$

Consider two scenarios for the value of $H$.\\
\textit{1. $H\le C\cdot dn$.}

We can apply the same argument as in Lemma~\ref{lemma:alt-smooth} and obtain that $\hat\beta$ is a $\max(2\sqrt{dn\eta}, \frac{2\eta}{\|\beta^*-\hat\beta\|})$ approximate stationary point. Plugging in that $\eta=\eps^2/(dn)$, we get that $\hat\beta$ is a $\max(\eps, \frac{\eps^2}{\|\beta^*-\hat\beta\|})$ approximate stationary point.

\textit{2. $H\ge C\cdot dn$.} 

We will show this can not happen due to the termination condition of our algorithm. Due to the norm concentration of sub-Gaussian random vector, with probability $0.99$, it holds that $\max_{i\in \hat{S}}\|\x_i\|^2\le d+\Theta(\sqrt{d})$. Denote $D = \sum_i\exp(\hat\beta^\top\x_i)$. Then
$$
D = \frac{1}{n}\sum_{i\in \hat{S}} \exp(\hat\beta^\top\x_i) \ge H/\max_{i\in \hat{S}}\|\x_i\|^2\ge \Theta(H/d).
$$
The following inequality always holds (log sum inequality)
$$
\frac{1}{n}\sum_{i\in \hat{S}} \exp(\hat\beta^\top\x_i)(\hat\beta^\top\x_i) \ge  D\log(D).
$$
The following inequality holds since ${\beta^*}^\top \x_i\le \sqrt{\log (n)}$
$$
\frac{1}{n}\sum_{i}\exp(\hat\beta^\top\x_i)({\beta^*}^\top\x_i) \le D\sqrt{\log(n)}
$$
Therefore, 
$$
\frac{1}{n}\sum_i\exp(\hat\beta^\top\x_i)(\hat\beta^\top-\beta^*)^\top\x_i\ge D(\log(D)-\sqrt{\log n})\ge \frac{H}{\Theta(d)}(\log (H/\Theta(d))-\sqrt{\log n})
$$
Since $H\ge \Theta(dn)$, it holds that 
$$
\Delta := \frac{1}{n}\sum_i\exp(\hat\beta^\top\x_i)(\hat\beta^\top-\beta^*)^\top\x_i\ge  \Theta(\frac{H\sqrt{\log n}}{d})
$$
and that the gradient satisfies of the 
\begin{align*}
&\frac{1}{n}\sum_{i\in \hat{S}}(\exp(\hat\beta^\top\x_i)-y_i)(\hat\beta^\top-\beta^*)^\top\x_i\\
\ge&  \frac{H}{d} - \exp(\sqrt{\log (1/\eps)})\sqrt{d} \\
\ge& \Theta(H/d),
\end{align*}
where the first inequality holds since $y_i\le \exp(\sqrt{\log(1/\eps)})$, $\|\x_i\| = \Theta(\sqrt{d})$, and the second inequality holds since $n \ge 2\sqrt{d}/\eps \ge 2\exp(\sqrt{\log (1/\eps)})\sqrt{d}$.

Note that in this regime the second order derivative along the $\beta^*-\hat\beta$ direction is upper bounded by $2H$ for every point $\beta = (1-\lambda)\hat\beta+\lambda\beta^*, 0 \le \lambda\le 1$. Therefore, for every point in $\beta_\lambda = (1-\lambda)\hat\beta+\lambda\beta^*$, the value of the likelihood function is bounded as 
$$
-\frac{1}{n}\sum_{i\in \hat{S}}\log f(y_i|\langle\beta, \x_i\rangle) \le -\frac{1}{n}\sum_{i\in \hat{S}}\log f(y_i|\langle\hat\beta, \x_i\rangle) - \lambda \Delta + H \lambda^2
$$
If $\Delta\le 2H$, setting $\lambda = \Delta/2H$ yield
\begin{align*}
-\frac{1}{n}\sum_{i\in \hat{S}}\log f(y_i|\langle\beta, \x_i\rangle) \le -\frac{1}{n}\sum_{i\in \hat{S}}\log f(y_i|\langle\hat\beta, \x_i\rangle)-\Delta^2/4H.
\end{align*}

Since $\Delta\gtrsim H/d$, and $H\ge nd$, the drop in objective value $\Delta^2/4H = \Omega(n/d) \ge \Omega(1/\eps^2)$.

If $\Delta\ge 2H$, setting $\lambda = 1$ yield
\begin{align*}
-\frac{1}{n}\sum_{i\in \hat{S}}\log f(y_i|\langle\beta, \x_i\rangle) \le -\frac{1}{n}\sum_{i\in \hat{S}}\log f(y_i|\langle\hat\beta, \x_i\rangle)-H,
\end{align*}
and the drop in objective value is $H\ge nd >> 1/\eps^2$. Since $\eta\le 1/\eps^2$, we conclude that $H$ must be smaller than $C\cdot dn$ when the algorithm terminate.

\end{proof}


\section{Handling sample corruption model}\label{sec:sample-corruption-model-appendix}
This section we discuss the algorithm for the sample corruption model, where the adversary is allowed to change the covariate $\x_i$ in addition to label $y_i$. At the end of Algorithm 4 in~\cite{dong2019quantum}, with high probability, it will return a set of $\eps$ corrupted data points that satisfies $\|\frac{1}{n}\sum_{i\in \hat{S}}\x_i\x_i^\top - I\| =  O(\eps\log(\eps))$. Corollary~\ref{prop:corrupt} then implies that $\|\frac{1}{n}\sum_{i\in \hat{S}\cap E}\x_i\x_i^\top\| =  O(\eps\log(\eps))$. The rest of the proof proceeds as in the label corruption setting.
\section{Non-identity covariance}\label{sec:non-id-cov}
Our result of Theorem~\ref{thm:gaussian-label}, Theorem~\ref{thm:poisson-main}, Theorem~\ref{thm:binomial-label} apply to the case where the covariance of $\x_i$ is identity. Here we argue that the result holds for general covariance $\Sigma$ where the guarantee is in terms of $\|\hat\beta-\beta^*\|_\Sigma$, which is the root-mean-square error in the Gaussian setting. First note that Algorithm~\ref{alg:alt} on input $\left(S={(\x_1,y_1),\ldots, (\x_n, y_n)}, \eps, \eta, R\right)$ with non-identity covariance $\Sigma$ output $\hat\beta$ if and only if running it on input $S={(\Sigma^{-1/2}\x_1,y_1),\ldots, (\Sigma^{-1/2}\x_n, y_n)}, \eps, \eta$ with the constraint $\|\Sigma^{-1/2}\beta\|_2\le R$ output $\Sigma^{1/2}\hat\beta$. 

Although the constraint $\|\Sigma^{-1/2}\beta\|_2\le R$ is different from the $\ell_2$ constraint in the analysis of Algorithm~\ref{alg:alt}, with slightly different arguments as in Theorem~\ref{thm:gaussian-label}, Theorem~\ref{thm:poisson-main}, Theorem~\ref{thm:binomial-label}, we can show that running Algorithm~\ref{alg:alt} on $S={(\Sigma^{-1/2}\x_1,y_1),\ldots, (\Sigma^{-1/2}\x_n, y_n)}, \eps, \eta$ with the constraint $\|\Sigma^{-1/2}\beta\|_2\le R$ will output $\hat\beta'$ such that $\|\hat\beta'-\Sigma^{1/2}\beta\|_2$ has the desired error rate. This implies Algorithm~\ref{alg:alt} running on $\left(S={(\x_1,y_1),\ldots, (\x_n, y_n)}, \eps, \eta, R\right)$ will output $\hat\beta = \Sigma^{-1/2}\hat\beta'$ such that $\|\hat\beta-\beta\|_\Sigma$ has the desired error rate.

\section{Auxiliary Lemmas}

\begin{propo}[Resilience of sub-Gaussian sample {\cite[Corollary 4]{jambulapati2020robust}}]\label{prop:resilience}
	 Let $G=\{\x_i\in \reals^d\}_{i=1}^n$ be a dataset satisfies Assumption~\ref{asmp:sub-gaussian-design}. For any $\eps\in [0, 1/2]$. If 
	 \begin{eqnarray}
	 	n=\Omega\left(\frac{d + \log(1/\delta)}{\eps^2}\right)\;,
	 \end{eqnarray} 
	 then with probability at least $1-\delta$ there exists an absolute constant $C>0$ such that for any subset $T\subset G$ and $|T|\geq (1-\eps)n$, we have
	\begin{eqnarray}
	\|\frac{1}{|T|}\sum_{i\in T}\x_i\| \le C\eps\sqrt{\log(1/\eps)}\\
    \|\left(\frac{1}{|T|}\sum_{i\in T}\x_i\x_i^\top\right) - I\| \le C\eps\log(1/\eps)
	\end{eqnarray}
	
	and that for any subset $T\subset G$ and $|T|\geq \eps n$, we have
	\begin{eqnarray}
	\|\frac{1}{|T|}\sum_{i\in T}\x_i\| \le C\sqrt{\log(1/\eps)}\\
    \|\left(\frac{1}{|T|}\sum_{i\in T}\x_i\x_i^\top\right) - I\| \le C\log(1/\eps)
	\end{eqnarray}

\end{propo}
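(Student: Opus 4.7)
The plan is to prove all four resilience bounds by reducing each to controlling the worst-case "dropped" contribution, i.e.\ the supremum over subsets $R\subset G$ of size at most $\eps n$ of $\|\frac{1}{n}\sum_{i\in R}\x_i\|$ and $\|\frac{1}{n}\sum_{i\in R}\x_i\x_i^\top\|$. For the $|T|\ge (1-\eps)n$ regime one writes
$$\frac{1}{|T|}\sum_{i\in T}\x_i \;=\; \frac{1}{|T|}\Bigl(\sum_{i\in G}\x_i - \sum_{i\in G\setminus T}\x_i\Bigr),$$
and uses that the full empirical mean is $O(\sqrt{(d+\log(1/\delta))/n}) = O(\eps)$ by standard sub-Gaussian vector concentration; the remaining term is exactly the worst-case deviation over the $\le\eps n$ removed points. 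So only the second pair of bounds (the $|T|\le \eps n$ case) needs a genuine argument, and the $(1-\eps)$-subset bounds follow by subtraction.

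For the mean, I would first linearise: at a fixed unit vector $v$, the supremum over $|T|\le \eps n$ of $\sum_{i\in T}v^\top\x_i$ is attained by the top $\eps n$ order statistics of $v^\top\x_i$. Splitting at a threshold $t\asymp \sqrt{\log(1/\eps)}$ chosen so that $\Pr(v^\top\x_i\ge t)\le \eps$, I would bound
$$\sum_{i\in T}v^\top\x_i \;\le\; \sum_{i:\,v^\top\x_i\ge t} v^\top\x_i \;+\; |T|\cdot t.$$
The second term is $O(\eps n\sqrt{\log(1/\eps)})$. The first has expectation $n\,\E[(v^\top\x_i)\mathbf{1}\{v^\top\x_i\ge t\}] = O(n e^{-t^2/2}(t+t^{-1})) = O(\eps n\sqrt{\log(1/\eps)})$ by the sub-Gaussian tail integral, and concentrates at that order via a truncated Bernstein inequality after capping the summands at some $M\asymp\sqrt{\log n}$ and handling the rare event that any single sample exceeds $M$ by a crude union bound over the $n$ points. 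A uniform bound over $\|v\|=1$ then follows from a $\tfrac{1}{2}$-net of cardinality $e^{O(d)}$ on $S^{d-1}$, closing provided $n=\Omega((d+\log(1/\delta))/\eps^2)$.

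For the covariance bound the strategy is identical but with $(v^\top\x_i)^2$, which is sub-exponential (tail $e^{-t}$), so the relevant $\eps$-quantile is $t\asymp \log(1/\eps)$ rather than $\sqrt{\log(1/\eps)}$. The worst-case dropped contribution is thus $O(\eps\log(1/\eps))$, and the full empirical covariance concentrates in operator norm at rate $O(\sqrt{d/n})=O(\eps)$ by standard sub-Gaussian matrix concentration. Subtracting gives the stated bound. The $|T|\ge \eps n$ variants come directly from the same order-statistic threshold argument, without any subtraction.

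The main obstacle is making the net/union-bound argument tight enough to yield $\eps\sqrt{\log(1/\eps)}$ rather than a looser $\eps\sqrt{\log(n/\eps)}$. The delicate step is the truncated Bernstein concentration of $\sum_i (v^\top\x_i)\mathbf{1}\{v^\top\x_i\ge t\}$: since the summands are unbounded one must truncate at $M$, control the tail beyond $M$ via a maximum bound, and optimise $M$ against the net cardinality so that the per-direction failure probability $e^{-\Omega(n\eps^2)}$ dominates the $e^{O(d)}$ net size. A second subtle point is that the supremum over all exactly-$\eps n$ subsets $T$ is combinatorial, but the linear reformulation in terms of order statistics bypasses any union bound over subsets and lets the whole argument reduce to a supremum over $S^{d-1}$ alone.
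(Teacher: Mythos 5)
The paper does not actually prove this proposition: it is imported verbatim from \cite[Corollary 4]{jambulapati2020robust}, so there is no internal proof to diverge from. Judged on its own, your argument is the standard stability/resilience proof for sub-Gaussian samples, and it is correct in outline: the subtraction reduction from the $(1-\eps)$-subset bounds to worst-case sums over removed sets of size $\le \eps n$ (using $\|\frac{1}{n}\sum_{i\in G}\x_i\|=O(\eps)$ and $\|\frac{1}{n}\sum_{i\in G}\x_i\x_i^\top - I\|=O(\eps)$ at $n=\Omega((d+\log(1/\delta))/\eps^2)$), the order-statistics linearisation at a fixed direction that bypasses any union bound over the $\binom{n}{\eps n}$ subsets, the quantile thresholds $t\asymp\sqrt{\log(1/\eps)}$ for the mean and $t\asymp\log(1/\eps)$ for the sub-exponential squares $(\v^\top\x_i)^2$, and the $e^{O(d)}$ net with per-direction failure $e^{-\Omega(n\eps^2)}$ are exactly the ingredients behind results of this type. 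Two elided steps are harmless: the renormalisation from $1/n$ to $1/|T|$ (a multiplicative $1\pm 2\eps$, absorbed into $C$), and, for the $|T|\ge \eps n$ bounds, running the threshold argument at each size $k\ge \eps n$ (or invoking $\log(n/k)\le \log(1/\eps)$), since controlling only subsets of size exactly $\eps n$ does not literally cover larger $T$.

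The one calibration that would fail as written is the truncation level $M\asymp\sqrt{\log n}$. In Bernstein's inequality the linear term gives exponent $s/M\asymp \eps n\sqrt{\log(1/\eps)}/M$, and with $M=\sqrt{\log n}$ this falls below the required $d+n\eps^2$ once $\log n\gtrsim \log(1/\eps)/\eps^2$; for very large $n$ the resulting bound degrades to $\eps\sqrt{\log(n/\eps)}$, which is precisely the leakage you say you want to avoid. The fix is to truncate at $M\asymp\sqrt{\log(1/\eps)}/\eps$, so that $s/M\gtrsim n\eps^2$ uniformly in $n$: the chance that a single $\v^\top\x_i$ exceeds this $M$ is $e^{-M^2}=\eps^{\Theta(1/\eps^2)}$, so the exceedance count survives the $e^{O(d)}$ union bound, and the contribution of those few points is mopped up by the high-probability norm bound $\max_i\|\x_i\|\lesssim \sqrt{d}+\sqrt{\log(n/\delta)}$, negligible at $n=\Omega(d/\eps^2)$. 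Alternatively, the cleaner standard route is the dyadic version of your threshold step: show uniformly over the net that for every $j$ at most $2^{-j}\eps n$ points exceed $t_j\asymp\sqrt{\log(2^j/\eps)}$, and sum by parts over levels; this handles all $n$ simultaneously and delivers the $\eps\sqrt{\log(1/\eps)}$ and $\eps\log(1/\eps)$ rates exactly as stated.
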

\begin{coro}[Resilience of $1$-d sub-exponential sample]\label{cor:sub-exponential-resilience}
Let $G=\{x_i\in \reals^d\}_{i=1}^n$ contains i.i.d. samples drawn from $1$ sub-exponential distribution with zero mean. For any $\eps\in [0, 1/2]$. If 
	 \begin{eqnarray}
	 	n=\Omega\left(\frac{\log(1/\delta)}{\eps^2}\right)\;,
	 \end{eqnarray} 
	 then with probability at least $1-\delta$ there exists an absolute constant $C>0$ such that for any subset $T\subset G$ and $|T|\geq \eps n$, we have
	\begin{eqnarray}
	|\frac{1}{|T|}\sum_{i\in T}\x_i| \le C{\log(1/\eps)}
	\end{eqnarray}
\end{coro}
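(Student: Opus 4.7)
The plan is to prove the bound by a truncation argument at threshold $\tau = 2\log(1/\eps)$, exploiting the fact that the target bound $C\log(1/\eps)$ is independent of $|T|$. This lets uniformity over $T$ be bought cheaply: it suffices to control the tail sum over the \emph{entire} dataset, and then divide by $|T| \geq \eps n$.

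For any $T \subset G$ with $|T| \geq \eps n$, I would decompose
\[
\Bigl|\frac{1}{|T|}\sum_{i \in T} x_i\Bigr| \;\leq\; \frac{1}{|T|}\sum_{i \in T} |x_i|\,\mathbf{1}[|x_i| \leq \tau] \;+\; \frac{1}{|T|}\sum_{i=1}^n |x_i|\,\mathbf{1}[|x_i| > \tau].
\]
The first term is trivially at most $\tau$. The second is $M/|T|$, where $M = \sum_{i=1}^n |x_i|\,\mathbf{1}[|x_i|>\tau]$; upper-bounding $\sum_{i \in T}$ by $\sum_{i=1}^n$ removes any dependence on $T$, so any high-probability bound on $M$ is automatically uniform over $T$.

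I would then bound $M$ via Bernstein's inequality. Using the tail $\Pr(|x_i|>t)\leq 2e^{-t}$ for a $1$-sub-exponential $x_i$, direct integration gives $\E[|x_i|\mathbf{1}[|x_i|>\tau]] = O(\tau e^{-\tau}) = O(\eps^2\log(1/\eps))$ and $\Var(|x_i|\mathbf{1}[|x_i|>\tau]) \leq \E[x_i^2 \mathbf{1}[|x_i|>\tau]] = O(\tau^2 e^{-\tau}) = O(\eps^2\log^2(1/\eps))$. Each summand is itself $O(1)$-sub-exponential, so Bernstein's inequality gives, with probability $1-\delta$,
\[
M \;\leq\; \E[M] + O\!\left(\sqrt{n\cdot \eps^2\log^2(1/\eps)\cdot\log(1/\delta)} + \log(1/\delta)\right) \;=\; O(n\eps^2\log(1/\eps)),
\]
where the last step uses $n = \Omega(\log(1/\delta)/\eps^2)$. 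Dividing by $|T| \geq \eps n$ gives $M/|T| = O(\eps\log(1/\eps))$, which is dominated by $\tau$. Combining the two pieces yields the stated $O(\log(1/\eps))$ bound.

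The main obstacle is a quantitative bookkeeping one rather than a conceptual one: the Bernstein deviation of $M$ must be kept to the same order as $\E[M]$ so that no extra polylogarithmic factors sneak into the final bound, and the sample-complexity assumption $n = \Omega(\log(1/\delta)/\eps^2)$ is precisely what delivers $\sqrt{n\cdot\Var(\cdot)\cdot\log(1/\delta)} \lesssim n\eps^2\log(1/\eps)$. An equivalent route, avoiding Bernstein, is a dyadic decomposition of the order statistics $|x|_{(1)} \geq |x|_{(2)} \geq \cdots$: a union bound over dyadic ranks $k = 1, 2, 4, \ldots, \eps n$ yields $|x|_{(2^j)} = O(\log(n/2^j))$ uniformly, and summing $\sum_j 2^j |x|_{(2^j)}$ recovers $\sum_{i=1}^{\eps n}|x|_{(i)} = O(\eps n \log(1/\eps))$, bounding the worst-case subset of size $\eps n$ directly.
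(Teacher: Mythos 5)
Your proof is correct, and it is genuinely more self-contained than what the paper offers: the paper never actually proves this corollary. It is stated in the auxiliary-lemmas section as a consequence of standard resilience/stability results for distributions with bounded Orlicz norms --- the sub-Gaussian analogue (Proposition~\ref{prop:resilience}) is quoted verbatim from \cite{jambulapati2020robust}, and for heavier tails the paper elsewhere invokes Corollary G.1 of \cite{zhu2019generalized}. Your truncation argument at $\tau = 2\log(1/\eps)$ replaces that citation with an elementary, fully explicit derivation, and its key mechanism is exactly right: because the target bound does not shrink with $|T|$, uniformity over exponentially many subsets is obtained for free by bounding the single global tail sum $M = \sum_{i=1}^n |x_i|\mathbf{1}[|x_i|>\tau]$ and dividing by $|T|\ge \eps n$. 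Your accounting checks out: $\E[M] = O(n\eps^2\log(1/\eps))$, the deviation is $O(n\eps^2\log(1/\eps))$ once $n = \Omega(\log(1/\delta)/\eps^2)$, and $M/(\eps n) = O(\eps\log(1/\eps))$ is dominated by the bulk term $\tau$. What the paper's (implicit) route buys is generality --- the cited resilience machinery covers all the moment regimes used in the paper at once --- while your route buys transparency and avoids importing a black-box lemma for a one-dimensional statement.

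One small bookkeeping correction, which you yourself anticipated as the main risk: if you want to use the true truncated variance $\sigma^2 = O(\eps^2\log^2(1/\eps))$ in the sub-Gaussian part of Bernstein, the moment condition $\E|Z_i|^k \le \frac{k!}{2}\sigma^2 b^{k-2}$ forces the scale parameter to be $b = \Theta(\tau)$ rather than $O(1)$ (already at $k=3$ one has $\E[Z_i^3] \approx \tau^3 e^{-\tau}$, which exceeds $k!\,\sigma^2 b$ unless $b \gtrsim \tau$). So the linear term in your displayed deviation bound should read $O(\tau\log(1/\delta)) = O(\log(1/\eps)\log(1/\delta))$ instead of $O(\log(1/\delta))$; this is harmless, since $\tau\log(1/\delta) \lesssim \eps^2 n \log(1/\eps)$ under the stated sample size. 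Alternatively, the crude sub-exponential Bernstein with variance proxy $\|Z_i\|_{\psi_1}^2 = O(1)$ also suffices: it gives $M \le \E[M] + O\bigl(\sqrt{n\log(1/\delta)} + \log(1/\delta)\bigr) = O\bigl(n\eps^2\log(1/\eps) + \eps n\bigr)$, hence $M/(\eps n) = O(1)$, which is still absorbed into $C\log(1/\eps)$. Your dyadic order-statistics alternative is likewise sound, with the standard caveat that the rank-one term needs the threshold $O(\log(n/\delta))$ to carry the failure probability, which the assumed sample size again absorbs.
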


\begin{coro}[Resilience of corrupted sample with small covariance]\label{prop:corrupt}
Let $G=\{\x_i\in \reals^d\}_{i=1}^n$ be a dataset satisfies the condition in Proposition~\ref{prop:resilience} with $\Sigma = I$ and parameter $\eps$. Let $S = G\setminus L \cup E = T\cup E$ be obtained from set $G$ by corrupting $\eps n$ samples arbitrarily. There exists an absolute constant $C>0$ such that if 
$$
\|\frac{1}{n}\sum_{i\in \hat{S}}\x_i\x_i^\top - I\| =  O(\eps\log(\eps))
$$ for $\hat{S}$ with $\hat{S}\ge (1-\eps)n$, we have
	\begin{eqnarray*}
	\frac{1}{n}\|\sum_{i\in \hat{S}\cap T}\x_i\| \le C\eps\sqrt{\log(1/\eps)}\\
    \|\left(\frac{1}{n} \sum_{i\in \hat{S}\cap T} \x_i\x_i^\top\right) - I\| \le C\eps\log(1/\eps)
	\end{eqnarray*}
and
	\begin{eqnarray*}
	\frac{1}{n}\|\sum_{i\in \hat{S}\cap E}\x_i\| \le C\eps\sqrt{\log(1/\eps)}\\
    \|\left(\frac{1}{n} \sum_{i\in \hat{S}\cap E} \x_i\x_i^\top\right)\| \le C\eps\log(1/\eps)
	\end{eqnarray*}
\end{coro}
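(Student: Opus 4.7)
The plan is to reduce Corollary~\ref{prop:corrupt} to Proposition~\ref{prop:resilience} applied to $\hat{S}\cap T$, combined with triangle inequality and Cauchy--Schwarz to transfer bounds from $\hat{S}$ and $\hat{S}\cap T$ onto $\hat{S}\cap E$.

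First I would do cardinality bookkeeping. Since $\hat{S}\subset S$ and $|E|=\eps n$, we have $|\hat{S}\cap E|\le \eps n$, and combined with $|\hat{S}|\ge (1-\eps)n$ this gives $|\hat{S}\cap T|\ge (1-2\eps)n$, hence also $|T\setminus \hat{S}|\le \eps n$. Next I would handle $\hat{S}\cap T$ directly via Proposition~\ref{prop:resilience}: since $\hat{S}\cap T\subset G$ with $|\hat{S}\cap T|\ge (1-2\eps)|G|$, applying the proposition with corruption parameter $2\eps$ gives $\|\tfrac{1}{|\hat{S}\cap T|}\sum_{i\in \hat{S}\cap T}\x_i\|\le C(2\eps)\sqrt{\log(1/(2\eps))}$ and $\|\tfrac{1}{|\hat{S}\cap T|}\sum_{i\in \hat{S}\cap T}\x_i\x_i^\top - I\|\le C(2\eps)\log(1/(2\eps))$. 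Converting to the $1/n$ normalization, using $|\hat{S}\cap T|/n\in[1-2\eps,1]$, costs only an additive $O(\eps)$ deviation from $I$ in the second-moment bound, which is absorbed into $O(\eps\log(1/\eps))$; the mean bound is multiplied by a factor at most $1$. This yields the two claimed inequalities for $\hat{S}\cap T$.

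For the $\hat{S}\cap E$ part, the covariance bound comes from the decomposition
\[
\frac{1}{n}\sum_{i\in \hat{S}\cap E}\x_i\x_i^\top = \Big(\frac{1}{n}\sum_{i\in \hat{S}}\x_i\x_i^\top - I\Big) - \Big(\frac{1}{n}\sum_{i\in \hat{S}\cap T}\x_i\x_i^\top - I\Big).
\]
The first bracket has operator norm $O(\eps\log(1/\eps))$ by the hypothesis on $\hat{S}$, and the second by the previous paragraph; the triangle inequality then gives $\|\tfrac{1}{n}\sum_{i\in \hat{S}\cap E}\x_i\x_i^\top\|=O(\eps\log(1/\eps))$. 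For the mean, rather than naively using $\tfrac{1}{n}\sum_{i\in \hat{S}\cap E}\x_i = \tfrac{1}{n}\sum_{i\in \hat{S}}\x_i - \tfrac{1}{n}\sum_{i\in \hat{S}\cap T}\x_i$ (we have no direct hypothesis on the first term), I would apply Cauchy--Schwarz along an arbitrary unit direction $\v$:
\[
\Big|\frac{1}{n}\sum_{i\in \hat{S}\cap E}\v^\top \x_i\Big|\le \sqrt{\tfrac{|\hat{S}\cap E|}{n}}\cdot\sqrt{\tfrac{1}{n}\sum_{i\in \hat{S}\cap E}(\v^\top\x_i)^2}\le \sqrt{\eps}\cdot O(\sqrt{\eps\log(1/\eps)}),
\]
which gives $\|\tfrac{1}{n}\sum_{i\in \hat{S}\cap E}\x_i\|=O(\eps\sqrt{\log(1/\eps)})$.

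The entire argument is elementary once Proposition~\ref{prop:resilience} is in hand, so there is no real obstacle; the only care needed is to track the $1/n$ versus $1/|\hat{S}\cap T|$ normalization (to confirm the $O(\eps)$ slack it costs is absorbed) and to bound the mean on $\hat{S}\cap E$ via Cauchy--Schwarz against the covariance bound rather than by splitting $\hat{S}$, which would lose constants and require unavailable hypotheses on the mean of $\hat{S}$.
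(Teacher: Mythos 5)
Your proof is correct and follows essentially the same route as the paper, which disposes of the corollary in two sentences: resilience of the sub-Gaussian sample (at scale $2\eps$, since $|\hat{S}\cap T|\ge (1-2\eps)n$) handles the $\hat{S}\cap T$ bounds, and the $\hat{S}\cap E$ bounds follow from the hypothesized covariance bound on $\hat{S}$ together with the $\hat{S}\cap T$ bounds. Your Cauchy--Schwarz step for the mean over $\hat{S}\cap E$ is exactly the right way to instantiate the paper's terse remark (there is indeed no hypothesis on the mean over $\hat{S}$, so the naive split would not work), and your normalization bookkeeping between $1/n$ and $1/|\hat{S}\cap T|$ fills in a detail the paper glosses over.
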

\begin{proof}
The statement about $T$ follows directly from resilience of sub-Gaussian sample. The statement about $E$ follows from the covariance bound and the condition of $T$.
\end{proof}

\begin{definition}[Approximate stationary point]\label{def:approx-sp}
Given a set of datapoints $\{\x_i, y_i\}_{i=1}^n$, probability density function $f(\cdot|\cdot)$, and $\beta^*\in \reals^d$, $\hat\beta\in \reals^d$. Let $\hat{S} = \argmin_{S:|S|=(1-\eps)n}\sum_{i\in S}-\log f(y_i|\langle{\hat\beta}, \x_i\rangle)$. We call $\hat\beta$ a $\eta$-stationary point if 
$$
\left(\nabla_\beta\sum_{i\in \hat{S}}-\log f(y_i|\langle \hat\beta, \x_i\rangle)\right)^\top \frac{(\hat\beta - \beta^*)}{\|\hat\beta - \beta^*\|} \le \eta
$$
\end{definition}

\begin{fact}[$k$-th moment bound of Poisson~\cite{ahle2022sharp}]\label{fact:k-moment-poisson}

Assuming that $y\sim \text{Poi}(\lambda)$, then $\E[y^k] = \max(k^k,\lambda^k)$ and $\E[(y-\lambda)^k] = \max(k^k,\lambda^k)$

\end{fact}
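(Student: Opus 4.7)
The plan is to derive both bounds via the layer-cake identity $\E[y^k] = k\int_0^\infty t^{k-1}\Pr(y\ge t)\,dt$ combined with the Chernoff tail $\Pr(y\ge t)\le (e\lambda/t)^t$ for $t>\lambda$, which follows from optimizing the Poisson MGF $\E[e^{sy}]=\exp(\lambda(e^s-1))$. Throughout, set $M:=\max(k,\lambda)$.

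For the upper bound I would split the tail integral at $2M$. On $[0,2M]$, using $\Pr(y\ge t)\le 1$ yields a contribution of at most $(2M)^k\le 2^k\max(k^k,\lambda^k)$. On $[2M,\infty)$ the constraint $t\ge 2\lambda$ gives $(e\lambda/t)^t\le (e/2)^t$, so the integrand $t^{k-1}(e/2)^t$ has log-derivative $(k-1)/t-\log(2/e)$, which is negative once $t>(k-1)/\log(2/e)$; hence on $[2M,\infty)\subset[2k,\infty)$ the integrand is monotonically decreasing from its boundary value $O(M^{k-1}(e/2)^{2M})$, and a geometric-tail estimate bounds the full outer integral by $O(M^k)$ after absorbing the exponentially small factor. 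Collecting gives $\E[y^k]\le C^k\max(k^k,\lambda^k)$ for an absolute constant $C$.

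For the lower bound, Jensen's inequality immediately yields $\E[y^k]\ge(\E y)^k=\lambda^k$ in the regime $\lambda\ge k$. In the harder regime $\lambda<k$ I would use the Touchard identity $\E[y^k]=\sum_{j=0}^k S(k,j)\lambda^j$, where $S(k,j)$ are Stirling numbers of the second kind, locating the dominant index $j^\star$ (approximately $k/W(k/\lambda)$ with $W$ the Lambert function) and lower-bounding that single term. The centered moment $\E[(y-\lambda)^k]$ is handled identically after replacing the Chernoff estimate with the Bernstein-type bound $\Pr(|y-\lambda|\ge t)\le 2\exp(-c\min(t^2/\lambda,t))$ derived from the centered MGF $\exp(\lambda(e^s-1-s))$.

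The main obstacle is the tightness of the lower bound when $\lambda\ll k$: the naive single-point estimate $\E[y^k]\ge k^k\Pr(y=k)=k^k e^{-\lambda}\lambda^k/k!$ is exponentially too small once $\lambda$ is of order $1$, so one must actually sum over the narrow window of indices around $j^\star$ to recover the full $k^k$ scaling. This combinatorial accounting is precisely the content of the sharp bounds of~\cite{ahle2022sharp}, which I would invoke as a black box rather than reprove. Notably, Lemma~\ref{lemma:poisson-resilience} only uses the upper direction of the fact (inside a Cauchy–Schwarz step against the sub-Gaussian moment of $\v^\top\x$), so the delicate lower bound is not strictly required for any downstream application in this paper.
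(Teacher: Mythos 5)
First, a point of comparison: the paper never proves this statement at all --- it is imported as a black box with a citation to \cite{ahle2022sharp} --- so there is no internal proof to match. Your layer-cake-plus-Chernoff derivation of the upper bound is correct up to a $C^k$ multiplicative loss, and that is exactly the form in which the fact is actually consumed: its only use is in Lemma~\ref{lemma:poisson-resilience}, where the surrounding chain already carries $k^{\Theta(k)}$ and $\exp(\Theta(k^2))$ slack, so $C^k$ factors are harmless. Your closing observation that only the upper direction is needed downstream is also accurate, and your Bernstein-type adaptation for the centered moment is fine as an upper bound.

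The genuine gap is your lower-bound program, which targets a claim that is false as a two-sided statement; the ``$=$'' in the Fact can only be read as ``$\le$ up to $C^k$'' (an imprecision in the paper's phrasing, not something a proof can repair). Concretely: for $y\sim\Poi(1)$, Touchard's identity gives $\E[y^k]=B_k$, the $k$-th Bell number, and $B_k=(k/\log k)^{k(1+o(1))}$, which falls short of $\max(k^k,\lambda^k)=k^k$ by a factor of order $(\log k)^{k}$ --- not absorbable into any $C^k$; for $\lambda\to 0$ one even has $\E[y^k]\approx\lambda\to 0$ while $k^k$ stays put. The centered claim fails in the opposite regime as well: $k=2$ with large $\lambda$ gives $\E[(y-\lambda)^2]=\lambda\ll\max(4,\lambda^2)$. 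So summing over a window of indices near $j^\star$ in the Touchard expansion cannot ``recover the full $k^k$ scaling'' --- that scaling is simply absent when $\lambda\lesssim k$ --- and you also cannot outsource the lower bound to \cite{ahle2022sharp} as a black box: that paper proves \emph{upper} bounds, and its sharp estimate $\E[y^k]\le(k/\log(1+k/\lambda))^k$ is precisely what replaces $k^k$ at $\lambda=1$, matching the Bell-number asymptotics. Dropping the lower-bound section entirely leaves you with a complete and correct treatment of everything the paper needs from this fact.
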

\begin{fact}\label{fact:log-factorial}
For all $n>0$,
$n\log n - n + 1\le \log n!\le (n+1)\log n - n + 1$
\end{fact}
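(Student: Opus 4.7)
The plan is to establish the two inequalities by comparing the sum $\log n! = \sum_{k=1}^n \log k$ with the antiderivative $\int \log x\, dx = x\log x - x$, which is exactly what produces the expression $n\log n - n + 1$ appearing on both sides of the bound. The two directions will require slightly different arguments, since a direct integral comparison turns out to be tight only on the lower side.

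For the lower bound, I would exploit monotonicity of $\log$. Since $\log$ is increasing on $(0,\infty)$, we have $\log k \ge \log x$ for every $x \in [k-1, k]$, and hence $\log k \ge \int_{k-1}^{k} \log x\, dx$ whenever $k \ge 2$. Using $\log 1 = 0$ and summing from $k = 2$ to $k = n$, the left side telescopes to $\log n!$ and the right side telescopes to $\int_1^n \log x\, dx = n\log n - n + 1$, giving the claimed lower bound exactly.

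For the upper bound, the symmetric move $\log k \le \int_k^{k+1}\log x\, dx$ only yields $\log n! \le (n+1)\log(n+1) - n$, which is strictly weaker than the desired $(n+1)\log n - n + 1$, so this route fails. I would instead proceed by induction on $n$. The base case $n = 1$ is immediate since both sides equal $0$. For the inductive step, combining $\log(n+1)! = \log n! + \log(n+1)$ with the inductive hypothesis reduces the target inequality $\log(n+1)! \le (n+2)\log(n+1) - n$, after cancellation, to $(n+1)\log(1 + 1/n) \ge 1$, i.e., $(1 + 1/n)^{n+1} \ge e$. This last fact is classical: the sequence $(1 + 1/n)^{n+1}$ decreases monotonically to $e$ from above, so the inequality holds for every positive integer $n$.

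The one mild obstacle is recognizing that the naive integral comparison is too lossy on the upper side, so one must switch to induction (or find a slightly sharper integral-based argument); once that is noticed, the only nontrivial ingredient is the standard bound $(1+1/n)^{n+1} \ge e$, which is elementary.
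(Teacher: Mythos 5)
Your proof is correct. The paper states this fact in its auxiliary section without any proof, so there is no argument of the authors' to compare against; your write-up supplies a complete and valid derivation. Both halves check out: the monotone comparison $\log k \ge \int_{k-1}^{k}\log x\,dx$ telescopes to exactly $\int_1^n \log x\,dx = n\log n - n + 1$, and your inductive step for the upper bound reduces, after cancellation, to $(n+1)\log(1+1/n)\ge 1$, i.e.\ $(1+1/n)^{n+1}\ge e$, which indeed holds for every positive integer $n$ (it also follows directly from $\log(1+t)\ge t/(1+t)$ at $t=1/n$). You were also right to flag that the shifted comparison $\log k\le \int_{k}^{k+1}\log x\,dx$ only yields $(n+1)\log(n+1)-n$, which exceeds the target by $(n+1)\log(1+1/n)-1>0$ and so genuinely fails. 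If you prefer a proof that avoids induction, concavity of $\log$ gives the trapezoid bound $\int_{k-1}^{k}\log x\,dx \ge \tfrac{1}{2}\left(\log(k-1)+\log k\right)$, which after summing yields $\log n! \le n\log n - n + 1 + \tfrac{1}{2}\log n$; this is sharper than the stated bound since $\tfrac{1}{2}\log n \le \log n$ for $n\ge 1$. One cosmetic remark: the fact is phrased ``for all $n>0$'' but your argument (like the paper's application to Poisson counts $y_i$) treats $n$ as a positive integer, which is the intended reading.
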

\begin{fact}\label{fact:log-binom}
$$
 \log \binom{m}{y} \le - y\log(y/m) - (m-y)\log((m-y)/m) - \log \frac{y(m-y)}{m} + C_2
$$
\end{fact}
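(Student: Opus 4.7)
My plan is to prove Fact~\ref{fact:log-binom} by applying Stirling's approximation to each factorial in the identity $\log\binom{m}{y} = \log m! - \log y! - \log(m-y)!$. The key tool is the refined Stirling bound $\log n! = n\log n - n + \tfrac12\log(2\pi n) + O(1/n)$, valid uniformly for integer $n\ge 1$; this strengthens the two-sided bound of Fact~\ref{fact:log-factorial} and is standard. First I collect the dominant linear pieces $n\log n - n$ across the three factorials, which telescope to the entropy expression $m\log m - y\log y - (m-y)\log(m-y)$; a one-line rearrangement rewrites this as $-y\log(y/m) - (m-y)\log((m-y)/m)$, exactly reproducing the first two terms of the claimed right-hand side. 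Second, I collect the half-log Stirling corrections $\tfrac12\log(2\pi n)$ from the three factorials: they assemble into
\[
\tfrac12\log(2\pi m) - \tfrac12\log(2\pi y) - \tfrac12\log(2\pi(m-y)) \;=\; \tfrac12\log\frac{m}{2\pi y(m-y)} \;=\; -\tfrac12\log\frac{y(m-y)}{m} - \tfrac12\log(2\pi).
\]
The $O(1/n)$ remainders from the three applications contribute at most an absolute constant for $y, m-y \ge 1$. Combining the two steps yields the sharp asymptotic identity
\[
\log\binom{m}{y} \;=\; -y\log(y/m) - (m-y)\log((m-y)/m) - \tfrac12\log\frac{y(m-y)}{m} + O(1).
\]

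The main obstacle is bridging from this Stirling identity to the stated inequality, whose correction term has coefficient $1$ rather than $\tfrac12$ in front of $\log\frac{y(m-y)}{m}$. Concretely I need to show $-\tfrac12\log\frac{y(m-y)}{m} \le -\log\frac{y(m-y)}{m} + C_2$, equivalently $\tfrac12\log\frac{y(m-y)}{m} \le C_2$. Since $y(m-y)/m$ is bounded above by $m/4$ (attained at $y = m/2$) and can therefore grow with $m$, this absorption into a universal constant requires either permitting $C_2$ to carry a mild $O(\log m)$ dependence, or restricting to the parameter regime where the fact is actually invoked. In its sole use---the proxy-function construction inside Lemma~\ref{lemma:binomial-stationary}---the algorithm's initial trimming, together with the uniform bounds $\|\hat\beta\|\le R$ and $\x_i$ sub-Gaussian, confine both $y_i$ and $\exp(\hat\beta^\top\x_i)$ to a bounded range, so $y(m-y)/m$ stays uniformly bounded and $C_2$ is genuinely absolute. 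With that observation in hand, the absorption is a single line and Fact~\ref{fact:log-binom} follows. I note also that the cleaner variant with coefficient $\tfrac12$ in front of $\log\frac{y(m-y)}{m}$ follows from the Stirling calculation with a universal $C_2$ without any parameter restriction, and is in fact sufficient for every downstream step.
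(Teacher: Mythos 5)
The paper states Fact~\ref{fact:log-binom} without any proof, so there is no internal argument to compare against; your Stirling route is the standard way to supply the missing one, and your calculation is correct: three applications of $\log n! = n\log n - n + \tfrac12\log(2\pi n) + O(1/n)$ telescope, for $1\le y\le m-1$, to $\log\binom{m}{y} = -y\log(y/m) - (m-y)\log((m-y)/m) - \tfrac12\log\tfrac{y(m-y)}{m} + O(1)$. Your observation that the stated inequality, with coefficient $1$ on $\log\tfrac{y(m-y)}{m}$ and an absolute constant $C_2$, does \emph{not} follow --- and is in fact false for $y$ near $m/2$ as $m\to\infty$, where one needs $C_2 \ge \tfrac12\log(m/4) - O(1)$ --- is a genuine catch: as literally stated the fact requires $C_2 = \Theta(\log m)$.

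However, your proposed repair via regime restriction is wrong, and this is the one real gap in the proposal. In binomial regression the uncorrupted labels concentrate around $m\,\sigma(\beta^{*\top}\x_i)$, which is $\Theta(m)$ for typical covariates; the algorithm's initial trimming only removes the $\eps n$ largest labels and leaves labels of order $m$, so $y(m-y)/m = \Theta(m)$ throughout the regime where Lemma~\ref{lemma:binomial-stationary} invokes the fact. The ``trimming confines $y_i$ to a bounded range'' reasoning is the Poisson argument (where indeed $y_i \le \exp(\Theta(\sqrt{\log(1/\eps)}))$ after trimming) and does not transfer to the binomial case. The rescue must instead be one of your other two options, both of which do work: either allow $C_2 = O(\log m)$, which is harmless downstream because the likelihood upper bound in Lemma~\ref{lemma:binomial-stationary} already carries an $\eps\log(m/\eps)$ term that absorbs an extra $O(\log m)$ per corrupted point; or use the coefficient-$\tfrac12$ version your Stirling computation actually proves with a genuinely absolute constant --- the corresponding proxy then has second derivative $\tfrac1y + \tfrac{1}{m-y} - \tfrac{1}{2y^2} - \tfrac{1}{2(m-y)^2} \ge \tfrac{1}{2y} + \tfrac{1}{2(m-y)} \ge \tfrac{2}{m}$ for $1\le y\le m-1$, so the strong-convexity step of the lemma survives with only constant-factor changes. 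With either fix your proof is complete; the bounded-range justification should simply be deleted. (You should also note explicitly that the statement presumes $1\le y\le m-1$, since the right-hand side is undefined at $y\in\{0,m\}$.)
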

\end{document}